\newcommand{\bmu}{\text{\boldmath{$\mu$}}}
\newcommand{\bB}{\text{\boldmath{$B$}}}
\newcommand{\btheta}{\text{\boldmath{$\theta$}}}
\newcommand{\bz}{\boldsymbol{z}}
\newcommand{\bx}{\boldsymbol{x}}
\newcommand{\bF}{\boldsymbol{F}}
\newcommand{\bxi}{\boldsymbol{\xi}}
\newcommand{\bu}{\boldsymbol{u}}
\newcommand{\bI}{\boldsymbol{I}}
\newcommand{\bA}{\boldsymbol{A}}
\newcommand{\bbR}{\mathbb{R}}
\newcommand{\diag}{\mathsf{diag}}
\DeclareMathOperator*{\argmin}{arg\,min}
\newtheorem{assumption}{\textbf{Assumption}}\newtheorem{definition}{\textbf{Definition}}\newtheorem{lemma}{\textbf{Lemma}}\newtheorem{theorem}{\textbf{Theorem}}\newtheorem{proposition}{\textbf{Proposition}}\newtheorem{remark}{\textbf{Remark}}\newtheorem{example}{\textbf{Example}}
\newcommand{\mE}{\mathbb{E}}
\newcommand{\cE}{\mathcal{E}}
\newcommand{\cX}{\mathcal{X}}
\newcommand{\cY}{\mathcal{Y}}
\newcommand{\cZ}{\mathcal{Z}}
\newcommand{\cL}{\mathcal{L}}
\newcommand{\cS}{\mathcal{S}}
\newcommand{\cN}{\mathcal{N}}
\newcommand{\cP}{\mathcal{P}}
\newcommand{\cF}{\mathcal{F}}
\newcommand{\cO}{\mathcal{O}}
\newcommand{\ty}{\tilde{y}}
\title{Breaking Correlation Shift via Conditional Invariant Regularizer}
\author{Mingyang Yi$^{1,2,3}$, Ruoyu Wang$^{1,2}$, Jiacheng Sun$^{3}$, Zhenguo Li$^{3}$, Zhi-Ming Ma$^{1,2}$\\
	$^{1}$University of Chinese Academy of Sciences\\
	\texttt{\{yimingyang17,wangruoyu17\}@mails.ucas.edu.cn} \\
	$^{2}$Academy of Mathematics and Systems Science, Chinese Academy of Sciences\\
	\texttt{mazm@amt.ac.cn}\\
	$^{3}$Huawei Noah’s Ark Lab\\
	\texttt{\{sunjiacheng1,li.zhenguo\}@huawei.com}
}
\begin{document}
	\date{}
	\maketitle	
	
	\begin{abstract}
		Recently, generalization on out-of-distribution (OOD) data with correlation shift has attracted great attentions. The correlation shift is caused by the spurious attributes that correlate to the class label, as the correlation between them may vary in training and test data. For such a problem, we show that given the class label, the models that are conditionally independent of spurious attributes are OOD generalizable. Based on this, a metric Conditional Spurious Variation (CSV) which controls the OOD generalization error, is proposed to measure such conditional independence. To improve the OOD generalization, we regularize the training process with the proposed CSV. Under mild assumptions, our training objective can be formulated as a nonconvex-concave mini-max problem. An algorithm with a provable convergence rate is proposed to solve the problem. Extensive empirical results verify our algorithm's efficacy in improving OOD generalization.  
	\end{abstract}
	\section{Introduction}
	The success of standard learning algorithms rely heavily on the identically distributed assumption of training and test data. However, in real-world, such assumption is often violated due to the varying circumstances, selection bias, and other reasons \citep{meinshausen2015maximin}. Thus, learning a model that generalizes on out-of-distribution (OOD) data has attracted great attentions. The OOD data \citep{ye2021ood} can be categorized into data with \emph{diversity shift} or \emph{correlation shift}. Roughly speaking, there is a mismatch of the spectrum and a spurious correlation between training and test distributions under the two shifts, respectively. Compared with diversity shift, correlation shift is less explored \citep{ye2021ood}, while the misleading spurious correlation works for training data may deteriorate model's performance on test data \citep{beery2018recognition}. 
	\par
	The correlation shift says, for the spurious attributes in data, there exists variation of (spurious) correlation between class label and such spurious attributes from training to test data (Figure \ref{fig:waterbirds}). Based on a theoretical characterization of it, we show that given the class label, the model which is conditionally independent of spurious attributes has stable performance across training and OOD test data. Then, a metric \emph{Conditional Spurious Variation} (CSV, Definition \ref{def:csv}) is proposed to measure such conditional independence. Notably, in contrast to the existing metrics related to OOD generalization \citep{hu2020domain,mahajan2021domain}, our CSV can control the OOD generalization error. 
	\par
	\begin{figure}[t]\centering
		\vspace{-0.5in}
		\includegraphics[width=1\textwidth]{./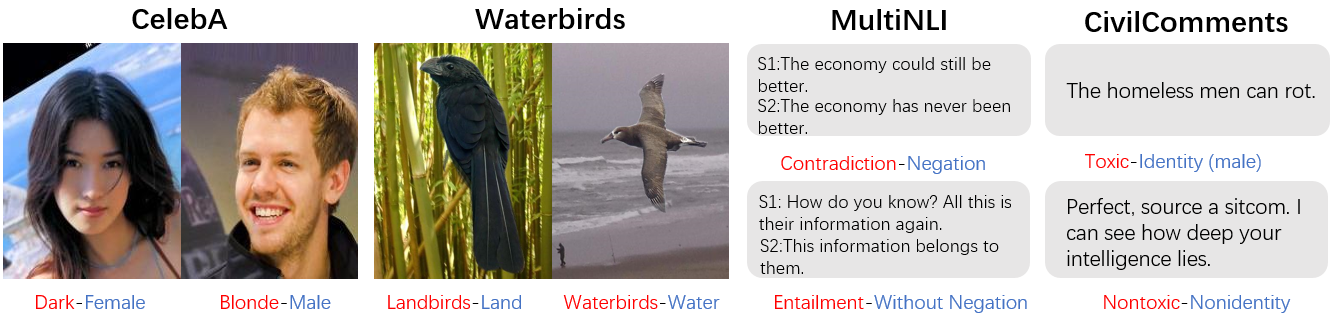}
		\vspace{-0.22in}
		\caption{Examples of \texttt{CelebA} \citep{liu2015deep}, \texttt{Waterbirds} \citep{sagawa2019distributionally}, \texttt{MultiNLI} \citep{williams2018broad}, and \texttt{CivilComments} \citep{borkan2019nuanced} involved in this paper. The class labels and spurious attributes are respectively colored with red and blue. Their correlation may vary from training set to test set. More details are shown in Section \ref{sec:experiments}.}
		\label{fig:waterbirds}
		\vspace{-0.3in}
	\end{figure} 
	\par
	To improve OOD generalization, we regularize the training process with estimated CSV. With observable spurious attributes, we propose an estimator to CSV. However, such observable condition may be violated. In this case, we propose another estimator, which approximates a sharp upper bound of CSV. We regularize the training process with one of them, depending on whether the spurious attributes are observable. Our method improves the observable condition in \citep{sagawa2019distributionally}.
	\par
	Under mild smoothness assumptions, the regularized training objective can be formulated as a specific non-convex concave minimax problem. A stochastic gradient descent based algorithm with a provable convergence rate of order $\cO(T^{-2/5})$ is proposed to solve it, where $T$ is the number of iterations. 
	\par
	Finally, extensive experiments are conducted to empirically verify the effectiveness of our methods on the OOD data with spurious correlation. Concretely, we conduct experiments on benchmark classification datasets \texttt{CelebA} \citep{liu2015deep}, \texttt{Waterbirds} \citep{sagawa2019distributionally}, \texttt{MultiNLI} \citep{williams2018broad}, and \texttt{CivilComments} \citep{borkan2019nuanced}. Empirical results show that our algorithm consistently improves the model's generalization on OOD data with correlation shifts. 
	
	\section{Related Works and Preliminaries}
	\subsection{Related Works} 
	\paragraph{OOD Generalization.} The appearance of OOD data \citep{hendrycks2018benchmarking} has been widely observed in machine learning community \citep{recht2019imagenet,schneider2020improving,salman2020unadversarial,tu2020empirical,lohn2020estimating}. To tackle this, researchers have proposed various algorithms from different perspectives, e.g., distributional robust optimization \citep{sinha2018certifying,volpi2018generalizing,sagawa2019distributionally,yi2021improved,levy2020large} or causal inference \citep{arjovsky2019invariant,he2021towards,liu2021heterogeneous,mahajan2021domain,wang2022out,ye2021towards}. \cite{ye2021ood} points out that the OOD data can be categorized into data with \emph{diversity shift} (e.g., \texttt{PACS} \citep{li2018deep}) and \emph{correlation shift} (e.g., \texttt{Waterbirds} \citep{sagawa2019distributionally}), and we focus on the latter in this paper, as we have clarified that it deteriorates the performance of the model on OOD test data \citep{geirhos2018imagenet,beery2018recognition,xie2020in,wald2021calibration}.  
	
	\paragraph{Domain Generalization.} To goal of domain generalization is extrapolating model to test data from unseen domains to capture OOD generalization. The problem we explored can be treated by domain generalization methods as data with different spurious attributes can be regarded as from different domains. The core idea in domain generalization is to learn a domain-invariant model. To this end, \cite{arjovsky2019invariant,hu2020domain,li2018deep,mahajan2021domain,heinze2021conditional,krueger2021out,wald2021calibration,seo2022information} propose plenty of invariant metrics as training regularizer. However, unlike our CSV, none of these metrics controls the OOD generalization error. Moreover, none of these methods capture the invariance corresponds to the correlation shift we discussed (see Section \ref{sec:guaranteed OOD generalization error}). This motivates us to reconsider the effectiveness of these methods. Finally, in contrast to ours, these methods require observable domain labels, and it is usually impractical. The techniques in \citep{liu2021heterogeneous,arpit2019predicting,sohoni2020no,creager2021environment} are also applicable without domain information, but they are built on strong assumptions (mixture Gaussian data \citep{liu2021heterogeneous} and linear model \citep{arpit2019predicting}) or require a high-quality spurious attribute classifier \citep{sohoni2020no,creager2021environment}.
	
	\paragraph{Distributional Robustness.} The distributional robustness \citep{ben2013robust} based methods minimize the worst-case loss over different groups of data \citep{sagawa2019distributionally,liu2021just,zhou2022model}. The groups are decided via certain rules, e.g., data with same spurious attributes \citep{sagawa2019distributionally} or annotated via validation sets with observable spurious attributes \citep{liu2021just,zhou2022model}. However, \cite{sagawa2019distributionally} finds that directly minimizing the worst-group loss results in unstable training processes. In contrast, our method has stable training process as it balances the objectives of accuracy and robustness over spurious attributes (see Section \ref{sec:regularize training with cv}).
	
	
	
	\subsection{Problem Setup}\label{sec:notions}
	We collect the notations in this paper. $\|\cdot\|$ is the $\ell_{2}$-norm of vectors. $\cO(\cdot)$ is the order of a number. The sample $(X, Y)\in\cX\times\cY$, where $X$ and $Y$ are respectively input data and its label. The integer set from $1$ to $K$ is $[K]$. The cardinal of a set $A$ is $|A|$. The loss function is $\cL(\cdot, \cdot):\bbR^{K}\times\cY\rightarrow\bbR^{+}$ with $0 \leq \cL(\cdot, \cdot) \leq M$ for positive $K, M$. For any distribution $P$, let $R_{pop}(P, f) = \mE_{P}[\cL(f(X), Y)]$ and $R_{emp}(P, f) = n^{-1}\sum_{i=1}^{n}\cL(f(\bx_{i}), y_{i})$ respectively be the population risk under $P$ and its empirical counterpart. Here $\{(\bx_{i}, y_{i})\}_{i=1}^{n}$ are $n$ i.i.d. samples from distribution $P$, and $f(\cdot): \cX\rightarrow\bbR^{K}$ is the model potentially with parameter space $\Theta\subset\bbR^{d}$ ($f(\cdot)$ becomes $f_{\btheta}(\cdot)$). For random variables $V_{1}, V_{2}$ with joint distribution $P_{V_{1}, V_{2}}$, $P_{V_{1}}$ and $P_{V_{2}\mid V_{1}}$ are the marginal distribution of $V_{1}$ and the conditional distribution of $V_{2}$ under $V_{1}$. $P_{V_{1}}(v_{1})$ and $P_{V_{2}\mid V_{1}}(v_{1}, v_{2})$ are their probability measures.
	\par
	Suppose the training and OOD test data are respectively from distributions $P_{X, Y, Z}$ and $Q_{X, Y, Z}$. We may neglect the subscript if there is no obfuscation. There usually exists similarities between $P$ and $Q$ that guarantee the possibility of OOD generalization \citep{kpotufe2021marginal}.
	The similarity we explored is that there only exists a correlation shift in the OOD test data formulated as follows. For each input $X$, there exists spurious attributes $Z\in\cZ$ that are not causal to predict class label, but $Z$ is potentially related to class label $Y$. The $Z$ can be some features of $X$ e.g., gender of celebrity in Figure \ref{fig:waterbirds}. The correlation between $Z$ and $Y$ (i.e., spurious correlation) can vary from training to test data, and the one in training distribution may become a misleading signal for the model to make predictions on test data. For example, in the celebrity's face in Figure \ref{fig:waterbirds}, if most males in the training set have dark hair, the model may overfit such spurious correlation and mispredict the male with blond hair. Thus we should learn a model that is robust to correlation shift defined as follows.  
	\par
	\begin{definition}\label{def:correlation shift}
		Given training distribution $P$, the test distribution $Q\in \cP$ has correlation shift, where 
		\begin{equation}\label{eq:data structual}
			\small
			\begin{aligned}
				\cP = \{Q_{X, Y, Z}: Q_{Y} = P_{Y}, Q_{X\mid Y, Z} = P_{X\mid Y, Z}\}.
			\end{aligned}
		\end{equation}
	\end{definition}
	Our definition characterizes the distributions with correlation shift. The first equality in \eqref{eq:data structual} obviates the mismatching caused by label shift (i.e., $P_{Y}\neq Q_{Y}$) which is unrelated to spurious correlation. More discussions of it are in Appendix \ref{subsec: label shift}. The second equality in \eqref{eq:data structual} states the invariance of conditional distribution of data, given the class label and spurious attributes, which is reasonable as the unstable spurious correlations are decided by the joint distribution of $Y$ and $Z$. Finally, since 
	\begin{equation}
		\small
		\begin{aligned}
			Q_{X, Y}(\bx, y) & = Q_{X, Y}(\bx\mid y)Q_{Y}(y) = Q_{Y}(y)\int_{\cZ} Q_{X\mid Y, Z}(\bx\mid y,  z)dQ_{Z\mid Y}(z\mid y) \\
			& = P_{Y}(y)\int_{\cZ} P_{X\mid Y, Z}(\bx\mid y,  z)dQ_{Z\mid Y}(z\mid y),
		\end{aligned}
	\end{equation}
	the two constraints in \eqref{eq:data structual} together implies the correlation shift of $Q\in\cP$ is from the variety of conditional distributions $Q_{Z \mid Y}$, which is consistent with intuition. Our definition is different from the ones in \citep{mahajan2021domain,makar2022fairness}, as they rely on a causal directed acyclic graph and the existence of a sufficient statistic such that $Y$ only affects $X$ through it. 
	
	\section{Generalizing on OOD Data}\label{sec:Generalizing on OOD Data Under Spurious Correlation}
	In this section, we show that misleading spurious correlation can hurt the OOD generalization. Then we give a condition under which the model is OOD generalizable. 
	\subsection{Spurious Correlation Misleads Models}
	The common way to train a model is empirical risk minimization \citep[ERM][]{vapnik1999nature}, i.e., approximating the minimizer of $R_{pop}(P,f)$ which generalizes well on in-distribution samples via minimizing its empirical counterpart $R_{emp}(P, f)$. However, the following proposition shows that the minimizer of $R_{pop}(P,f)$ may not generalize well on the OOD data from the other distributions in $\cP$. 
	\begin{restatable}{proposition}{counterexample}
		\label{pro:counter example}
		There exists a population risk $R_{pop}(P, f)$ whose minimizer has nearly perfect performance on the data from $P$, while it fails to generalize to OOD data drawn from another $Q\in\cP$. 
	\end{restatable}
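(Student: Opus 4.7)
The plan is to exhibit an explicit pair of distributions $(P, Q)$ with $Q \in \cP$ and an explicit loss function so that the population minimizer over $P$ achieves near-zero risk on $P$ but near-maximum risk on $Q$. Concretely, I would work with a simple binary classification setup: let $Y \in \{-1, +1\}$ with $P_Y$ uniform, let $Z \in \{-1, +1\}$ be a binary spurious attribute, and let $X = (X_1, X_2) \in \bbR^2$ where conditionally on $(Y, Z)$ the ``core'' coordinate $X_1 = Y + \varepsilon_1$ is very noisy (large $\varepsilon_1$) and the ``spurious'' coordinate $X_2 = Z + \varepsilon_2$ is nearly noiseless. All of this specifies $P_{X \mid Y, Z}$; by \eqref{eq:data structual}, any $Q \in \cP$ is obtained by keeping $P_Y$ and $P_{X\mid Y,Z}$ fixed and changing only $Q_{Z\mid Y}$.

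Next I would fix the spurious correlation in training by setting $P_{Z \mid Y}(z \mid y) = \bbI\{z = y\}$, so that under $P$, $Z$ and $Y$ are perfectly aligned and $X_2$ is an almost perfect predictor of $Y$, while $X_1$ is so noisy that any predictor relying only on $X_1$ has risk bounded away from zero. For a standard loss (e.g.\ logistic or squared) over a function class rich enough to contain simple linear classifiers, the population minimizer $f^* = \arg\min_f R_{pop}(P, f)$ will then be the classifier $f^*(x) \approx \mathrm{sign}(x_2)$ and will achieve near-zero risk on $P$. This is the ``use the spurious attribute'' failure mode we want to demonstrate.

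Finally I would pick $Q \in \cP$ with the flipped spurious correlation $Q_{Z \mid Y}(z \mid y) = \bbI\{z = -y\}$, which is admissible because it only alters $Q_{Z\mid Y}$ while preserving $Q_Y = P_Y$ and $Q_{X\mid Y,Z} = P_{X\mid Y,Z}$. Under $Q$, for each $Y = y$ we almost surely have $X_2 \approx -y$, so $f^*(X) \approx -Y$ almost surely and $R_{pop}(Q, f^*)$ is near the maximum possible loss, while $R_{pop}(P, f^*)$ is near zero. Reporting these two numbers yields the claim.

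The construction is essentially routine once the decomposition in \eqref{eq:data structual} is used to parameterize $\cP$ by $Q_{Z\mid Y}$; the only point that requires a little care is choosing the noise levels for $X_1$ and $X_2$, together with the hypothesis class and loss, so that the population minimizer provably prefers $X_2$ over $X_1$ rather than averaging the two. I would handle this by making $X_1$ carry essentially no information (e.g.\ large Gaussian noise swamping the $Y$-signal) so that the Bayes-optimal predictor under $P$ genuinely depends only on $X_2$, which makes the failure on $Q$ immediate; this is the main (and only mild) obstacle.
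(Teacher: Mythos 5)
Your construction is essentially the paper's own proof: the paper also takes $X=(Y\bmu_{1}^{\top},Z\bmu_{2}^{\top})^{\top}+\bxi$ with Gaussian noise, a linear classifier, training correlation $\sigma_{YZ}(P)\to 1$ and test correlation $\sigma_{YZ}(Q)\to -1$, and makes the spurious block dominate (via $\|\bmu_{2}\|\gg\|\bmu_{1}\|$ rather than your asymmetric noise levels, which is an equivalent device). The one step you defer -- provably showing the population minimizer concentrates on the spurious coordinate rather than averaging -- is exactly the part the paper handles via the first-order optimality condition for the exponential loss, and your sketch of how to close it is sound.
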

	Similar results also appear in \citep{xie2020in,krueger2021out}, while they are not obtained on the minimizer of population risk. The proof of this proposition is in Appendix \ref{app:proofs in sec generalizing} which indicates that the spurious correlation in training data can become a misleading supervision signal that deteriorates the model's performance on OOD data. Hence, it is crucial to learn a model that is independent of such spurious correlation, even if it sometimes can be helpful in the training set \citep{xie2020in}.  
	\subsection{Conditional Independence Enables OOD Generalization}
	Next, we give a sufficient condition proved in Appendix \ref{app:proofs in sec generalizing} to make the model OOD generalizable. The condition is also necessary under some specific data generating structures \citep{veitch2021counterfactual}.   
	\begin{restatable}{theorem}{thminvariantmodel}\label{thm:conditional independence}
		For model $f(\cdot)$ satisfying $f(X)\perp Z\!\mid \!Y$, the conditional distribution $Y\mid f(X)$ and population risk $\mE_{Q}[\cL(f(X), Y)]$ are invariant with $(X, Y)\sim Q_{X, Y}$ such that $Q\in \cP$.  
	\end{restatable}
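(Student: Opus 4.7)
The plan is to reduce both conclusions to a single stronger statement: the joint distribution of $(f(X), Y)$ is the same under every $Q \in \cP$. Once that is established, invariance of $Q_{Y\mid f(X)}$ and of $\mE_{Q}[\cL(f(X), Y)]$ are immediate, since the former is a conditional marginal of the joint law and the latter is the expectation of a measurable function of $(f(X), Y)$.

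First I would push the assumed equality $Q_{X \mid Y, Z} = P_{X \mid Y, Z}$ forward through the deterministic map $f$ to obtain $Q_{f(X) \mid Y, Z} = P_{f(X) \mid Y, Z}$. This is a routine change-of-variable argument: for any measurable set $A$ in the range of $f$, $Q_{f(X) \mid Y, Z}(A \mid y, z) = Q_{X \mid Y, Z}(f^{-1}(A) \mid y, z) = P_{X \mid Y, Z}(f^{-1}(A) \mid y, z)$, which equals $P_{f(X) \mid Y, Z}(A \mid y, z)$.

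Next I would invoke the hypothesis $f(X) \perp Z \mid Y$ (under $P$) to rewrite $P_{f(X) \mid Y, Z} = P_{f(X) \mid Y}$. Combining with the previous step gives $Q_{f(X) \mid Y, Z}(\cdot \mid y, z) = P_{f(X) \mid Y}(\cdot \mid y)$, which in particular does not depend on $z$. Integrating both sides against $Q_{Z \mid Y}(\cdot \mid y)$ yields
\begin{equation}
Q_{f(X) \mid Y}(\cdot \mid y) \;=\; \int_{\cZ} Q_{f(X) \mid Y, Z}(\cdot \mid y, z)\, dQ_{Z \mid Y}(z \mid y) \;=\; P_{f(X) \mid Y}(\cdot \mid y).
\end{equation}
Multiplying by the marginal and using $Q_{Y} = P_{Y}$ from \eqref{eq:data structual}, I then obtain $Q_{f(X), Y} = P_{f(X), Y}$, the joint invariance that was the goal.

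Finally I would read off the two conclusions: $Q_{Y \mid f(X)}$ is determined by $Q_{f(X), Y}$ and hence invariant across $Q \in \cP$, and $\mE_{Q}[\cL(f(X), Y)]$ depends on $Q$ only through the same joint distribution, so it is invariant as well. The proof is essentially bookkeeping; the only real obstacle is the measure-theoretic care needed when pushing forward $Q_{X \mid Y, Z}$ through $f$ and when integrating the regular conditional distribution against $Q_{Z \mid Y}$ rather than $P_{Z \mid Y}$. The crux is recognizing that the conditional independence under $P$ forces $P_{f(X) \mid Y, Z}$ to be a version that is constant in $z$, so the integration kernel in the $Z$-direction becomes irrelevant, which is precisely what lets the argument survive the shift from $P_{Z \mid Y}$ to the arbitrary $Q_{Z \mid Y}$.
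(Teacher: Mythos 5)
Your proposal is correct and follows essentially the same route as the paper: both arguments hinge on the observation that $f(X)\perp Z\mid Y$ makes $Q_{f(X)\mid Y,Z}$ constant in $z$, so integrating against the shifted kernel $Q_{Z\mid Y}$ still returns $P_{f(X)\mid Y}$, and combined with $Q_{Y}=P_{Y}$ this pins down the joint law of $(f(X),Y)$. The only difference is organizational --- you establish the joint invariance once and read off both conclusions, whereas the paper proves the invariance of $Y\mid f(X)$ separately via a Bayes-rule computation before showing the joint invariance for the risk --- and your ordering is, if anything, the cleaner one.
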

	Here $f(X)\!\!\perp \! Z \! \mid \!Y$ means given $Y$, $f(X)$ is conditionally independent of $Z$. Theorem \ref{thm:conditional independence} shows our conditional independence obviates the impact of correlation shift, as the prediction error (gap between $Y$ and $f(X)$, decided by $Y\!\mid \!f(X)$) and population risk of model are invariant over test distributions $Q\in\cP$. Thus we propose to obtain a model that is conditional independent of spurious attributes.
	
	
	
	\begin{remark}
		If the spurious attributes are domain labels, the conditional independence in Theorem \ref{thm:conditional independence} becomes the ones in \citep{liu2015deep,hu2020domain,mahajan2021domain}, while they do not explore its correlation with the OOD generalization. Besides, the counterexample in \cite{mahajan2021domain} violates our conditional invariant assumption in \eqref{eq:data structual} and hence is not contrary to our theorem.
	\end{remark}
	\section{Learning OOD Generalizable Model}\label{sec:learning OOD generalizable model}
	In Theorem \ref{thm:conditional independence} we propose a independence condition to break correlation shift. In this section, a metric Conditional Spurious Variation (CSV) is proposed to quantitatively measure the independence. As our CSV can control the OOD generalization error, smaller CSV leads to improved OOD generalization. Finally, two estimators of CSV are proposed, depending on whether spurious attributes are observable. 
	
	\subsection{Guaranteed OOD Generalization Error}\label{sec:guaranteed OOD generalization error}
	Theorem \ref{thm:conditional independence} shows that the conditional independence between the model and spurious attributes guarantees the OOD generalization. We propose the following metric to measure such independence.
	\begin{definition}[Conditional Spurious Variation]\label{def:csv}
		The conditional spurious variation of model $f(\cdot)$ is
		\begin{equation}
			\small
			\begin{aligned}
				\mathrm{CSV}(f) = \mE_{Y}\left[\sup_{z_{1}, z_{2}}\left(\mE_{X}[\cL(f(X),Y)\mid Y, Z = z_{1}] - \mE_{X}[\cL(f(X), Y)\mid Y, Z = z_{2}]\right)\right].
			\end{aligned}
		\end{equation}
	\end{definition}
	\par
	As can be seen, CSV is a functional of $f(\cdot)$ which measures the intra-class conditional variation of the model over spurious attributes, given the class label $Y$. It can be computed via training distribution and is invariant across $Q\in\cP$ due to \eqref{eq:data structual}. It is worth noting that the model satisfies the conditional independence in Theorem \ref{thm:conditional independence} has zero CSV but not vice versa. \footnote{Conditional independence is a strong sufficient condition to make model OOD generalizable. However, the proof of Theorem \ref{thm:conditional independence} shows the model that is invariant with spurious correlation $Q(Z\mid Y)$ is sufficient to be OOD generalizable, while the invariance can be characterized by both zero CSV and conditional independence.} However, the following theorem proved in Appendix \ref{app:proofs in sec guaranteed} shows that $\mathrm{CSV}(f)$ is sufficient to control the OOD generalization error. 
	\begin{restatable}{theorem}{boundedgap}\label{thm:bounded gap}
		For any $Q\in \cP$, we have 
		\begin{equation}\label{eq:ood gen bound}
			\small
			\sup_{Q\in\cP}|R_{emp}(f, P) - R_{pop}(f, Q)| \leq \left|R_{emp}(f, P) - R_{pop}(f, P)\right| + \mathrm{CSV}(f)
		\end{equation}
	\end{restatable}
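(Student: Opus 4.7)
The plan is to bound the gap $|R_{emp}(f,P)-R_{pop}(f,Q)|$ via the triangle inequality, splitting it into an in-distribution empirical-to-population term and a population-distribution-shift term, and then to show the latter is at most $\mathrm{CSV}(f)$ uniformly over $Q\in\cP$. Concretely, I would first write
\[
|R_{emp}(f,P)-R_{pop}(f,Q)| \leq |R_{emp}(f,P)-R_{pop}(f,P)| + |R_{pop}(f,P)-R_{pop}(f,Q)|,
\]
which already isolates the first term on the right-hand side of \eqref{eq:ood gen bound}. It remains to show $\sup_{Q\in\cP}|R_{pop}(f,P)-R_{pop}(f,Q)|\leq \mathrm{CSV}(f)$.

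For the shift term, the key move is to decompose the expectation by conditioning on $(Y,Z)$ and exploit the two structural equalities in \eqref{eq:data structual}. Define the conditional risk $g(y,z):=\mE[\cL(f(X),Y)\mid Y=y,Z=z]$. Since $Q_{X\mid Y,Z}=P_{X\mid Y,Z}$, the function $g$ is the same under both $P$ and $Q$. Since $Q_Y=P_Y$, tower yields
\[
R_{pop}(f,P)-R_{pop}(f,Q) = \mE_{Y}\!\left[\int g(Y,z)\,dP_{Z\mid Y}(z\mid Y) - \int g(Y,z)\,dQ_{Z\mid Y}(z\mid Y)\right],
\]
where the outer expectation can be taken with respect to either $P_Y$ or $Q_Y$ since they coincide.

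The main technical step — and the one I expect to need the most care — is to bound the inner bracket pointwise in $Y$ by $\sup_{z_1,z_2}\bigl(g(Y,z_1)-g(Y,z_2)\bigr)$. The standard argument is that for any probability measures $\mu,\nu$ on $\cZ$ and any bounded function $h$,
\[
\mE_\mu[h]-\mE_\nu[h] \leq \sup_z h(z) - \inf_z h(z),
\]
which follows immediately from $\mE_\mu[h]\leq\sup h$ and $\mE_\nu[h]\geq\inf h$; the symmetric inequality gives the absolute value. Applying this with $h(z)=g(Y,z)$ conditionally on $Y$, and noting $\sup_z g(Y,z)-\inf_z g(Y,z)=\sup_{z_1,z_2}\bigl(g(Y,z_1)-g(Y,z_2)\bigr)$, yields
\[
|R_{pop}(f,P)-R_{pop}(f,Q)| \leq \mE_Y\!\left[\sup_{z_1,z_2}\bigl(\mE[\cL(f(X),Y)\mid Y,z_1]-\mE[\cL(f(X),Y)\mid Y,z_2]\bigr)\right] = \mathrm{CSV}(f).
\]

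Combining this with the triangle inequality above gives the stated bound, and since the right-hand side no longer depends on $Q$, taking the supremum over $Q\in\cP$ is trivial. The only subtlety worth flagging is measurability of the supremum inside the expectation defining $\mathrm{CSV}(f)$; I would briefly note that one may restrict $(z_1,z_2)$ to the support of $P_Z$ (which equals that of $Q_Z$ under \eqref{eq:data structual}) or invoke a separability assumption on $\cZ$, both of which are standard in this setting.
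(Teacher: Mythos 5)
Your proposal is correct and follows essentially the same route as the paper's proof: a triangle inequality through $R_{pop}(f,P)$, the tower property conditioning on $(Y,Z)$ combined with the invariance of $Q_{X\mid Y,Z}$ and $Q_Y$ from \eqref{eq:data structual}, and then bounding the difference of expectations over $\cZ$ by the oscillation of the conditional risk, which is exactly $\mathrm{CSV}(f)$. Your treatment is in fact slightly more careful than the paper's (the explicit $\sup$--$\inf$ argument and the measurability remark), but the substance is identical.
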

	The $|R_{emp}(f,\! P) \!-\! R_{pop}(f,\! P)|$ is in-distribution generalization error, which is well explored \citep{vershynin2018}. Thus, we upper bound the OOD generalization error via the in-distributional one and CSV. The OOD generalization error is also connected to many other metrics e.g., \citep{hu2020domain,mahajan2021domain,ben2007analysis,ben2010theory,muandet2013domain,ganin2016domain}, but none of them directly control the OOD generalization error. 
	Besides, these metrics are proposed to obtain the invariance over $Z$ as a condition, i.e., invariant $P_{f(X), Y \mid Z}$ or $P_{f(X)\mid Z}$, while the invariances can not handle correlation shift (Definition. \ref{def:correlation shift}). As, 1): invariant $P_{f(X), Y \mid Z}$ implies invariant $P_{Y\mid Z}$ which is incompatible with correlation shift, 2): invariant $P_{f(X)\mid Z}=\int_{\cY} P_{f(X)\mid Z, Y}(f(x)\mid z, y)dP_{Y\mid Z}(y\mid z)$ does not imply invariant $P_{f(X)\mid Y, Z}$ which guarantees OOD generalization.    
	\par
	As our bound \eqref{eq:ood gen bound} involves both CSV and in-distribution generalization error, it motivates us to explore whether the conditional independence is contradicted by the in-distribution generalization. The following information-theoretic bound proved in Appendix \ref{app:proofs in sec guaranteed} presents a positive answer. Let $I(V_{1}, V_{2})$ be the mutual information between variables $V_{1}, V_{2}$, we have the following result. 
	\begin{restatable}{theorem}{gengap}\label{thm:gen gap}
		Let model $f_{\btheta}(\cdot)$ parameterized by $\btheta\in \Theta\subset\bbR^{d}$, and is trained on $\cS=\{(\bx_{i}, y_{i})\}_{i=1}^{n}$ from distribution $P$, with the spurious attributes of $\bx_{i}$ is $z_{i}$. If the learned model $f_{\btheta_{\cS}}(\cdot)\perp \cS_{\bz}\mid \cS_{y}$\footnote{$\btheta_{\cS}$ is the learned parameters depends on training set $\cS$. $f_{\btheta_{\cS}}(\cdot)$ is a random element that takes values in a functional space (i.e., model space), details can be referred to \citep{shiryaev2016probability}.} 
		\begin{equation}\label{eq:gen bound}
			\small
			\cE_{\text{gen}}(f_{\btheta_{\cS}}, P) \leq \inf_{g}\sqrt{\frac{M^{2}}{4n}\left(I(\cS_{\bx - g(z)}, \cS_{y}; f_{\btheta_{\cS}}\mid \cS_{y}, \cS_{g(z)}) + I(\cS_{y}; f_{\btheta_{\cS}})\right)},
		\end{equation}
		where $\cE_{\text{gen}}(f_{\btheta_{\cS}}, P) = |\mE[R_{emp}(f_{\btheta_{\cS}}, P)] - R_{pop}(f_{\btheta_{\cS}}, P)|$, $g(\cdot)$ is any measurable function, $\cS_{\bx - g(z)} = \{\bx_{i} - g(z_{i})\}_{i=1}^{n}$, $\cS_{y} = \{y_{i}\}_{i=1}^{n}$. 
	\end{restatable}
	Our bound improves the classical result $\cE_{gen}(f_{\btheta_{\cS}}, P)\leq \sqrt{M^{2}I(\cS, \btheta_{\cS})/4n}$ without conditional independence \citep{steinke2020reasoning}, because taking $g(\cdot)$ as a constant function, then applying data processing inequality \citep{xu2017information} in r.h.s. of \eqref{eq:gen bound}, it becomes $\sqrt{M^{2}I(\cS, \btheta_{\cS})/4n}$. The bound indicates the model $f_{\btheta_{\cS}}(\cdot)$ that is conditional independent of spurious attribute (we aim to capture) is not in contradiction with in-distribution generalization. Thus, due to Theorem \ref{thm:bounded gap}, less conditional independence with spurious attributes of model improves the OOD generalization bound.
	
	\subsection{Estimating CSV with Observable Spurious Attributes}\label{sec:estimators of CSV}
	As smaller CSV enables improved OOD generalization, we propose to regularize the training process with it. Suppose we have $n$ i.i.d. $\{(\bx_{i}, y_{i})\}_{i=1}^{n}$ training samples with spurious attributes $\{z_{i}\}_{i=1}^{n}$ from $P$. Before our analysis, we need the following two mild assumptions in the sequel. 
	\begin{assumption}\label{ass:feature space}
		The class label and spurious attributes are discrete, i.e., the $\cY = [K_{y}]$ and $\cZ = [K_{z}]$ for some positive integers $K_{y}, K_{z}$. Besides that, the number of observations $A_{kz} = \{i: y_{i}=k, z_{i}=z\}$ from each pair of $(k, j)\in[K_{y}]\times [K_{z}]$ is $|A_{kz}| = n_{kz} >0$.
	\end{assumption}
	\begin{assumption}\label{ass:Lip}
		The model is parameterized by $\btheta\in\Theta\subset\bbR^{d}$. The loss function $\cL(f_{\btheta}(\bx), y)$ is Lipschitz continuous and smooth w.r.t. $\btheta$ with coefficient $L_{0}$ and $L_{1}$, i.e., for any $(\bx, y)\in\cX\times\cY$, and $\btheta_{1}, \btheta_{2}\in\Theta$,
		\begin{equation}
			\small
			\begin{aligned}
				\left|\cL(f_{\btheta_{1}}(\bx), y) - \cL(f_{\btheta_{2}}(\bx), y)\right| \leq L_{0}\|\btheta_{1} - \btheta_{2}\|;\\
				\left\|\nabla_{\btheta}\cL(f_{\btheta_{1}}(\bx), y) - \nabla_{\btheta}\cL(f_{\btheta_{2}}(\bx), y)\right\| \leq L_{1}\|\btheta_{1} - \btheta_{2}\|.
			\end{aligned}
		\end{equation}
	\end{assumption}
	In Assumption \ref{ass:feature space}, we require the spurious attributes space is finite. This is explained as $Z$ is a ``label" of spurious attributes, e.g., the gender label ``male" or ``female" in \texttt{CelebA} dataset (Figure \ref{fig:waterbirds}) when classifying hair color. Assumption \ref{ass:feature space} also requires the data with all the possible combinations of the label and spurious attributes are collected in the training set. This is a mild condition since we do not restrict the magnitude of $n_{kz}$. For example, to satisfy this, we can synthetic some of the missing data by generative models as in \citep{wang2022out,zhu2017unpaired}. 
	\par
	Let $\cL_{kz}(f_{\btheta}) = \mE[\cL(f_{\btheta}(X), k)\mid Y=k, Z=z]$, $\hat{\cL}_{kz}(f_{\btheta})$ $ = (1 / n_{kz})\sum_{i\in A_{kz}}\cL(f_{\btheta}(\bx_{i}), k)$, and $\hat{p}_{k} = n_{k} / n$ with $n_{k}=\sum_{z\in[K_{z}]}n_{kz}$. Then the following empirical counterpart of CSV 
	\begin{equation}\label{eq:empirical csv}
		\small
		\widehat{\rm{CSV}}(f_{\btheta}) = \sum\limits_{k = 1}^{K_{y}}\sup_{z_{1}, z_{2}\in[K_{z}]}\left(\hat{\cL}_{kz_{1}}(f_{\btheta}) - \hat{\cL}_{kz_{2}}(f_{\btheta})\right)\hat{p}_{k}
	\end{equation}
	is a natural estimator to CSV. The following theorem quantify its approximation error. 
	\begin{restatable}{theorem}{genbound}\label{thm:gen bound}
		Under Assumption \ref{ass:feature space} and \ref{ass:Lip}, if  $\inf_{k\in[K_{y}], z\in[K_{z}]} n_{kz} / n_{k} = \cO(1)$, then
		\begin{equation}\label{eq:gen bound order}
			\small
			\mathrm{CSV}(f_{\btheta}) \leq \widehat{\rm CSV}(f_{\btheta}) + \cO\left(\frac{\log{(1/\delta)}}{\sqrt{n}}\right) 
		\end{equation}
		holds with probability at least $1 - \delta$ for any $\btheta\in\Theta, \delta > 0$.
	\end{restatable}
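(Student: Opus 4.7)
The plan is to decompose $\mathrm{CSV}(f_{\btheta}) - \widehat{\mathrm{CSV}}(f_{\btheta})$ into a loss-estimation piece and a probability-estimation piece, each of which concentrates via Hoeffding's inequality. Writing $\Delta_{k}(f_{\btheta}) = \sup_{z_{1},z_{2}\in[K_{z}]}(\cL_{kz_{1}}(f_{\btheta})-\cL_{kz_{2}}(f_{\btheta}))$ and defining $\hat\Delta_{k}(f_{\btheta})$ analogously, a telescoping identity gives
\[
\mathrm{CSV}(f_{\btheta})-\widehat{\mathrm{CSV}}(f_{\btheta}) \;=\; \sum_{k=1}^{K_{y}}(\Delta_{k}-\hat\Delta_{k})p_{k} \;+\; \sum_{k=1}^{K_{y}}\hat\Delta_{k}(p_{k}-\hat p_{k}),
\]
where $p_{k}=P_{Y}(k)$. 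Using $|\sup A-\sup B|\leq\sup|A-B|$ bounds the first sum by $2\max_{k,z}|\cL_{kz}(f_{\btheta})-\hat\cL_{kz}(f_{\btheta})|$, and $0\leq\hat\Delta_{k}\leq M$ bounds the second by $M\sum_{k}|p_{k}-\hat p_{k}|$.

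For the probability piece, each $\hat p_{k}$ is an average of $n$ i.i.d.\ Bernoulli indicators $\mathbb{1}\{y_{i}=k\}$, so Hoeffding plus a union bound over $k\in[K_{y}]$ yields $\sum_{k}|p_{k}-\hat p_{k}|=\cO(\sqrt{\log(K_{y}/\delta)/n})$ with probability at least $1-\delta/2$. Multiplying by $M$ contributes the desired order.

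The core step is the uniform-in-$\btheta$ control of $\max_{k,z}|\cL_{kz}(f_{\btheta})-\hat\cL_{kz}(f_{\btheta})|$. For each fixed $\btheta$, the losses $\cL(f_{\btheta}(\bx_{i}),k)$ over $i\in A_{kz}$ are i.i.d.\ and bounded in $[0,M]$, so Hoeffding gives $|\cL_{kz}(f_{\btheta})-\hat\cL_{kz}(f_{\btheta})|=\cO(M\sqrt{\log(1/\delta)/n_{kz}})$. To obtain a uniform bound, I would build an $\epsilon$-cover $N_{\epsilon}\subset\Theta$ with $\log|N_{\epsilon}|=\cO(d\log(1/\epsilon))$, apply the pointwise Hoeffding bound at every $\btheta'\in N_{\epsilon}$ with a union bound over $N_{\epsilon}\times[K_{y}]\times[K_{z}]$, and extend to arbitrary $\btheta\in\Theta$ via the $L_{0}$-Lipschitz inequality $|\cL_{kz}(f_{\btheta})-\cL_{kz}(f_{\btheta'})|\leq L_{0}\|\btheta-\btheta'\|$ from Assumption \ref{ass:Lip} (the same applies to the empirical $\hat\cL_{kz}$). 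Choosing $\epsilon=1/\sqrt{n}$ makes both the Lipschitz discretization error and the $\sqrt{\log|N_{\epsilon}|/n}$ contribution at most $\cO(1/\sqrt{n})$ up to $\log n$ factors.

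Combining the two pieces and invoking the hypothesis that $n_{kz}/n_{k}$ is bounded below by a constant (together with $n_{k}/n$ bounded below, which follows since Assumption \ref{ass:feature space} keeps $K_{y}$ finite and every $n_{kz}>0$), each $1/\sqrt{n_{kz}}$ is replaced by $1/\sqrt{n}$ up to constants, delivering the stated $\cO(\log(1/\delta)/\sqrt{n})$ rate after absorbing dimensional and logarithmic factors into the order constant. The main obstacle is the uniform-in-$\btheta$ step: Hoeffding is pointwise, so Assumption \ref{ass:Lip} must be used to promote a net-based bound into one that holds simultaneously over all $\btheta\in\Theta$, and the cover radius $\epsilon$ must be tuned so that the $d\log n$ contribution from $|N_{\epsilon}|$ does not dominate the target rate; a Rademacher-complexity alternative using the Lipschitz loss class would work equally well but matches the stated order less transparently.
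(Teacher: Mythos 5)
Your overall strategy is the same as the paper's: a pointwise Hoeffding bound for each conditional mean $\cL_{kz}(f_{\btheta})$, promoted to a uniform-in-$\btheta$ statement via an $\epsilon$-net and the Lipschitz continuity from Assumption \ref{ass:Lip}, plus a separate Hoeffding bound for $|p_k-\hat p_k|$; the decomposition into $\sum_k(\Delta_k-\hat\Delta_k)p_k+\sum_k\hat\Delta_k(p_k-\hat p_k)$ and the inequality $|\Delta_k-\hat\Delta_k|\le 2\max_z|\cL_{kz}-\hat\cL_{kz}|$ match what the paper does (the paper covers $\cF(\Theta)$ in $L_\infty$ rather than $\Theta$ in $\ell_2$, but under Assumption \ref{ass:Lip} these are interchangeable, and both proofs hide the resulting dimension factor inside the $\cO$).

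There is, however, one genuine flaw in your final assembly. You bound the first sum by the \emph{unweighted} quantity $2\max_{k,z}|\cL_{kz}-\hat\cL_{kz}|=\cO\bigl(\sqrt{\log(1/\delta)/\min_{k,z}n_{kz}}\bigr)$ and then claim this is $\cO(1/\sqrt{n})$ because ``$n_k/n$ is bounded below, which follows since $K_y$ is finite and every $n_{kz}>0$.'' That inference is false: Assumption \ref{ass:feature space} only requires $n_{kz}\ge 1$, so one can have, say, $n_1=\sqrt{n}$ and $n_2=n-\sqrt{n}$, in which case $\min_k n_k/n\to 0$ and your bound degrades to $\cO(n^{-1/4})$, not $\cO(n^{-1/2})$. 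Indeed the paper explicitly notes that without the balance condition the rate is only $\cO(1/\sqrt{\min_{k,z}n_{kz}})$. The fix is to \emph{not} discard the weights: keep $\sum_k p_k|\Delta_k-\hat\Delta_k|\le\sum_k \hat p_k\cdot\cO\bigl(\sqrt{\log(1/\delta)/n_{kz}}\bigr)+\cO(1/\sqrt{n})$, use $n_{kz}\ge\alpha n_k$ to write each summand as $\cO\bigl((n_k/n)\cdot n_k^{-1/2}\bigr)=\cO(\sqrt{n_k}/n)$, and apply Cauchy--Schwarz to get $\sum_k\sqrt{n_k}/n\le\sqrt{K_y/n}$. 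This is exactly the step the paper performs, and it is where the hypothesis on $n_{kz}/n_k$ is actually consumed; with that correction your argument goes through.
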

	This theorem implies that CSV is upper bounded by $\widehat{\rm CSV}(f_{\btheta})$. As shown in the proof in Appendix \ref{app: proofs in sec:estimators of CSV}, we hide a factor related to covering number \citep{vershynin2018} of the hypothesis space in the numerator of $\cO\left(\log{(1/\delta)}/\sqrt{n}\right)$ in \eqref{eq:gen bound order}. The hidden factor is of order $\sqrt{d}$ \citep{wainwright2019}. Thus, more samples are required to estimate CSV in high-dimensional space. Besides that, if the condition $\inf_{k,z}n_{kz} / n_{k} = \cO(1)$ does not hold, the order of error is $\cO(1 / \sqrt{\min_{k, z}n_{kz}})$ (see Appendix \ref{app: proofs in sec:estimators of CSV} for details). 
	
	\subsection{Estimating CSV with Unobservable Spurious Attributes}\label{sec:estimating without spurious attributes}
	Computing the empirical CSV \eqref{eq:empirical csv} requires observable spurious attributes which may not be available in practice \citep{liu2021just}. Thus, we need to estimate CSV in the absence of spurious attributes.
	\par
	Let $P_{kz} = P_{X\mid Y = k, Z = z}$ be the conditional distributions of $X$ with $Y, Z$ given. The core difficulty of estimating CSV with unobservable spurious attributes is to estimate $\sup_{z}{\mE}_{P_{kz}}[\cL(f_{\btheta}(X), k)] - \inf_{z}{\mE}_{P_{kz}}[\cL(f_{\btheta}(X), k)]$
	via $n_{k}$ independent samples $\{(x_{i}, y_{i})\}_{i\in A_{k}}$ drawn from a mixture distribution $P_{k} = \sum_{z\in[K_{z}]}\pi_{kz}P_{kz}$. Here  $A_{k}=\bigcup_{z\in[K_{z}]}A_{kz}$ for $k\in[K_{y}]$, $P_{k} = P_{X\mid Y=k}$, $\pi_{kz} = P_{Z\mid Y}(Z = z \mid Y = k)$, and we can not specify the data in $A_{k}$ is from which of $A_{kz}$. To proceed, suppose $\pi_{kz} \geq c > 0$, which is a necessary condition for Assumption \ref{ass:feature space} to hold. We show in Appendix \ref{app:proof of sec:estimating without spurious attributes} that the quantile conditional expectation 
	\begin{equation}\label{eq:quntile estimator}
		\small
		\begin{aligned}
			\mE_{P_{k}}[\cL(f_{\btheta}(X), k)\mid \cL(f_{\btheta}(X), k)\geq q_{P_{k}}(1 - c)] - \mE_{P_{k}}[\cL(f_{\btheta}(X), k)\mid \cL(f_{\btheta}(X), k)\leq q_{P_{k}}(c)]
		\end{aligned}
	\end{equation}
	is an upper bound (which is sharp for $K \geq 3$) of $\sup_{z}{\mE}_{P_{kz}}[\cL(f_{\btheta}(X), k)] - \inf_{z}{\mE}_{P_{kz}}[\cL(f_{\btheta}(X), k)]$. Here $q_{P_{k}}(\cdot)$ is the quantile function defined as $q_{P_{k}}(s) = \inf\{p: P_{k}(\cL(f_{\btheta}(X), k) \leq p) \geq s\}$. For large $n_{k}$, we must have $\pi_{kz}\geq 1/n_{k} = c$ for each $z\in[K_{z}]$. Thus by substituting the expectation on $P_{k}$ in \eqref{eq:quntile estimator} with its empirical counterpart for $c = 1/n_{k}$, we get the the following estimator
	\begin{equation}\label{eq:estimator without spurious attribute}
		\small
		\widehat{\rm{CSV}}_{\rm U}(f_{\btheta}) \!= \!\sum_{k=1}^{K_{y}}\!\left(\max_{i\in A_{k}}\!\cL(f_{\btheta}(\bx_{i}), k) \! - \!\min_{i\in A_{k}}\!\cL(f_{\btheta}(\bx_{i}), k)\right)\hat{p}_{k}. \!
	\end{equation}
	The subscript ``$\rm U$'' means ``unobservable spurious attributes''. Besides that, the $\widehat{\rm{CSV}}_{\rm U}(f_{\btheta})$ is an upper bound to the estimator 
	$\widehat{\rm{CSV}}(f_{\btheta})$, which is another straightfoward way to obtain it.
	
	\section{Regularizing Training with CSV (RCSV)}\label{sec:regularize training with cv}
	\begin{algorithm}[t!]

		\caption{Regularize training with CSV.}
		\label{alg:sgd}
		\textbf{Input:} Training set $\{(\bx_{i}, y_{i})\}_{i=1}^{n}$, number of labels $K_{y}$ and spurious attributes $K_{z}$, 
		training steps $T$, model $f_{\btheta}(\cdot)$ parameterized by $\btheta$. Initialized $\btheta_{0}, \{\bF^{k}_{0}\}$. Positive regularization constant $\lambda$, surrogate constant $\rho$, and correction constant $\gamma$. Estimators $\hat{R}_{emp}(f_{\btheta}, P)$ to $R_{emp}(f_{\btheta}, P)$, $\hat{\bF}^{k}(\btheta)$ to $\bF^{k}(\btheta)$.
		
		\begin{algorithmic}[1]
			\FOR    {$t=0, \cdots ,T$}
			\STATE  {\textbf{Solve the maximization problem:}}
			\FOR    {$k=1, \cdots, K_{y}$}
			\STATE  {$\bF^{k}_{t + 1} = (1 - \gamma)\bF^{k}_{t} + \gamma\hat{\bF}^{k}(\btheta(t))$;}
			\STATE  {$\bu_{k}(t + 1) = \mathrm{Softmax}(\bF^{k}_{t + 1} / \rho)$.}
			\ENDFOR 
			\STATE  {\textbf{Minimization step via SGD:}}
			\STATE  {$\btheta(t + 1) = \btheta(t) -  \eta_{\btheta}\sum\limits_{k = 1}^{K_{y}}\hat{p}_{k}\nabla_{\btheta}(\hat{R}_{emp}(f_{\btheta(t)}, P) + \lambda\bu_{k}(t + 1)^{\top}\bF_{t + 1}^{k})$.}
			\ENDFOR
		\end{algorithmic}
	\end{algorithm}
	
	The previous results have claimed that the model with small CSV generalizes well on OOD data. On the other hand, Theorem \ref{thm:gen bound} and discussion in Section \ref{sec:estimating without spurious attributes} have approximated the CSV via $\widehat{\rm{CSV}}(f_{\btheta})$ and $\widehat{\rm{CSV}}_{\rm U}(f_{\btheta})$, respectively. Thus we can regularize the training process with one or the other to improve the OOD generalization, depending on whether the spurious attributes are observable. 
	\par
	It is notable that both of the regularized training objectives can be formulated as the following minimax problem for positive constants $m$ and  $\lambda$
	\begin{equation}\label{eq:minimax problem}
		\small
		\min_{\btheta\in\Theta}\sum_{k=1}^{K_{y}}\hat{p}_{k}\left(R_{emp}(f_{\btheta}, P) \!+\! \lambda \max_{\bu\in\Delta_{m}}\bu^{\top}\bF^{k}(\btheta)\right).
	\end{equation}
	Here $\Delta_{m} = \{\bu = (u_{1},\dots, u_{m})\in\bbR^{m}_{+}: \sum_{i}u_{i} = 1\}$, $\bF^{k}(\btheta)\in\bbR^{m}$, and each dimension of $\bF^{k}(\btheta)$ is Lipschitz continuous function with Lipschitz gradient. Under Assumption \ref{ass:feature space} and \ref{ass:Lip}, the training process of empirical risk minimization regularized with $\widehat{\rm{CSV}}(f_{\btheta})$ or $\widehat{\rm{CSV}}_{\rm U}(f_{\btheta})$ can be formulated as the above problem by respectively setting $\bF^{k}(\btheta)$ as the vectorization of the two following matrices. The $m$ of $\widehat{\rm{CSV}}(f_{\btheta})$ and  $\widehat{\rm{CSV}}_{\rm U}(f_{\btheta})$ are respectively $K_{z}^{2}$ and $|A_{k}|^{2}$. 
	\begin{equation}\label{eq:csv for f}
		\small
		(\hat{\cL}_{kz_{1}}(f_{\btheta}) - \hat{\cL}_{kz_{2}}(f_{\btheta}))_{z_{1}, z_{2}\in[K_{z}]}, \quad (\cL(f_{\btheta}(\bx_{i_{1}}), k) - \cL(f_{\btheta}(\bx_{i_{2}}), k))_{i_{1},i_{2} \in A_{k}}.
	\end{equation}
	\par
	Before solving \eqref{eq:minimax problem}, we clarify the difference between regularizing training with CSV and distributional robustness optimization (DRO) based methods which minimize the worst-case expected loss over data with same spurious attributes, e.g., GroupDRO \citep{sagawa2019distributionally} minimizes $\max_{k,z}\mE_{P_{kz}}[\cL(f_{\btheta}(X), k)]$. Theoretically, the OOD generalizable model has perfect in-distribution test accuracy and robustness over different spurious attributes as in \eqref{eq:ood gen bound}. Our regularized training objective split such two goals, while DRO based methods mix them into one objective. Though both objectives theoretically upper bound the loss on OOD data, we empirically observe that the two goals are in contradiction with each other (see Section \ref{sec:experiments}). We also observe that splitting the two goals (our objective) enables us easily take a balance between them, which guarantees a stable training process. In contrast, the mixed training objective can be easily dominated by one of the two goals, which results in an unstable training process.  Similar phenomena are also observed in \citep{sagawa2019distributionally}. This motivates the early stopping or large weight decay regularizer used in GroupDRO.
	\subsection{Solving the Minimax Problem}
	Let $\phi^{k}(\btheta, \bu) = R_{emp}(f_{\btheta}, P) + \lambda\bu^{\top}\bF^{k}(\btheta)$,  $\Phi^{k}(\btheta) = \max_{\bu\in\Delta_{m}}\phi^{k}(\btheta, \bu)$. Under Assumption \ref{ass:feature space} and \ref{ass:Lip}, \eqref{eq:minimax problem} is a nonconvex-concave minimax problem. As explored in \citep{lin2020gradient}, the nonconvex-strongly concave minimax problem is much easier than the nonconvex-concave one. Thus we consider the surrogate of $\phi^{k}(\btheta, \bu)$ defined as $\phi_{\rho}^{k}(\btheta, \bu) = \phi^{k}(\btheta, \bu) - \lambda\rho\sum_{j=1}^{m}\bu(j)\log{\left(m\bu(j)\right)}$ for $\lambda_{\rho} > 0$, 
	which is strongly concave w.r.t. $\bu$, and $\phi^{k}(\btheta, \bu)$ is well approximated by it for small $\rho$. Next, we consider the following nonconvex-strongly concave problem 
	\begin{equation}\label{eq:objective}
		\small
		\min_{\btheta\in\Theta}\sum_{k=1}^{K_{y}}\hat{p}_{k}\max_{\bu\in\Delta_{m}}\phi_{\rho}^{k}(\btheta, \bu) = \min_{\btheta\in\Theta}\sum_{k=1}^{K_{y}}\hat{p}_{k}\Phi_{\rho}^{k}(\btheta),
	\end{equation} 
	instead of \eqref{eq:minimax problem}, where $\Phi^{k}_{\rho}(\btheta) = \max_{\bu\in\Delta_{m}}\phi^{k}_{\rho}(\btheta, \bu)$. To solve \eqref{eq:objective}, we propose the Algorithm \ref{alg:sgd}.
	
	\par
	In Algorithm \ref{alg:sgd}, lines 3-6 solve the maximization problem in \eqref{eq:objective}, which has close-formed solution $\bu_{k}^{*}(t + 1) = \mathrm{Softmax}(\bF^{k}(\btheta(t)) / \rho)$ \citep{yi2021reweighting}, where $\mathrm{Softmax}(\cdot)$ is the softmax function \citep{epasto2020optimal}. As the estimator $\hat{\bF}^{k}(\btheta)$ may have large variance, substituting the $\bF^{k}(\btheta)$ in $\bu_{k}^{*}(t + 1)$ with it in Line 8 (the minimization step) will induce a large deviation. Thus we use the moving average correction $\bF_{t + 1}^{k}$ (Line 4) to estimate $\bF^{k}(\btheta(t))$, which guarantees our convergence result in Theorem \ref{thm:convergence rate}. The convergence rate of Algorithm \ref{alg:sgd} is evaluated via approximating first-order stationary point, which is standard in non-convex problems \citep{ghadimi2013stochastic,lin2020gradient}.
	\begin{restatable}{theorem}{convergencerate}
		\label{thm:convergence rate}
		Under Assumption \ref{ass:feature space} and \ref{ass:Lip}, if $\hat{R}_{emp}(f_{\btheta}, P)$ and $\hat{\bF}^{k}(\btheta)$ are all unbiased estimators with bounded variance, $\btheta(t)$ is updated by Algorithm \ref{alg:sgd} with $\eta_{\btheta} = \cO\left(T^{-\frac{3}{5}}\right)$ and $\gamma = T^{-\frac{2}{5}}$, then 
		\begin{equation}
			\small
			\min_{1\leq t\leq T}\mE\left[\left\|\sum\limits_{k=1}^{K_{y}}\hat{p}_{k}\nabla\Phi_{\rho}^{k}(\btheta(t))\right\|^{2}\right] \leq \cO\left(T^{-\frac{2}{5}}\right). 
		\end{equation}
		Besides that, for any $\btheta(t)$ and $\rho$, we have $|\sum_{k=1}^{K_{y}}\hat{p}_{k}(\Phi_{\rho}^{k}(\btheta(t)) - \Phi^{k}(\btheta(t)))| \leq \lambda\rho(1 / me + 2\log{m})$.
	\end{restatable}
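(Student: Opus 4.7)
The plan is to split the theorem into two claims: the $\cO(T^{-2/5})$ stationarity bound for the smoothed objective $\sum_k \hat{p}_k \Phi_\rho^k$, and the $\cO(\lambda\rho\log m)$ approximation gap between $\Phi_\rho^k$ and $\Phi^k$. I would dispatch the gap first since it is purely analytic. For any $\bu\in\Delta_m$, we have $\sum_j \bu(j)\log(m\bu(j)) = \log m - H(\bu) \in [0,\log m]$, so $\phi_\rho^k \leq \phi^k$ pointwise, and at the maximizer $\bu^\star$ of $\phi^k$ we have $\phi_\rho^k(\btheta,\bu^\star) \geq \phi^k(\btheta,\bu^\star) - \lambda\rho\log m$. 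Taking maxima gives $|\Phi_\rho^k - \Phi^k| \leq \lambda\rho\log m$; the slightly larger stated constant $1/(me) + 2\log m$ comes from using the pointwise lower bound $u\log(mu)\geq -1/(me)$ (the minimum of $u\mapsto u\log(mu)$ on $[0,1]$) to avoid assuming the maximizer lies in $\Delta_m$ strictly.

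For the convergence claim, the first step is to establish smoothness of $\Phi_\rho^k$. Because $-\lambda\rho\sum_j \bu(j)\log(m\bu(j))$ is $\lambda\rho$-strongly concave on $\Delta_m$ and $\phi^k(\btheta,\bu)=R_{emp}(f_\btheta,P)+\lambda\bu^\top\bF^k(\btheta)$ is jointly smooth by Assumption \ref{ass:Lip}, the unique maximizer $\bu^\star(\btheta)=\mathrm{Softmax}(\bF^k(\btheta)/\rho)$ is $\cO(1/\rho)$-Lipschitz in $\btheta$ (the Jacobian of $\mathrm{Softmax}(\cdot/\rho)$ has operator norm $\leq 1/\rho$). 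Danskin's theorem then yields $\nabla\Phi_\rho^k(\btheta)=\nabla_\btheta\phi_\rho^k(\btheta,\bu^\star(\btheta))$ with Lipschitz constant $L_\Phi=\cO(L_1 + L_0^2/\rho)$. The standard descent inequality for $L_\Phi$-smooth functions bounds the one-step decrease of $\sum_k\hat{p}_k\Phi_\rho^k$ in terms of $\eta_\btheta\|\sum_k\hat{p}_k\nabla\Phi_\rho^k(\btheta(t))\|^2$ minus errors coming from (i) the noise in $\hat{R}_{emp}$ and $\hat{\bF}^k$, and (ii) the use of $\bu_k(t+1)=\mathrm{Softmax}(\bF^k_{t+1}/\rho)$ instead of $\bu^\star(\btheta(t))$; by the Lipschitz property of softmax, error (ii) is $\cO(\|\bF^k_{t+1}-\bF^k(\btheta(t))\|^2/\rho^2)$.

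The core technical step is to control the moving-average error $\epsilon^k_{t+1}:=\bF^k_{t+1}-\bF^k(\btheta(t))$. Unrolling Line 4 gives
\begin{equation*}
\epsilon^k_{t+1} = (1-\gamma)\epsilon^k_t - (1-\gamma)\bigl[\bF^k(\btheta(t))-\bF^k(\btheta(t-1))\bigr] + \gamma\bigl[\hat{\bF}^k(\btheta(t)) - \bF^k(\btheta(t))\bigr],
\end{equation*}
where the last bracket is conditionally zero-mean with bounded variance. Young's inequality together with Lipschitzness of $\bF^k$ and the bound $\|\btheta(t)-\btheta(t-1)\|^2=\cO(\eta_\btheta^2)$ in expectation yields a recursion $\mE\|\epsilon^k_{t+1}\|^2\leq (1-\gamma)\mE\|\epsilon^k_t\|^2 + \cO(\eta_\btheta^2/\gamma)+\cO(\gamma^2)$, whose steady state is $\mE\|\epsilon^k_t\|^2=\cO(\gamma + \eta_\btheta^2/\gamma^2)$. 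Plugging this into the descent inequality, summing over $t\leq T$ and dividing by $T\eta_\btheta$ leaves
\begin{equation*}
\min_{1\leq t\leq T}\mE\Bigl\|\sum_k\hat{p}_k\nabla\Phi_\rho^k(\btheta(t))\Bigr\|^2 = \cO\!\left(\frac{1}{T\eta_\btheta} + L_\Phi\eta_\btheta + \frac{\gamma}{\rho^2} + \frac{\eta_\btheta^2}{\rho^2\gamma^2}\right),
\end{equation*}
and the prescribed $\eta_\btheta=\Theta(T^{-3/5})$, $\gamma=\Theta(T^{-2/5})$ balance every term at $\cO(T^{-2/5})$.

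The main obstacle is precisely this balancing. The moving average is introduced to prevent the raw variance of $\hat{\bF}^k$ from being non-linearly amplified by $1/\rho$ through the softmax, but it couples the two time scales $\eta_\btheta$ and $\gamma$: taking $\gamma=1$ (no averaging) reintroduces the variance and forces a degraded $\cO(T^{-1/2})$ rate with poor $\rho$-dependence, while $\gamma\to 0$ makes $\bF^k_{t+1}$ stale so that the $\eta_\btheta^2/\gamma^2$ drift term blows up. The schedule $\eta_\btheta\asymp T^{-3/5}$, $\gamma\asymp T^{-2/5}$ is the unique (up to constants) joint choice that matches all four terms — optimization error $1/(T\eta_\btheta)$, smoothness cost $L_\Phi\eta_\btheta$, variance $\gamma$, and drift $\eta_\btheta^2/\gamma^2$ — at the same order. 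The remaining work is mostly bookkeeping: propagating constants through $\rho$, through the joint smoothness parameters of the two explicit forms of $\bF^k$ in \eqref{eq:csv for f}, and verifying that the cross term between SGD noise and the moving-average residual does not destroy the telescoping.
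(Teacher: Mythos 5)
Your proposal is correct and follows essentially the same route as the paper's proof: smoothness of $\Phi_{\rho}^{k}$ of order $\cO(L_{1}+\lambda(L_{0}^{2}/\rho))$ via Danskin plus the $1/\rho$-Lipschitzness of the regularized argmax, a descent lemma whose error terms are the estimator variance and $\cO(\|\bF^{k}_{t+1}-\bF^{k}(\btheta(t))\|^{2}/\rho^{2})$, a geometric recursion for the moving-average error with steady state $\cO(\gamma+\eta_{\btheta}^{2}/\gamma^{2})$, and the same balancing of the four terms at $T^{-2/5}$. The only genuine difference is in the second claim: you sandwich $\Phi^{k}-\lambda\rho\log m\leq\Phi_{\rho}^{k}\leq\Phi^{k}$ directly from $0\leq\sum_{j}\bu(j)\log(m\bu(j))\leq\log m$ on $\Delta_{m}$, which is cleaner and strictly tighter than the paper's bound, obtained by combining the pointwise estimate $u\log(mu)\geq -1/(me)$ with a separate duality-gap bound $\bF^{k}(\btheta)^{\top}(\bu_{k}^{*}(\btheta)-\bu_{k}^{*}(\btheta,\rho))\leq 2\rho\log K_{z}$ quoted from the softmax literature.
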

	The theorem is proved in Appendix \ref{app: solving minimax problem}, and it says the first-order stationary point of the surrogate loss  $\Phi_{\rho}^{k}(\cdot)$ is approximated by $\btheta(t)$ in Algorithm \ref{alg:sgd}, in the order of $\cO(T^{-2/5})$ (can be improved to $\cO(T^{-1/2})$ when $\sigma^{2} = \cO(T^{-1/2})$). As the gap between $\Phi^{k}(\cdot)$ and $\Phi_{\rho}^{k}(\cdot)$ is $\cO(\rho)$, taking small $\rho$ yields small $\Phi^{k}(\btheta(T))$. The unbiased estimators in our theorem are constructed in the next section. 
	\section{Experiments}\label{sec:experiments}
	In this section, we empirically evaluate the efficacy of the proposed Algorithm \ref{alg:sgd} in terms of breaking the spurious correlation. More experiments are shown in Appendix \ref{app:experiments}. 
	
	\paragraph{Implementation.} The Algorithm \ref{alg:sgd} can be applied to the regularized training process with either $\widehat{\rm CSV}(f_{\btheta})$ (RCSV) or $\widehat{\rm CSV_{U}}(f_{\btheta})$ (RCSV$_{\rm U}$) depending on whether spurious attributes are observable. The implementations is clear after estimating $R_{emp}(f_{\btheta(t)}, P)$ and $F^{k}(\btheta)$ in \eqref{eq:csv for f}.
	\par
	For RCSV, in each step $t$, we let $\hat{R}_{emp}(f_{\btheta(t)}, P)$ be the empirical risk over a uniformly drawn batch (size $S$) of data. Then we randomly sample another batch (size $S$) of data with replacement. The probability of each data with class label $k$ and spurious attribute $z$ be sampled is $1 / (K_{y}K_{z}n_{kz})$. Then the $\hat{\cL}_{kz}(f_{\btheta(t)})$ in \eqref{eq:csv for f} is estimated as the conditional expected risk over this batch of data. 
	\par
	For RCSV$_{\rm U}$, in each step $t$, the $\hat{R}_{emp}(f_{\btheta(t)}, P)$ is as in RCSV. We also randomly sample another mini-batch (batch size $S$) of data with replacement but the probability of data with label $k$ be sampled is $1 / (K_{y}n_{k})$. We estimate $F^{k}(\btheta)$ in \eqref{eq:csv for f} via its empirical counterpart over these sampled data. 
	\par
	As can be seen, all of the resulting $\hat{R}_{emp}(f_{\btheta(t)}, P)$ and $\hat{\bF}^{k}(\btheta(t))$ are unbiased estimators with variance of order $\cO(1/S)$ as in Theorem \ref{thm:convergence rate}. Besides that, our RCSV (resp. RCSV$_{\rm U}$) can be implemented with (resp. without) observable spurious attributes. The complete implementation of RCSV and RCSV$_{\rm U}$ are shown in Appendix \ref{app:algorithms}. When estimating CSV, the data are sampled with weights that are inversely proportional to $n_{kz}$ or $n_{k}$. The sampling strategy also appears in \citep{sagawa2019distributionally,idrissi2021simple,arjovsky2019invariant} which significantly improves the OOD generalization according to our ablation study in Appendix \ref{app:experiments without reweighting}.
	\par
	\begin{table}[t]
		\vspace{-0.2in}
		\caption{Test accuracy (\%) of ResNet50 on each group of \texttt{CelebA} and \texttt{Waterbirds}.}
		\vspace{-0.1in}
		\label{tbl:celeba}
		\centering
		\scalebox{0.8}{
			{
				\begin{tabular}{l|c|*{8}{c}}
					\hline
					Dataset & Method / Group & D-F & D-M & B-F & B-M & Avg & Total & Worst & SA\\
					\hline
					\multirow{9}{*}{\texttt{CelebA}} & RCSV & 92.1 & 94.0 & 92.3 & 91.8 & \textbf{92.6} & 92.9 & \textbf{91.8} &$\surd$ \\
					& IRM  & 92.8 & 93.1 & 88.9 & 89.3 &  91.0 & 92.3 & 88.9 & $\surd$ \\
					& GroupDRO & 94.4 & 94.6 & 88.9 & 88.3 & 91.6 & \textbf{93.7} & 88.3 & $\surd$ \\
					& ERMRS$_{\rm YZ}$ & 88.8 & 94.7 & 95.8 & 85.6 & 91.2 & 91.9 & 85.6 & $\surd$ \\
					& RCSV$_{\rm U}$ & 88.3 & 97.8 & 96.1 & 76.9 & 89.8 & 93.3  &76.9  & $\times$ \\
					& Correlation	& 87.6 & 96.3 & 96.8 & 69.4 & 87.5 & 91.9 & 69.4 & $\times$ \\
					& ERMRS$_{\rm Y}$ & 91.3 & 97.5 & 91.0 & 62.2 & 85.5 & 93.3 & 62.2 & $\times$ \\
					\hline
					Dataset & Method / Group & L-L & L-W & W-L & W-W & Avg & Total & Worst & SA\\
					\hline
					\multirow{10}{*}{\texttt{Waterbirds}} & RCSV & 93.4 & 90.8 & 88.6 & 89.4 & \textbf{90.5} & \textbf{91.4} & \textbf{88.6} & $\surd$ \\
					& IRM  & 93.1 & 87.7 & 87.4 & 89.7 & 89.5 & 90.4 & 87.4 & $\surd$ \\
					& GroupDRO & 92.0 & 87.8 & 87.9 & 90.3 & 89.5 & 89.7 & 87.8 & $\surd$ \\
					& ERMRS$_{\rm YZ}$ & 93.2 & 88.5 & 87.9 & 91.3 & 90.2 & 90.6 & 87.9 & $\surd$ \\
					& RCSV$_{\rm U}$ & 98.5 & 81.2 & 81.3 & 95.5 & 89.1 & 89.5 & 81.2 & $\times$ \\
					& Correlation	& 98.8 & 74.4 & 70.9 & 93.0 & 84.3 & 85.5 & 70.9 & $\times$ \\
					& ERMRS$_{\rm Y}$ & 99.3 & 76.7 & 68.8 & 94.9 & 84.9 & 86.6 & 68.8 & $\times$ \\
					\hline
		\end{tabular}}}
	\end{table}
	\begin{table}[t]
		\vspace{-0.1in}
		\caption{Test accuracy (\%) of BERT on each group of \texttt{MultiNLI}.}
		\vspace{-0.1in}
		\label{tbl:textual}
		\centering
		\scalebox{0.8}{
			{
				\begin{tabular}{l|c|*{10}{c}}
					\hline
					Dataset & Method / Group & C-WN & C-N & E-WN & E-N & N-WN & N-N & Avg & Total & Worst & SA\\
					\hline
					\multirow{9}{*}{\texttt{MultiNLI}} & RCSV & 77.1	& 95.3	&  83.5	&  79.5	&  81.3	&  78.6	&  \textbf{82.6} &  \textbf{81.5}	&  \textbf{78.6} &$\surd$ \\
					& IRM  & 79.7 &	96.9 &	80.4 &	71.7 &	77.2 &	71.7 &	79.6 &	79.9 & 	71.7 & $\surd$ \\
					& GroupDRO & 77.4 &	93.5 &	82.5  &	79.7  &	81.5  &	77.6  &	82.0	 &  81.3  & 77.6
					& $\surd$ \\
					& ERMRS$_{\rm YZ}$ & 74.7 &	89.5  &	79.1  &	72.3  &	82.0  &	71.8  &	78.2  &	79.3  &	71.8
					& $\surd$ \\ 
					& RCSV$_{\rm U}$ & 79.3	 & 95.5  &	82.0	 & 74.3	 &  78.1 &	70.8 &	80.0 &	80.6  &	70.8
					& $\times$ \\
					& Correlation	& 76.2	& 94.7  &	76.7 &	67.9  &	75.5 &	67.9 &	76.5 &	77.0	& 67.9
					& $\times$ \\
					& ERMRS$_{\rm Y}$ & 82.7	& 95.5	& 79.4	& 72.6	& 79.7 	& 67.3	& 79.5	& 81.1	& 67.3
					& $\times$ \\
					\hline
		\end{tabular}}}
		\vspace{-0.2in}
	\end{table}
	\begin{table}[t]
		\vspace{-0.1in}
		\caption{Test accuracy (\%) of BERT on each group of \texttt{CivilComments}.}
		\vspace{-0.1in}
		\label{tbl:civil}
		\centering
		\scalebox{0.8}{
			{
				\begin{tabular}{l|c|*{8}{c}}
					\hline
					Dateset & Method / Group & N-N & N-I & T-N & T-I & Avg & Total & Worst & SA \\
					\hline
					\multirow{9}{*}{\texttt{CivilComments}} & RCSV & 93.1	& 87.7	& 82.4	& 71.7	&  \textbf{83.7} &  89.3	& \textbf{71.7}
					& $\surd$ \\
					& IRM  & 96.2	& 88.5	& 68.0	& 67.5	& 80.1	& 90.3 & 67.5 & $\surd$ \\
					& GroupDRO & 94.5	& 88.7	& 76.3	& 69.4	& 82.2	& 90.0	& 69.4
					& $\surd$ \\
					& ERMRS$_{\rm YZ}$ & 94.3	& 88.8	& 79.1	& 70.0	& 83.1	& 90.0	& 70.0 			
					& $\surd$ \\
					& RCSV$_{\rm U}$ & 96.2	& 89.5	& 72.6	& 68.7	& 81.7	& 90.9	& 68.7
					& $\times$ \\
					& Correlation	& 94.1	& 89.2	& 85.2	& 65.5	& 83.5	& 89.6	& 65.5 & $\times$ \\
					& ERMRS$_{\rm Y}$ & 98.0	& 94.4	& 61.0	& 57.2	& 77.7	& \textbf{92.3}	& 57.2 
					& $\times$ \\  
					\hline
				\end{tabular}
				\vspace{-0.3in}
		}}
	\end{table}
	
	\paragraph{Data.} We use the following benchmark datasets with correlation shift (see details in Appendix \ref{app:dataset}).
	
	\textbf{\texttt{CelebA} \citep{liu2015deep}.} An image classification task to recognize a celebrity's hair color (``dark'' or ``blond''), which is spuriously correlated with the celebrity's gender (``male'' or ``female''). The data are categorized as 4 groups via the combination of hair color and gender, e.g., ``dark-female'' (D-M). 
	\par
	\textbf{\texttt{Waterbirds} \citep{sagawa2019distributionally}.} An image classification task to recognize a bird as ``waterbird'' or ``landbird'', while the bird is spuriously correlated with background ``land'' or ``water''. The data are categorized into 4 groups, e.g., ``landbird-water'' (L-W). 
	\par
	\textbf{\texttt{MultiNLI} \citep{williams2018broad}.} Given a sentence-pair, the task aims to recognize the relationship between the two sentences, i.e., ``entailment'', ``neutrality'', ``contradiction''. The relationship is spuriously correlated with the presence of negation words. The data are categorized into 6 groups, e.g., ``entailment-without negation'' (E-W), ``contradiction-negation'' (C-N). 
	\par
	\textbf{\texttt{CivilComments} \citep{borkan2019nuanced}.} A textual classification task to check whether a sentence is toxic or not with the label spuriously correlated with whether any of 8 certain demographic identities are mentioned. The data have 4 groups, e.g., ``nontoxic-identity'' (N-I), ``toxic-nonidentity'' (T-N). 
	\paragraph{Setup.} We compare our methods RCSV and RCSV$_{\rm U}$ with four baseline methods (see Appendix \ref{app:benchmark algorithms} for details) i.e., ERM with reweighted sampling (ERMRS) \citep{idrissi2021simple}, IRM \citep{arjovsky2019invariant}, GroupDRO \citep{sagawa2019distributionally}, and Correlation \citep{arpit2019predicting}. 
	\par
	The GroupDRO and IRM use the reweighted sampling strategy as in RCSV, while the Correlation uses same one with RCSV$_{\rm U}$. As these sampling strategies improve the OOD generalization \citep{idrissi2021simple}, to make a fair comparison, we also conduct ERMRS with the two sampling strategies. The two ERMRS are respectively denoted as ERMRS$_{\rm Y}$ and ERMRS$_{\rm YZ}$. The involved 7 methods are categorized as 2 groups, i.e., conducted with observable spurious attributes (RCSV, IRM, GroupDRO, ERMRS$_{\rm YZ}$) and with unobservable spurious attributes (RCSV$_{\rm U}$, Correlation, ERMRS$_{\rm Y}$, Correlation). 
	\par
	\par
	The backbone models of image (\texttt{CelebA}, \texttt{Waterbirds}) and textual datasets (\texttt{MultiNLI}, \texttt{CivilComments}) are respectively ResNet-50 \citep{he2016deep} pre-trained on \texttt{ImageNet} \citep{deng2009imagenet} and pre-trained BERT \citep{devlin2019bert}. The hyperparameters are in Appendix \ref{app:hyperparemeters}.
	\paragraph{Main Results.} Our goal is to verify whether all these methods can break the spurious correlation in data. Thus for each dataset, we report the test accuracies on each group of it, as the groups are divided via the combination of class label and spurious attribute. We also report the averaged test accuracies over groups (``Avg''), the test accuracies on the whole test set (``Total'', which is in-distribution test accuracy expected for \texttt{Waterbirds}), and the worst test accuracies over groups (``Worst''). The results are in Table \ref{tbl:celeba}, \ref{tbl:textual}, \ref{tbl:civil}. The ``$\surd$'' and ``$\times$'' for SA (spurious attributes) respectively mean whether the method requires observable spurious attributes. The observations from these tables are as follows.
	\par
	To check the OOD generalization, a direct way is comparing the column of ``Avg'' and ``Worst'' to summarize the results in each group. As can be seen, in terms of the two criteria, the proposed RCSV (resp. RCSV$_{\rm U}$) consistently achieves better performances, compared to baseline methods with observable (resp. unobservable) spurious attributes. This verifies our methods can improve the OOD generalization. On the other hand, leveraging the observable spurious attributes benefits the OOD generalization since the methods with them consistently exhibits better performances than the ones without them. For example, the discussion in Section \ref{sec:estimating without spurious attributes} shows that the estimator of CSV in RCSV with observable spurious attributes is more accurate than the one in RCSV$_{\rm U}$. 
	\par
	There is a trade-off between the robustness of the model over spurious attributes and the test accuracies on the groups with the same class label, especially for \texttt{CelebA} and \texttt{Waterbirds}, see ``D-F'' v.s. ``D-M'' in \texttt{CelebA} for example.
	The phenomenon illustrates that some spurious correlations are captured for all methods. However, compared to the other methods, our methods have better averaged test accuracies and a smaller gap between the test accuracies over groups with the same spurious attributes. The robustness and test accuracies here respectively correspond to the goals of ``robustness'' and ``in-distribution test accuracy'' in Section \ref{sec:regularize training with cv}, the improvements support our discussion in Section \ref{sec:regularize training with cv} that splitting the goals of accuracy and robustness enables us easily take a balance between them.
	\section{Conclusion}
	In this paper, we explore the OOD generalization for data with correlation shift. After a formal characterization, we give a sufficient condition to make the model OOD generalizable. The condition is the conditional independence of the model, given the class label. Conditional Spurious Variation, which controls the OOD generalization error, is proposed to measure such independence. Based on this metric, we propose an algorithm with a provable convergence rate to regularize the training process with two estimators of CSV (i.e., RCSV and RCSV$_{\rm U}$), depending on whether the spurious attributes are observable. Finally, the experiments conducted on the datasets \texttt{CelebA}, \texttt{Waterbirds}, \texttt{MultiNLI}, \texttt{CivilComments} verify the efficacy of our methods. 

	\newpage
	\bibliography{reference}

\begin{thebibliography}{70}
\providecommand{\natexlab}[1]{#1}
\providecommand{\url}[1]{\texttt{#1}}
\expandafter\ifx\csname urlstyle\endcsname\relax
  \providecommand{\doi}[1]{doi: #1}\else
  \providecommand{\doi}{doi: \begingroup \urlstyle{rm}\Url}\fi

\bibitem[Arjovsky et~al.(2019)Arjovsky, Bottou, Gulrajani, and
  Lopez-Paz]{arjovsky2019invariant}
Martin Arjovsky, L{\'e}on Bottou, Ishaan Gulrajani, and David Lopez-Paz.
\newblock Invariant risk minimization.
\newblock Preprint arXiv:1907.02893, 2019.

\bibitem[Beery et~al.(2018)Beery, Van~Horn, and Perona]{beery2018recognition}
Sara Beery, Grant Van~Horn, and Pietro Perona.
\newblock Recognition in terra incognita.
\newblock In \emph{European Conference on Computer Vision}, 2018.

\bibitem[Ben-David et~al.(2007)Ben-David, Blitzer, Crammer, Pereira,
  et~al.]{ben2007analysis}
Shai Ben-David, John Blitzer, Koby Crammer, Fernando Pereira, et~al.
\newblock Analysis of representations for domain adaptation.
\newblock \emph{Advances in Neural Information Processing Systems}, 2007.

\bibitem[Ben-David et~al.(2010)Ben-David, Blitzer, Crammer, Kulesza, Pereira,
  and Vaughan]{ben2010theory}
Shai Ben-David, John Blitzer, Koby Crammer, Alex Kulesza, Fernando Pereira, and
  Jennifer~Wortman Vaughan.
\newblock A theory of learning from different domains.
\newblock \emph{Machine learning}, 79\penalty0 (1):\penalty0 151--175, 2010.

\bibitem[Ben-Tal et~al.(2013)Ben-Tal, Den~Hertog, De~Waegenaere, Melenberg, and
  Rennen]{ben2013robust}
Aharon Ben-Tal, Dick Den~Hertog, Anja De~Waegenaere, Bertrand Melenberg, and
  Gijs Rennen.
\newblock Robust solutions of optimization problems affected by uncertain
  probabilities.
\newblock \emph{Management Science}, 59\penalty0 (2):\penalty0 341--357, 2013.

\bibitem[Bernhard \& Rapaport(1995)Bernhard and Rapaport]{bernhard1995theorem}
Pierre Bernhard and Alain Rapaport.
\newblock On a theorem of danskin with an application to a theorem of von
  neumann-sion.
\newblock \emph{Nonlinear Analysis: Theory, Methods \& Applications},
  24\penalty0 (8):\penalty0 1163--1181, 1995.

\bibitem[Borkan et~al.(2019)Borkan, Dixon, Sorensen, Thain, and
  Vasserman]{borkan2019nuanced}
Daniel Borkan, Lucas Dixon, Jeffrey Sorensen, Nithum Thain, and Lucy Vasserman.
\newblock Nuanced metrics for measuring unintended bias with real data for text
  classification.
\newblock In \emph{World Wide Web conference}, 2019.

\bibitem[Creager et~al.(2021)Creager, Jacobsen, and
  Zemel]{creager2021environment}
Elliot Creager, J{\"o}rn-Henrik Jacobsen, and Richard Zemel.
\newblock Environment inference for invariant learning.
\newblock In \emph{International Conference on Machine Learning}, 2021.

\bibitem[Deng et~al.(2009)Deng, Dong, Socher, Li, Li, and
  Fei-Fei]{deng2009imagenet}
Jia Deng, Wei Dong, Richard Socher, Li-Jia Li, Kai Li, and Li~Fei-Fei.
\newblock Imagenet: A large-scale hierarchical image database.
\newblock In \emph{2009 IEEE Conference on Computer Vision and Pattern
  Recognition}, 2009.

\bibitem[Devansh~Arpit(2019)]{arpit2019predicting}
Richard~Socher Devansh~Arpit, Caiming~Xiong.
\newblock Predicting with high correlation features.
\newblock Preprint arXiv:1910.00164, 2019.

\bibitem[Devlin et~al.(2019)Devlin, Chang, Lee, and Toutanova]{devlin2019bert}
Jacob Devlin, Ming-Wei Chang, Kenton Lee, and Kristina Toutanova.
\newblock Bert: Pre-training of deep bidirectional transformers for language
  understanding.
\newblock In \emph{Conference of the North American Chapter of the Association
  for Computational Linguistics}, 2019.

\bibitem[Epasto et~al.(2020)Epasto, Mahdian, Mirrokni, and
  Zampetakis]{epasto2020optimal}
Alessandro Epasto, Mohammad Mahdian, Vahab Mirrokni, and Emmanouil Zampetakis.
\newblock Optimal approximation-smoothness tradeoffs for soft-max functions.
\newblock \emph{Advances in Neural Information Processing Systems}, 2020.

\bibitem[Ganin et~al.(2016)Ganin, Ustinova, Ajakan, Germain, Larochelle,
  Laviolette, Marchand, and Lempitsky]{ganin2016domain}
Yaroslav Ganin, Evgeniya Ustinova, Hana Ajakan, Pascal Germain, Hugo
  Larochelle, Fran{\c{c}}ois Laviolette, Mario Marchand, and Victor Lempitsky.
\newblock Domain-adversarial training of neural networks.
\newblock \emph{Journal of Machine Learning Research}, 17\penalty0
  (1):\penalty0 2096--2030, 2016.

\bibitem[Geirhos et~al.(2018)Geirhos, Rubisch, Michaelis, Bethge, Wichmann, and
  Brendel]{geirhos2018imagenet}
Robert Geirhos, Patricia Rubisch, Claudio Michaelis, Matthias Bethge, Felix~A
  Wichmann, and Wieland Brendel.
\newblock Imagenet-trained cnns are biased towards texture; increasing shape
  bias improves accuracy and robustness.
\newblock In \emph{International Conference on Learning Representations}, 2018.

\bibitem[Ghadimi \& Lan(2013)Ghadimi and Lan]{ghadimi2013stochastic}
Saeed Ghadimi and Guanghui Lan.
\newblock Stochastic first-and zeroth-order methods for nonconvex stochastic
  programming.
\newblock \emph{SIAM Journal on Optimization}, 23\penalty0 (4):\penalty0
  2341--2368, 2013.

\bibitem[Gulrajani \& Lopez-Paz(2020)Gulrajani and
  Lopez-Paz]{gulrajani2020search}
Ishaan Gulrajani and David Lopez-Paz.
\newblock In search of lost domain generalization.
\newblock In \emph{International Conference on Learning Representations}, 2020.

\bibitem[Gururangan et~al.(2018)Gururangan, Swayamdipta, Levy, Schwartz,
  Bowman, and Smith]{gururangan2018annotation}
Suchin Gururangan, Swabha Swayamdipta, Omer Levy, Roy Schwartz, Samuel Bowman,
  and Noah~A Smith.
\newblock Annotation artifacts in natural language inference data.
\newblock In \emph{North American Chapter of the Association for Computational
  Linguistics}, 2018.

\bibitem[He et~al.(2016)He, Zhang, Ren, and Sun]{he2016deep}
Kaiming He, Xiangyu Zhang, Shaoqing Ren, and Jian Sun.
\newblock Deep residual learning for image recognition.
\newblock In \emph{Conference on Computer Vision and Pattern Recognition},
  2016.

\bibitem[He et~al.(2021)He, Shen, and Cui]{he2021towards}
Yue He, Zheyan Shen, and Peng Cui.
\newblock Towards non-iid image classification: A dataset and baselines.
\newblock \emph{Pattern Recognition}, 110:\penalty0 107383, 2021.

\bibitem[Heinze-Deml \& Meinshausen(2021)Heinze-Deml and
  Meinshausen]{heinze2021conditional}
Christina Heinze-Deml and Nicolai Meinshausen.
\newblock Conditional variance penalties and domain shift robustness.
\newblock \emph{Machine Learning}, 110\penalty0 (2):\penalty0 303--348, 2021.

\bibitem[Hendrycks \& Dietterich(2018)Hendrycks and
  Dietterich]{hendrycks2018benchmarking}
Dan Hendrycks and Thomas Dietterich.
\newblock Benchmarking neural network robustness to common corruptions and
  perturbations.
\newblock In \emph{International Conference on Learning Representations}, 2018.

\bibitem[Hu et~al.(2020)Hu, Zhang, Chen, and Chan]{hu2020domain}
Shoubo Hu, Kun Zhang, Zhitang Chen, and Laiwan Chan.
\newblock Domain generalization via multidomain discriminant analysis.
\newblock In \emph{Uncertainty in Artificial Intelligence}, 2020.

\bibitem[Idrissi et~al.(2021)Idrissi, Arjovsky, Pezeshki, and
  Lopez-Paz]{idrissi2021simple}
Badr~Youbi Idrissi, Martin Arjovsky, Mohammad Pezeshki, and David Lopez-Paz.
\newblock Simple data balancing achieves competitive worst-group-accuracy.
\newblock Preprint, 2021.

\bibitem[Kpotufe \& Martinet(2021)Kpotufe and Martinet]{kpotufe2021marginal}
Samory Kpotufe and Guillaume Martinet.
\newblock Marginal singularity and the benefits of labels in covariate-shift.
\newblock \emph{The Annals of Statistics}, 49\penalty0 (6):\penalty0
  3299--3323, 2021.

\bibitem[Krueger et~al.(2021)Krueger, Caballero, Jacobsen, Zhang, Binas, Zhang,
  Le~Priol, and Courville]{krueger2021out}
David Krueger, Ethan Caballero, Joern-Henrik Jacobsen, Amy Zhang, Jonathan
  Binas, Dinghuai Zhang, Remi Le~Priol, and Aaron Courville.
\newblock Out-of-distribution generalization via risk extrapolation (rex).
\newblock In \emph{International Conference on Machine Learning}, 2021.

\bibitem[LeCun et~al.(1998)LeCun, Bottou, Bengio, and
  Haffner]{lecun1998gradient}
Yann LeCun, L{\'e}on Bottou, Yoshua Bengio, and Patrick Haffner.
\newblock Gradient-based learning applied to document recognition.
\newblock \emph{Proceedings of the IEEE}, 86\penalty0 (11):\penalty0
  2278--2324, 1998.

\bibitem[Levy et~al.(2020)Levy, Carmon, Duchi, and Sidford]{levy2020large}
Daniel Levy, Yair Carmon, John~C Duchi, and Aaron Sidford.
\newblock Large-scale methods for distributionally robust optimization.
\newblock \emph{Advances in Neural Information Processing Systems}, 2020.

\bibitem[Li et~al.(2017)Li, Yang, Song, and Hospedales]{li2017deeper}
Da~Li, Yongxin Yang, Yi-Zhe Song, and Timothy~M Hospedales.
\newblock Deeper, broader and artier domain generalization.
\newblock In \emph{IEEE International Conference on Computer Vision}, 2017.

\bibitem[Li et~al.(2018)Li, Tian, Gong, Liu, Liu, Zhang, and Tao]{li2018deep}
Ya~Li, Xinmei Tian, Mingming Gong, Yajing Liu, Tongliang Liu, Kun Zhang, and
  Dacheng Tao.
\newblock Deep domain generalization via conditional invariant adversarial
  networks.
\newblock In \emph{European Conference on Computer Vision}, 2018.

\bibitem[Lin et~al.(2020)Lin, Jin, and Jordan]{lin2020gradient}
Tianyi Lin, Chi Jin, and Michael Jordan.
\newblock On gradient descent ascent for nonconvex-concave minimax problems.
\newblock In \emph{International Conference on Machine Learning}, 2020.

\bibitem[Liu et~al.(2021{\natexlab{a}})Liu, Haghgoo, Chen, Raghunathan, Koh,
  Sagawa, Liang, and Finn]{liu2021just}
Evan~Z Liu, Behzad Haghgoo, Annie~S Chen, Aditi Raghunathan, Pang~Wei Koh,
  Shiori Sagawa, Percy Liang, and Chelsea Finn.
\newblock Just train twice: Improving group robustness without training group
  information.
\newblock In \emph{International Conference on Machine Learning},
  2021{\natexlab{a}}.

\bibitem[Liu et~al.(2021{\natexlab{b}})Liu, Hu, Cui, Li, and
  Shen]{liu2021heterogeneous}
Jiashuo Liu, Zheyuan Hu, Peng Cui, Bo~Li, and Zheyan Shen.
\newblock Heterogeneous risk minimization.
\newblock In \emph{International Conference on Machine Learning},
  2021{\natexlab{b}}.

\bibitem[Liu et~al.(2015)Liu, Luo, Wang, and Tang]{liu2015deep}
Ziwei Liu, Ping Luo, Xiaogang Wang, and Xiaoou Tang.
\newblock Deep learning face attributes in the wild.
\newblock In \emph{IEEE International Conference on Computer Vision}, 2015.

\bibitem[Lohn(2020)]{lohn2020estimating}
Andrew~J Lohn.
\newblock Estimating the brittleness of ai: safety integrity levels and the
  need for testing out-of-distribution performance.
\newblock Preprint arXiv:2009.00802, 2020.

\bibitem[Long et~al.(2018)Long, Cao, Wang, and Jordan]{long2018conditional}
Mingsheng Long, Zhangjie Cao, Jianmin Wang, and Michael~I Jordan.
\newblock Conditional adversarial domain adaptation.
\newblock \emph{Advances in Neural Information Processing Systems}, 2018.

\bibitem[Loshchilov \& Hutter(2018)Loshchilov and
  Hutter]{loshchilov2018decoupled}
Ilya Loshchilov and Frank Hutter.
\newblock Decoupled weight decay regularization.
\newblock In \emph{International Conference on Learning Representations}, 2018.

\bibitem[Mahajan et~al.(2021)Mahajan, Tople, and Sharma]{mahajan2021domain}
Divyat Mahajan, Shruti Tople, and Amit Sharma.
\newblock Domain generalization using causal matching.
\newblock In \emph{International Conference on Machine Learning}, 2021.

\bibitem[Makar \& D'Amour(2022)Makar and D'Amour]{makar2022fairness}
Maggie Makar and Alexander D'Amour.
\newblock Fairness and robustness in anti-causal prediction.
\newblock Preprint, 2022.

\bibitem[Meinshausen \& B{\"u}hlmann(2015)Meinshausen and
  B{\"u}hlmann]{meinshausen2015maximin}
Nicolai Meinshausen and Peter B{\"u}hlmann.
\newblock Maximin effects in inhomogeneous large-scale data.
\newblock \emph{The Annals of Statistics}, 43\penalty0 (4):\penalty0
  1801--1830, 2015.

\bibitem[Muandet et~al.(2013)Muandet, Balduzzi, and
  Sch{\"o}lkopf]{muandet2013domain}
Krikamol Muandet, David Balduzzi, and Bernhard Sch{\"o}lkopf.
\newblock Domain generalization via invariant feature representation.
\newblock In \emph{International Conference on Machine Learning}, 2013.

\bibitem[Recht et~al.(2019)Recht, Roelofs, Schmidt, and
  Shankar]{recht2019imagenet}
Benjamin Recht, Rebecca Roelofs, Ludwig Schmidt, and Vaishaal Shankar.
\newblock Do imagenet classifiers generalize to imagenet?
\newblock In \emph{International Conference on Machine Learning}, 2019.

\bibitem[Sagawa et~al.(2019)Sagawa, Koh, Hashimoto, and
  Liang]{sagawa2019distributionally}
Shiori Sagawa, Pang~Wei Koh, Tatsunori~B Hashimoto, and Percy Liang.
\newblock Distributionally robust neural networks for group shifts: On the
  importance of regularization for worst-case generalization.
\newblock 2019.

\bibitem[Salman et~al.(2020)Salman, Ilyas, Engstrom, Vemprala, Madry, and
  Kapoor]{salman2020unadversarial}
Hadi Salman, Andrew Ilyas, Logan Engstrom, Sai Vemprala, Aleksander Madry, and
  Ashish Kapoor.
\newblock Unadversarial examples: designing objects for robust vision.
\newblock Preprint arXiv:2012.12235, 2020.

\bibitem[Schneider et~al.(2020)Schneider, Rusak, Eck, Bringmann, Brendel, and
  Bethge]{schneider2020improving}
Steffen Schneider, Evgenia Rusak, Luisa Eck, Oliver Bringmann, Wieland Brendel,
  and Matthias Bethge.
\newblock Improving robustness against common corruptions by covariate shift
  adaptation.
\newblock In \emph{Advances in Neural Information Processing Systems}, 2020.

\bibitem[Seo et~al.(2022)Seo, Lee, and Han]{seo2022information}
Seonguk Seo, Joon-Young Lee, and Bohyung Han.
\newblock Information-theoretic bias reduction via causal view of spurious
  correlation.
\newblock Preprint arXiv:2201.03121, 2022.

\bibitem[Shiryaev(2016)]{shiryaev2016probability}
Albert~N Shiryaev.
\newblock \emph{Probability-1}, volume~95.
\newblock Springer, 2016.

\bibitem[Sinha et~al.(2018)Sinha, Namkoong, and Duchi]{sinha2018certifying}
Aman Sinha, Hongseok Namkoong, and John Duchi.
\newblock Certifying some distributional robustness with principled adversarial
  training.
\newblock In \emph{International Conference on Learning Representations}, 2018.

\bibitem[Sohoni et~al.(2020)Sohoni, Dunnmon, Angus, Gu, and
  R{\'e}]{sohoni2020no}
Nimit Sohoni, Jared Dunnmon, Geoffrey Angus, Albert Gu, and Christopher R{\'e}.
\newblock No subclass left behind: Fine-grained robustness in coarse-grained
  classification problems.
\newblock \emph{Advances in Neural Information Processing Systems}, 2020.

\bibitem[Soudry et~al.(2018)Soudry, Hoffer, Nacson, Gunasekar, and
  Srebro]{soudry2018implicit}
Daniel Soudry, Elad Hoffer, Mor~Shpigel Nacson, Suriya Gunasekar, and Nathan
  Srebro.
\newblock The implicit bias of gradient descent on separable data.
\newblock \emph{The Journal of Machine Learning Research}, 19\penalty0
  (1):\penalty0 2822--2878, 2018.

\bibitem[Steinke \& Zakynthinou(2020)Steinke and
  Zakynthinou]{steinke2020reasoning}
Thomas Steinke and Lydia Zakynthinou.
\newblock Reasoning about generalization via conditional mutual information.
\newblock In \emph{Conference on Learning Theory}, 2020.

\bibitem[Tu et~al.(2020)Tu, Lalwani, Gella, and He]{tu2020empirical}
Lifu Tu, Garima Lalwani, Spandana Gella, and He~He.
\newblock An empirical study on robustness to spurious correlations using
  pre-trained language models.
\newblock \emph{Transactions of the Association for Computational Linguistics},
  8:\penalty0 621--633, 2020.

\bibitem[van~der Vaart \& Wellner(2000)van~der Vaart and Wellner]{van2000weak}
Aad~W. van~der Vaart and Jon~A. Wellner.
\newblock \emph{Weak convergence and empirical processes}.
\newblock Springer series in statistics. Springer, 2000.

\bibitem[Vapnik(1999)]{vapnik1999nature}
Vladimir Vapnik.
\newblock \emph{The nature of statistical learning theory}.
\newblock Springer science \& business media, 1999.

\bibitem[Veitch et~al.(2021)Veitch, D'Amour, Yadlowsky, and
  Eisenstein]{veitch2021counterfactual}
Victor Veitch, Alexander D'Amour, Steve Yadlowsky, and Jacob Eisenstein.
\newblock Counterfactual invariance to spurious correlations: Why and how to
  pass stress tests.
\newblock In \emph{Advances in Neural Information Processing Systems}, 2021.

\bibitem[Vershynin(2018)]{vershynin2018}
Roman Vershynin.
\newblock \emph{High-dimensional probability: An introduction with applications
  in data science}.
\newblock Cambridge Series in Statistical and Probabilistic Mathematics.
  Cambridge University Press, 2018.

\bibitem[Volpi et~al.(2018)Volpi, Namkoong, Sener, Duchi, Murino, and
  Savarese]{volpi2018generalizing}
Riccardo Volpi, Hongseok Namkoong, Ozan Sener, John Duchi, Vittorio Murino, and
  Silvio Savarese.
\newblock Generalizing to unseen domains via adversarial data augmentation.
\newblock In \emph{Advances in Neural Information Processing Systems}, 2018.

\bibitem[Wah et~al.(2011)Wah, Branson, Welinder, Perona, and
  Belongie]{wah2011caltech}
Catherine Wah, Steve Branson, Peter Welinder, Pietro Perona, and Serge
  Belongie.
\newblock The caltech-ucsd birds-200-2011 dataset.
\newblock Technical report, 2011.

\bibitem[Wainwright(2019)]{wainwright2019}
Martin~J. Wainwright.
\newblock \emph{High-dimensional statistics: A non-asymptotic viewpoint}.
\newblock Cambridge Series in Statistical and Probabilistic Mathematics.
  Cambridge University Press, 2019.

\bibitem[Wald et~al.(2021)Wald, Feder, Greenfeld, and
  Shalit]{wald2021calibration}
Yoav Wald, Amir Feder, Daniel Greenfeld, and Uri Shalit.
\newblock On calibration and out-of-domain generalization.
\newblock \emph{Advances in Neural Information Processing Systems}, 2021.

\bibitem[Wang et~al.(2022)Wang, Yi, Chen, and Zhu]{wang2022out}
Ruoyu Wang, Mingyang Yi, Zhitang Chen, and Shengyu Zhu.
\newblock Out-of-distribution generalization with causal invariant
  transformations.
\newblock In \emph{Conference on Computer Vision and Pattern Recognition},
  2022.

\bibitem[Williams et~al.(2018)Williams, Nangia, and Bowman]{williams2018broad}
Adina Williams, Nikita Nangia, and Samuel Bowman.
\newblock A broad-coverage challenge corpus for sentence understanding through
  inference.
\newblock In \emph{Conference of the North American Chapter of the Association
  for Computational Linguistics}, 2018.

\bibitem[Xie et~al.(2020)Xie, Kumar, Jones, Khani, Ma, and Liang]{xie2020in}
Sang~Michael Xie, Ananya Kumar, Robbie Jones, Fereshte Khani, Tengyu Ma, and
  Percy Liang.
\newblock In-n-out: Pre-training and self-training using auxiliary information
  for out-of-distribution robustness.
\newblock In \emph{International Conference on Learning Representations}, 2020.

\bibitem[Xu \& Raginsky(2017)Xu and Raginsky]{xu2017information}
Aolin Xu and Maxim Raginsky.
\newblock Information-theoretic analysis of generalization capability of
  learning algorithms.
\newblock In \emph{Advances in Neural Information Processing Systems}, 2017.

\bibitem[Ye et~al.(2021)Ye, Xie, Cai, Li, Li, and Wang]{ye2021towards}
Haotian Ye, Chuanlong Xie, Tianle Cai, Ruichen Li, Zhenguo Li, and Liwei Wang.
\newblock Towards a theoretical framework of out-of-distribution
  generalization.
\newblock \emph{Advances in Neural Information Processing Systems}, 2021.

\bibitem[Ye et~al.(2022)Ye, Li, Hong, Bai, Chen, Zhou, and Li]{ye2021ood}
Nanyang Ye, Kaican Li, Lanqing Hong, Haoyue Bai, Yiting Chen, Fengwei Zhou, and
  Zhenguo Li.
\newblock Ood-bench: Benchmarking and understanding out-of-distribution
  generalization datasets and algorithms.
\newblock In \emph{Conference on Computer Vision and Pattern Recognition},
  2022.

\bibitem[Yi et~al.(2021{\natexlab{a}})Yi, Hou, Shang, Jiang, Liu, and
  Ma]{yi2021reweighting}
Mingyang Yi, Lu~Hou, Lifeng Shang, Xin Jiang, Qun Liu, and Zhi-Ming Ma.
\newblock Reweighting augmented samples by minimizing the maximal expected
  loss.
\newblock In \emph{International Conference on Learning Representations},
  2021{\natexlab{a}}.

\bibitem[Yi et~al.(2021{\natexlab{b}})Yi, Hou, Sun, Shang, Jiang, Liu, and
  Ma]{yi2021improved}
Mingyang Yi, Lu~Hou, Jiacheng Sun, Lifeng Shang, Xin Jiang, Qun Liu, and
  Zhi-Ming Ma.
\newblock Improved ood generalization via adversarial training and
  pre-training.
\newblock In \emph{International Conference on Machine Learning},
  2021{\natexlab{b}}.

\bibitem[Zhou et~al.(2017)Zhou, Lapedriza, Khosla, Oliva, and
  Torralba]{zhou2017places}
Bolei Zhou, Agata Lapedriza, Aditya Khosla, Aude Oliva, and Antonio Torralba.
\newblock Places: A 10 million image database for scene recognition.
\newblock \emph{IEEE Transactions on Pattern Analysis and Machine
  Intelligence}, 40\penalty0 (6):\penalty0 1452--1464, 2017.

\bibitem[Zhou et~al.(2022)Zhou, Lin, Pi, Zhang, Xu, Cui, and
  Zhang]{zhou2022model}
Xiao Zhou, Yong Lin, Renjie Pi, Weizhong Zhang, Renzhe Xu, Peng Cui, and Tong
  Zhang.
\newblock Model agnostic sample reweighting for out-of-distribution learning.
\newblock In \emph{International Conference on Machine Learning}, 2022.

\bibitem[Zhu et~al.(2017)Zhu, Park, Isola, and Efros]{zhu2017unpaired}
Jun-Yan Zhu, Taesung Park, Phillip Isola, and Alexei~A Efros.
\newblock Unpaired image-to-image translation using cycle-consistent
  adversarial networks.
\newblock In \emph{IEEE International Conference on Computer Vision}, 2017.

\end{thebibliography}
	\bibliographystyle{iclr2023_conference}
	\newpage

	\appendix
	\section{Label Shift}\label{subsec: label shift}
In the sequel, we may omit the subscribe if no obfuscation. Our discussions in the main body of this paper are built upon the assumption that marginal distribution of label $Y$ is invariant i.e., $P_{Y} = Q_{Y}$. In this section, we explore OOD generalization without such invariant assumption. Before presenting our discussion, we give the definition of total variation distance. 
\begin{definition}
	The total variation distance between two distributions $P, Q$ defined on the same measurable space $\cX$ is 
	\begin{equation}
		\small
		\mathrm{TV}(P, Q) = \frac{1}{2}\int_{\cX}\left|dP(x) - dQ(x)\right|. 
	\end{equation}
\end{definition}
In Theorem \ref{thm:conditional independence}, we show that the gap between the performances of the model on training and OOD test data disappears if the model satisfies conditional independence such that $f(X)\perp Z\mid Y$. However, we show by the following example that the gap will not disappear if the marginal distribution of $Y$ also varies across training and test data. 
\begin{example}\label{ex: label shift}\emph{
		Suppose $Y, Z\in \{-1,1\}$ and a specialized loss function 
		\begin{equation}
			\small
			\cL(f(X), Y) = 1_{\{Y = 1\}}(5-f(X)) + 1_{\{Y = -1\}}(2+f(X)), 
		\end{equation}
		where $f(\cdot)$ is any classifier whose output is in $\{-1,1\}$. Let $P$, $Q$ be two distributions such that $P_{X\mid Y,Z} = Q_{X\mid Y,Z}$ but $P_{Y}\neq Q_{Y}$. We suppose $X \perp Z\mid Y$, and thus $f(X)\perp Z\mid Y$. Thus, 
		\begin{equation}
			\small
			P_{X\mid Y}(\bx\mid y) = \sum_{z\in\cZ} P_{X, Z\mid Y}(\bx, z\mid y) = P_{X\mid Y}(\bx\mid y)\sum_{z\in\cZ} P_{Z\mid Y}(z\mid y),
		\end{equation}
		is unrelated to $P_{Z\mid Y}$. Then we have $P_{X\mid Y} = Q_{X\mid Y}$. Thus 
		\begin{equation}
			\small
			\begin{aligned}
				\mE_{P}[\cL(f(X),Y)] &=  P_{Y}(Y=1)\mE_{P}[\cL(f(X),Y)\mid Y=1] + P_{Y}(Y = -1)\mE_{P}[\cL(f(X),Y)\mid Y = -1], \\
				\mE_{Q}[\cL(f(X),Y)] &= Q_{Y}(Y=1)\mE_{Q}[\cL(f(X),Y)\mid Y=1] + Q_{Y}(Y = -1)\mE_{Q}[\cL(f(X),Y)\mid Y = -1].
			\end{aligned}
		\end{equation}
		Since $P_{X\mid Y} = Q_{X\mid Y}$,
		\begin{equation}\label{eq:tv lower bound}
			\small
			\begin{aligned}
				& |\mE_{P}[\cL(f(X),Y)] - \mE_{Q}[\cL(f(X),Y)]| \\
				& = |(P_{Y}(Y=1) - Q_{Y}(Y=1))(\mE_{P}[\cL(f(X),Y)\mid Y=1] - \mE_{Q}[\cL(f(X),Y)\mid Y = -1])|\\
				& \geq |4 - 3| \times |P(Y=1) - Q(Y=1)| \\
				& = |P_{Y}(Y=1) - Q_{Y}(Y=1)| \\
				& = \mathrm{TV}(P_{Y}, Q_{Y}),
			\end{aligned}
		\end{equation}
		where $\mathrm{TV}(P_{Y}, Q_{Y})$ is the total variation distance of the marginal distributions $P_{Y}, Q_{Y}$. This inequality holds for any $f(X)$, and hence the gap can never be removed by representation learning like what we do in Theorem \ref{thm:conditional independence}.}
\end{example}
\par
The example indicates that under shifted label distribution, the conditional independent model can not generalize on OOD data. Thus, we consider the reweighted loss to fix the bias brought by the shifted label distribution. The formal result is stated as follows. 
\begin{theorem}\label{thm: label shift}
	Let $P, Q$ be two distributions such that $P_{X\mid Y, Z} = Q_{X\mid Y, Z}$ but $P_{Y}$ does not necessary equals to $Q_{Y}$. $w(y):\cY\rightarrow\bbR^{+}$ is a weighting function satisfies $E_{P}[w(Y)] = 1$. Then if $f(X)\perp Z\mid Y$,
	\begin{equation}
		\small
		\left|\mE_{P}[w(Y)\cL(f(X),Y)] - \mE_{Q}[\cL(f(X),Y)]\right| \leq 2M \mathrm{TV}(P^{w}_{Y}, Q)
	\end{equation}
	where $P^{w}_{Y}$ is the reweighted label distribution defined as $P^{w}_{Y}(A) = \int_{A}w(y)dP_{Y}(y)$ for any measurable set $A\subset\cY$. 
\end{theorem}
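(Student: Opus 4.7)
The plan is to reduce the claim to a one–dimensional comparison between the two label distributions $P_Y^w$ and $Q_Y$, using the conditional independence hypothesis to pass to a common per-class risk function. First I would define the per-class conditional loss
\[
h(y) \;:=\; \mathbb{E}_{P}\bigl[\mathcal{L}(f(X),Y)\,\big|\, Y=y\bigr],
\]
and argue that the same function arises under $Q$. The reasoning is exactly the one used in Theorem \ref{thm:conditional independence}: by $f(X)\perp Z\mid Y$ the conditional law of $f(X)$ given $Y=y$ does not depend on $z$, so under either $P$ or $Q$ it can be written as an arbitrary mixture of $P_{f(X)\mid Y,Z}(\cdot\mid y,z) = Q_{f(X)\mid Y,Z}(\cdot\mid y,z)$, the equality of the two conditionals being immediate from $P_{X\mid Y,Z}=Q_{X\mid Y,Z}$. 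Consequently $\mathbb{E}_{Q}[\mathcal{L}(f(X),Y)\mid Y=y]=h(y)$ as well, and moreover $0\le h(y)\le M$ by the uniform boundedness of $\mathcal{L}$.

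Next I would rewrite both expectations in terms of $h$. On the $P$-side, towering on $Y$ gives
\[
\mathbb{E}_{P}\bigl[w(Y)\mathcal{L}(f(X),Y)\bigr]
= \int_{\mathcal{Y}} w(y)\,h(y)\,dP_Y(y)
= \int_{\mathcal{Y}} h(y)\,dP_Y^{w}(y),
\]
where the last equality uses the definition $dP_Y^{w}(y)=w(y)\,dP_Y(y)$; note that $P_Y^{w}$ is a bona fide probability measure because $w\ge 0$ and $\mathbb{E}_P[w(Y)]=1$. On the $Q$-side, towering on $Y$ yields $\mathbb{E}_{Q}[\mathcal{L}(f(X),Y)] = \int h(y)\,dQ_Y(y)$.

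Taking the difference and applying the variational characterization of total variation finishes the argument:
\[
\bigl|\mathbb{E}_P[w(Y)\mathcal{L}(f(X),Y)] - \mathbb{E}_Q[\mathcal{L}(f(X),Y)]\bigr|
\;=\; \Bigl|\int h(y)\,(dP_Y^{w}-dQ_Y)(y)\Bigr|
\;\le\; M \int |dP_Y^{w} - dQ_Y|
\;=\; 2M\,\mathrm{TV}(P_Y^{w},Q_Y),
\]
using $\|h\|_\infty\le M$ and the identity $\int|dP-dQ|=2\,\mathrm{TV}(P,Q)$ from the definition of total variation given in the appendix.

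The only step that needs any care is the reduction to the common function $h(y)$, i.e.\ establishing $\mathbb{E}_P[\mathcal{L}(f(X),Y)\mid Y]=\mathbb{E}_Q[\mathcal{L}(f(X),Y)\mid Y]$ almost surely; I expect to cite Theorem \ref{thm:conditional independence} (or replay its short derivation) for this. Once that is in place, the remainder is a straight change-of-measure computation plus the standard dual characterization of $\mathrm{TV}$, so no real obstacle remains.
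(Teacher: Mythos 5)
Your proposal is correct and follows essentially the same route as the paper's proof: both reduce to the common conditional-mean function $h(y)=\mE[\cL(f(X),Y)\mid Y=y]$ (equal under $P$ and $Q$ by $P_{X\mid Y,Z}=Q_{X\mid Y,Z}$ together with $f(X)\perp Z\mid Y$), rewrite the two expectations as integrals of $h$ against $P_Y^{w}$ and $Q_Y$, and bound the difference by $M\int|dP_Y^{w}-dQ_Y| = 2M\,\mathrm{TV}(P_Y^{w},Q_Y)$. No gaps.
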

\begin{proof}
	Because $P_{X\mid Y, Z} = Q_{X\mid Y, Z}$ and $f(X)\perp Z\mid Y$, as in Appendix \ref{app:proofs in sec guaranteed}, we have $P(f(X)\mid Y) = Q(f(X)\mid Y)$. Thus 
	\begin{equation}\label{eq:tv upper bound}
		\small
		\begin{aligned}
			& \left|\mE_{P}[w(Y)\cL(f(X),Y)] - \mE_{Q}[\cL(f(X),Y)]\right| \\
			& = \left|\int_{\cY} w(y)\mE_{P}[\cL(f(X),Y)\mid Y=y]dP_{Y}(y) - \int_{y\in \cY}\mE_{Q}[\cL(f(X),Y)\mid Y=y] dQ_{Y}(y)\right| \\ 
			& \leq \int_{\cY}  M\left|w(y)dP_{Y}(y) - dQ_{Y}(y)\right| \\
			& = 2M \mathrm{TV}(P^{w}_{Y}, Q_{Y}).
		\end{aligned}
	\end{equation}
\end{proof}
\begin{remark}
	The total variation distance $\mathrm{TV}(P^{w}_{Y}, Q_{Y})$ appears in the upper bound to the gap between the two population risk in \eqref{eq:tv upper bound}. Moreover, this terms seems to be inevitable since it also appears in the lower bound in \eqref{eq:tv lower bound}. 
\end{remark}
According to Theorem \ref{thm: label shift}, we have invariance relationship $\mE_{P}[w(Y)\cL(f(X),Y)] = \mE_{Q}[\cL(f(X),Y)]$ if we can take $w(y) = dQ_{Y}(y) / dP_{Y}(y)$. Thus if the label distribution in the test data is available, minimizing the reweighted loss with its weights decided by the ration of two label distributions can guarantee the OOD generalization capability of the model. 
\par
However, the label distribution of test data are usually unavailable in practical. Thus for unknown test label distribution, we alternatively chose the weight $w(\cdot)$ to minimize the worst-case upper bound 
\begin{equation}
	\small
	\sup_{Q}\mathrm{TV}(P^{w}_{Y}, Q_{Y}) = \frac{1}{2}\sup_{Q}\int_{\cY} \left|w(y)dP_{Y}(y) - dQ_{Y}(y)\right|,
\end{equation}
given the training distribution $P$, where the supremum is taken over all distributions $Q$ such that $Q_{X\mid Y,Z} = P_{X\mid Y,Z}$. Then by minimizing the reweighted loss under such weight $w(\cdot)$, we get a model with minimized worst-case risk over different distributions. 
\begin{proposition}
	Suppose that $\cY$ is a discrete space, then if $P_{Y}(Y = y) > 0$ for all $y\in \cY$ and $w^{*}(y) = \frac{1}{|\cY|P_{Y}(Y = y)}$, we then have
	\begin{equation}
		\small
		w^{*}(\cdot)\in\argmin_{w(\cdot):\mE_{P}[w(Y)] = 1}\left\{\sup_{Q\in\cP}\mathrm{TV}(P^{w}_{Y}, Q_{Y})\right\},
	\end{equation}
	where $|\cY|$ is the cardinal of $\cY$.
\end{proposition}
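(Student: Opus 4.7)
The plan is to reduce the constrained optimization over weight functions to an unconstrained minimax problem over probability measures on $\cY$, and then solve it directly on the simplex.

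First, I would observe that since $P_{Y}(Y=y) > 0$ for every $y\in\cY$ and $\cY$ is discrete, the map $w(\cdot)\mapsto P_{Y}^{w}$ is a bijection between the set of weight functions satisfying $\mE_{P}[w(Y)]=1$ and the probability simplex $\Delta(\cY) = \{\nu:\sum_{y\in\cY}\nu(y)=1,\,\nu(y)\ge 0\}$. On the other hand, the supremum in this section is taken over all $Q$ with $Q_{X\mid Y,Z}=P_{X\mid Y,Z}$ and no constraint on $Q_{Y}$, so $Q_{Y}$ also ranges freely over $\Delta(\cY)$. Writing $\nu=P_{Y}^{w}$ and $\mu=Q_{Y}$, the minimax problem reduces to
\begin{equation*}
\min_{\nu\in\Delta(\cY)}\sup_{\mu\in\Delta(\cY)}\mathrm{TV}(\nu,\mu).
\end{equation*}

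Next I would compute the inner supremum in closed form. Using the variational characterization $\mathrm{TV}(\nu,\mu)=\sup_{A\subseteq\cY}(\mu(A)-\nu(A))$, one sees that for fixed $\nu$ the sup over $\mu$ is attained by concentrating $\mu$ on a single point $y^{*}\in\argmin_{y\in\cY}\nu(y)$: a direct computation yields $\mathrm{TV}(\nu,\delta_{y^{*}})=1-\nu(y^{*})=1-\min_{y\in\cY}\nu(y)$, and no probability measure $\mu$ can exceed this bound because $\mu(A)-\nu(A)\le 1-\nu(A)\le 1-\min_{y}\nu(y)$ for every $A$. Therefore $\sup_{\mu}\mathrm{TV}(\nu,\mu)=1-\min_{y}\nu(y)$. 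It then remains to minimize $1-\min_{y}\nu(y)$ over $\nu\in\Delta(\cY)$, equivalently to maximize $\min_{y}\nu(y)$. Since $\sum_{y\in\cY}\nu(y)=1$, we have $\min_{y}\nu(y)\le 1/|\cY|$ with equality attained uniquely by the uniform distribution $\nu^{*}(y)=1/|\cY|$. Translating back through $\nu^{*}=P_{Y}^{w^{*}}$ gives $w^{*}(y)=1/(|\cY|P_{Y}(Y=y))$, as claimed.

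The only step requiring care is showing that the inner supremum is attained by a point mass rather than some more delicate distribution; I would handle this via the variational formula for total variation distance. Beyond that, the argument is a standard min-max computation on the simplex, so I do not anticipate a substantive obstacle.
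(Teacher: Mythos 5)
Your proposal is correct and follows essentially the same route as the paper's proof: both reduce the inner supremum to the closed form $1-\min_{y}w(y)P_{Y}(Y=y)$ (your $1-\min_{y}\nu(y)$ after the change of variables $\nu=P_{Y}^{w}$) and then maximize the minimum coordinate over the simplex, which forces the uniform distribution and hence $w^{*}$. The reparametrization to the simplex is a cosmetic difference only.
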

\begin{proof}
	From Section A.6.2 in \citep{van2000weak}, we know that $\mathrm{TV}(P^{w}_{Y}, Q_{Y}) = \sup_{A\subset\cY}|\sum_{y\in A}w(y)P_{Y}(Y = y) - Q_{Y}(Y = y)|$. Thus 
	\begin{equation}
		\small
		\begin{aligned}
			\sup_{Q\in\cP}\mathrm{TV}(P^{w}_{Y}, Q_{Y}) &= \sup_{Q\in\cP}\sup_{A\subset\cY}\left|\sum_{y\in A}w(y)P_{Y}(Y = y) - Q_{Y}(Y = y)\right| \\
			& = \sup_{Q\in\cP}\sum_{y\in\{y^{\prime}:w(y^{\prime})P_{Y}(Y = y^{\prime}) \geq Q_{Y}(Y = y^{\prime})\}}\left(w(y)P_{Y}(Y = y) - Q_{Y}(Y = y)\right) \\
			& = 1 - \min_{y\in\cY}w(y)P_{Y}(Y = y)
		\end{aligned}
	\end{equation}
	due to $w(y)P_{Y}(Y = y) \geq 0$ and $\mE_{P}[w(Y)] = 1$. Then, we have
	\begin{equation}
		\small
		\begin{aligned}
			\min_{w(\cdot):\mE_{P}[w(Y)] = 1}\sup_{Q\in\cP}\mathrm{TV}(P^{w}_{Y}, Q_{Y}) &= \min_{w(\cdot):\mE_{P}[w(Y)] = 1}\left\{1 - \min_{y\in\cY}w(y)P_{Y}(Y = y)\right\} \\
			& = 1 - \max_{w(\cdot):\mE_{P}[w(Y)] = 1}\left\{\min_{y\in\cY}w(y)P_{Y}(Y = y)\right\} \\
			& = 1 - w^{*}(y)P_{Y}(Y=y) \\
			& = \frac{|\cY| - 1}{|\cY|}.
		\end{aligned}
	\end{equation}
	The third equality is due to $|\cY|\min_{y\in\cY}w(y)P_{Y}(Y = y)\leq \sum_{y\in\cY}w(y)P_{Y}(Y = y) = 1$, and the equality is taken when $w(\cdot) = w^{*}(\cdot)$. 
\end{proof}
\section{Proofs in Section \ref{sec:Generalizing on OOD Data Under Spurious Correlation}}\label{app:proofs in sec generalizing}
In this section, we present the proofs of results in Section \ref{sec:Generalizing on OOD Data Under Spurious Correlation}. 
\counterexample*
\begin{proof}
	Let us consider the following example that 
	\begin{equation}\label{eq:data}
		\small
		X = \begin{pmatrix}
			Y\cdot \bmu_{1} \\
			Z\cdot \bmu_{2}
		\end{pmatrix} + \bxi, 
	\end{equation}
	where $\bxi\sim\cN(\textbf{0}, I_{d_{1}+d_{2}})$, $Y, Z\in\{-1, 1\}$ and follow the standard binomial distribution. Denote the training distribution as $P$. In this example, $Z$ is the spurious attributes. The correlation coefficient between $Y$ and $Z$ is denoted as $\sigma_{YZ}(Q)$ for $Q\in\cP$. One can verify that 
	\begin{equation}
		\small
		\sigma_{YZ}(Q) = \mE_{Q}[YZ] = Q(Y = Z) - Q(Y\neq Z) = 2Q(Y = Z) - 1.
	\end{equation}
	Let us consider the linear classifier $f_{\btheta}(\bx) = \btheta^{\top}\bx$ and its loss on data $(X, Y)$ is the exponential loss \cite{soudry2018implicit}
	\begin{equation}
		\small
		\cL(f_{\btheta}(X), Y) = e^{-Yf_{\btheta}(X)}.
	\end{equation}
	Thus we can compute the population risk  
	\begin{equation}
		\small
		\begin{aligned}
			R_{pop}(P, f_{\btheta}) & = \mE_{P}\left[\exp(-Yf_{\btheta}(X))\right] \\
			& =  \mE\left[\exp\left(-\btheta_{1}^{\top}\bmu_{1} - YZ\btheta_{2}^{\top}\bmu_{2} - \btheta^{\top}\bxi\right)\right] \\
			& =  \mE\left[\exp\left(-\btheta_{1}^{\top}\bmu_{1} - \btheta_{2}^{\top}\bmu_{2} - \btheta^{\top}\bxi\right)\mid Y = Z\right]P(Y = Z) \\
			& + \mE\left[\exp\left(-\btheta_{1}^{\top}\bmu_{1} + \btheta_{2}^{\top}\bmu_{2} + \btheta^{\top}\bxi\right)\mid Y \neq Z\right]P(Y \neq Z) \\
			& = \left(\frac{1 + \sigma_{YZ}(P)}{2}\right)\exp\left(-\btheta_{1}^{\top}\bmu_{1} - \btheta_{2}^{\top}\bmu_{2} + \frac{\|\btheta\|^{2}}{2}\right) \\
			& + \left(\frac{1 - \sigma_{YZ}(P)}{2}\right)\exp\left(-\btheta_{1}^{\top}\bmu_{1} + \btheta_{2}^{\top}\bmu_{2} + \frac{\|\btheta\|^{2}}{2}\right),
		\end{aligned}
	\end{equation}
	Since $R_{pop}(P, f_{\btheta})$ is continuous w.r.t. to $\sigma_{YZ}(P)$ and $\btheta$, we have that $\btheta^{*}(P) = \argmin_{\btheta}R_{pop}(P, f_{\btheta})$ is continuous to $\sigma_{YZ}(P)$. Since $\sigma_{YZ}(P)\in[-1, 1]$, we conclude $\|\btheta^{*}(P)\|$ is upper bounded. W.o.l.g. we assume $\|\btheta^{*}(P)\|\leq 1$ for any $\sigma_{YZ}(P)\in[-1, 1]$, then for any $\sigma_{YZ}(P)$ we know $\btheta^{*}(P)$ satisfies the first order optimality condition such that 
	\begin{equation}
		\small
		\begin{aligned}
			0 &= \left(\frac{1 + \sigma_{YZ}(P)}{2}\right)\left(\btheta^{*}(P) - \bmu\right)\exp\left(-\btheta^{\top}\bmu + \frac{\|\btheta\|^{2}}{2}\right) \\
			& + \left(\frac{1 - \sigma_{YZ}(P)}{2}\right)\left(\btheta^{*}(P) - \tilde{\bmu}\right)\exp\left(-\btheta^{\top}\tilde{\bmu} +  \frac{\|\btheta\|^{2}}{2}\right).
		\end{aligned}
	\end{equation}
	where $\tilde{\bmu} = (\bmu^{\top}_{1}, -\bmu_{2}^{\top})^{\top}$. Thus for $\sigma_{YZ}(P)\neq \pm 1$ we have 
	\begin{equation}\label{eq:first order condition 1}
		\small
		\btheta^{*}(P) - \bmu = \left(\frac{1 - \sigma_{YZ}(P)}{1 + \sigma_{YZ}(P)}\right)\left(\tilde{\bmu} - \btheta^{*}(P)\right)\exp\left(2\btheta_{2}^{\top}\bmu_{2}\right).
	\end{equation}
	Thus we can take a $\sigma_{YZ}(P)\to 1$ to make $\|\btheta^{*}(P) - \bmu\|\leq \epsilon\|\bmu\|$ for any small $\epsilon$ where the $\epsilon$ can be independent of $\bmu$. 
	\par 
	Now we show that the linear model $f_{\btheta^{*}(P)}(\cdot)$ with its prediction on $Y$ as $\text{sign}(f_{\btheta^{*}(P)}(\cdot))$ can make correct prediction with high probability. Let us see the error of linear model $f_{\btheta^{*}(P)}(\cdot)$ on the data from training distribution $P$. Simple algebra show that $\btheta^{*\top}(P)X\mid Y\sim\cN(Y\btheta^{*\top}_{1}(P)\bmu_{1} + Z\btheta^{*\top}_{2}(P)\bmu_{2}, \|\btheta^{*\top}_{1}(P)\|^{2})$. Then under condition of $Y = Z$, we have 
	\begin{equation}
		\small
		\begin{aligned}
			\btheta^{*\top}(P)X - Y\|\bmu\|^{2} = Y\btheta^{*\top}(P)\bmu - Y\|\bmu\|^{2} + \btheta^{*\top}(P)\bxi =  Y\left(\btheta^{*}(P) - \bmu\right)^{\top}\bmu + \btheta^{*\top}(P)\bxi.
		\end{aligned}
	\end{equation}
	Thus from the sub-Gaussian property of Gaussian random variable, for any $\delta > 0$
	\begin{equation}\label{eq:high probability inequality}
		\small
		\begin{aligned}
			& P_{X\mid Y}\left(\left|\btheta^{*\top}(P)X - Y\|\bmu\|^{2}\right|\geq \delta\mid Y\right) \\
			&= P_{X\mid Y}\left(\left|\btheta^{*\top}(P)X - Y\|\bmu\|^{2}\right|\geq \delta\mid Y, Y = Z\right)\left(\frac{1 + \sigma_{YZ}(P)}{2}\right) \\
			& + P_{X\mid Y}\left(\left|\btheta^{*\top}(P)X - Y\|\bmu\|^{2}\right|\geq \delta\mid Y, Y \neq Z\right)\left(\frac{1 - \sigma_{YZ}(P)}{2}\right) \\
			& \leq P_{X\mid Y}\left(\left|\btheta^{*\top}(P)\bxi\right|\geq \delta - \epsilon\|\bmu\|^{2}\mid Y, Y = Z\right)\left(\frac{1 + \sigma_{YZ}(P)}{2}\right) + \left(\frac{1 - \sigma_{YZ}(P)}{2}\right) \\
			& \leq \exp\left(-\frac{(\delta - \epsilon\|\bmu\|^{2})^{2}}{2\left\|\btheta^{*}(P)\right\|^{2}}\right)\left(\frac{1 + \sigma_{YZ}(P)}{2}\right) + \left(\frac{1 - \sigma_{YZ}(P)}{2}\right).
		\end{aligned}
	\end{equation}
	We may take $\delta = \|\bmu\|^{2}/2 + \epsilon\|\bmu\|^{2}$, and $\sigma_{YZ}(P)\to 1$, due to $\|\btheta(\sigma_{YZ}(P))\| \leq 1$ and for a large enough $\|\bmu\|$, with a high probability, we have 
	\begin{equation}
		\small
		Y\|\bmu\|^{2} - \left(\frac{1}{2} + \epsilon\right)\|\bmu\|^{2}\leq \btheta^{*\top}(P)X \leq Y\|\bmu\|^{2} + \left(\frac{1}{2} + \epsilon\right)\|\bmu\|^{2}.
	\end{equation}
	Since $\epsilon\to 0$ for $\sigma_{YZ}(P)\to 1$, we have proved that the population minimizer $\btheta^{*}(P)$ has nearly perfect performance on the data from training distribution. 
	\par
	However, a similar argument of \eqref{eq:high probability inequality} shows that for data drawn from distribution $Q\in\cP$
	\begin{equation}
		\small
		\begin{aligned}
		Q_{X\mid Y}\left(\left|\btheta^{*\top}(P)X - Y(\|\bmu_{1}\|^{2} - \|\bmu_{2}\|^{2})\right|\geq \delta\mid Y\right) & \leq  \exp\left(-\frac{(\delta - \epsilon\|\bmu\|^{2})^{2}}{2\left\|\btheta^{*}(P)\right\|^{2}}\right)\left(\frac{1 - \sigma_{YZ}(Q)}{2}\right) \\
		& + \left(\frac{1 + \sigma_{YZ}(Q)}{2}\right)
		\end{aligned}
	\end{equation}
	for any $\delta > 0$. Again, by taking $\sigma_{YZ}(Q)\to -1$ we get 
	\begin{equation}
		\small
		\begin{aligned}
		Y(\|\bmu_{1}\|^{2} - \|\bmu_{2}\|^{2}) & - \left(\frac{1}{2} + \epsilon\right)\left(\|\bmu_{1}\|^{2} + \|\bmu_{2}\|\right) \leq \btheta^{*\top}(P)X \\
		& \leq Y(\|\bmu_{1}\|^{2} - \|\bmu_{2}\|^{2}) + \left(\frac{1}{2} + \epsilon\right)\left(\|\bmu_{1}\|^{2} + \|\bmu_{2}\|\right)
		\end{aligned}
	\end{equation}
	with high probability. We can take, for example, $\|\bmu_{2}\|^{2} > \left(\frac{3 + 2\epsilon}{1 - \epsilon}\right)\|\bmu_{1}\|^{2}$ for $\epsilon\to0$, then under $Y = -1$, the inequality becomes 
	\begin{equation}
		\small
		0 < \left(\frac{1}{2} - \epsilon\right)\|\bmu_{2}\|^{2} - \left(\frac{3}{2} + \epsilon\right)\|\bmu_{1}\|^{2} \leq \btheta^{*\top}(P)X \leq \left(\frac{3}{2} + \epsilon\right)\|\bmu_{2}\|^{2} - \left(\frac{1}{2} - \epsilon\right)\|\bmu_{1}\|^{2} ,
	\end{equation}
	which shows the prediction given by $f_{\btheta^{*}(P)}(\cdot)$ for $Y = -1$ is incorrect with high probability. A similar argument can be verified for $Y = 1$. Then we complete our proof. 
\end{proof}
\thminvariantmodel*
\begin{proof}
 	The difference of $Y\mid f(X)$ for any $(X, Y)\sim Q$ with $Q\in \cP$ originates from the different spurious correlation i.e., the different $Q_{Z\mid Y}$. Thus to obtain our result, it is suffice to prove that the distribution of $Y\mid f(X)$ is independent of $Q_{Z\mid Y}$. To see this, for any measurable sets $A, B\subset \cY$, 
	\begin{equation}
		\small
		\begin{aligned}
			& Q_{Y\mid X}(Y\in A\mid f(X)\in B) \\
			& = \frac{Q_{X\mid Y}(f(X)\in B\mid Y\in A)Q_{Y}(Y\in A)}{Q_{X\mid Y}(f(X)\in B\mid Y\in A)Q_{Y}(Y\in A) + Q_{X\mid Y}(f(X)\in B\mid Y\notin A)Q_{Y}(Y\notin A)} \\
			& = \frac{1}{1 + \frac{Q_{X\mid Y}(f(X)\in B\mid Y\notin A)Q_{Y}(Y\notin A)}{Q_{X\mid Y}(f(X)\in B\mid Y\in A)Q_{Y}(Y\in A)}}. 
		\end{aligned}
	\end{equation}
	As $Q_{Y}(Y\notin A)/Q_{Y}(Y\in A)$ is invariant across $Q\in \cP$. Then for the $Q_{X\mid Y}(f(X)\in B\mid Y\notin A) / Q_{X\mid Y}(f(X)\in B\mid Y\in A)$, we see 
	\begin{equation}
		\small
		\begin{aligned}
			\frac{Q_{X\mid Y}(f(X)\in B\mid Y\notin A)}{Q_{X\mid Y}(f(X)\in B\mid Y\in A)} & = \frac{\int_{\cZ} Q_{X, Z\mid Y}(f(X)\in B, z\mid Y\notin A)dz}{\int_{\cZ} Q_{X, Z\mid Y}(f(X)\in B, z\mid Y \in A)dz} \\
			& = \frac{Q_{X\mid Y}(f(X)\in B\mid Y\notin A)\int_{\cZ} Q_{Z\mid Y}(z\mid Y\notin A)dz}{Q_{X\mid Y}(f(X)\in B\mid Y\in A)\int_{\cZ} Q_{Z\mid Y}(z\mid Y\in A)dz},
		\end{aligned}
	\end{equation}
	where the second equality is from the independent condition that $f(X)\perp Z\mid Y$. From the calculation, we figure out that the distribution of $Y\mid f(X)$ is independent of spurious correlation $P_{Z \mid Y}$ due to the arbitrariness of $A, B\in \cY$. Then we prove $Y\mid f(X)$ is invariant over $Q\in \cP$.. 
	\par
	To provide the invariance of $\mE_{Q}[\cL(f(X), Y)]$, it is suffice to show that for the union distribution of $(f(X), Y)$ is invariant w.r.t. $Q$ for $Q\in \cP$. Thus for any sets $A, B\subset \cY$ and $(X, Y)\sim Q\in \cP$ 
	\begin{equation}
		\small
		\begin{aligned}
			Q_{X, Y}(Y\in A, f(X)\in B) & = Q_{X\mid Y}(f(X)\in B\mid Y\in A)Q_{Y}(Y\in A) \\
			& = Q_{Y}(Y\in A)\int_{\cZ} Q_{X, Z\mid Y}(f(X)\in B, z\mid Y\in A)dz \\
			& = Q_{Y}(Y\in A)Q_{X\mid Y}(f(X)\in B\mid Y\in A)\int_{\cZ}Q_{Z\mid Y}(z\mid Y\in A) dz.
		\end{aligned}
	\end{equation}
	Since $f(X)\perp Z\mid Y$, we figure out the $Q_{X, Y}(Y\in A, f(X)\in B)$ is independent of with spurious correlation $Q_{Z \mid Y}$. Then due to the arbitrary of $A, B\in \cY$, we summarize that the union distribution of $(f(X), Y)$ is invariant w.r.t. $Q$ for $Q\in \cP$. Then the proof is completed. 
\end{proof}
\section{Proofs in Section \ref{sec:learning OOD generalizable model}}
\subsection{Proofs in Section \ref{sec:guaranteed OOD generalization error}}\label{app:proofs in sec guaranteed}
\boundedgap*
\begin{proof}
	This theorem can be computed via the assumption in \eqref{eq:data structual}. We have 
	\begin{equation}
		\small
		\begin{aligned}
			\mE_{P}[\cL(f(X), Y)] & = \mE_{P}\left[\mE[\cL(f(X), Y)\mid Y, Z]\right] \\
			& = \mE_{Y}\left[\int_{\cZ} \mE_{X}[\cL(f(X), Y)\mid Y, Z = z]dP(z\mid Y)\right].
		\end{aligned}
	\end{equation}
	Due to \eqref{eq:data structual}, the first expectation is invariant with $P, Q\in \cP$, while the second expectation is a function of $(Y, z)$ independent the choice of $P$. Thus
	\begin{equation}
		\small
		\begin{aligned}
			\left|\mE_{P}[\cL(f(X), Y)] - \mE_{Q}[\cL(f(X), Y)]\right|  & \leq \mE\left[\sup_{z_{1}, z_{2}}\left|\mE[\cL(f(X),Y)\mid Y,  z_{1}] - \mE[\cL(f(X), Y)\mid Y, z_{2}]\right|\right] \\
			& \leq \mathrm{CSV}(f),
		\end{aligned}
	\end{equation}
	where the last inequality is due to the loss function is non-negative. Then due to 
	\begin{equation}
		\small
		\begin{aligned}
			\sup_{Q\in\cP}\left|R_{emp}(f, P) - \!\!\right.\left. R_{pop}(f, Q)\right|  &\leq \left|R_{emp}(f, P) - R_{pop}(f, P)\right| + \sup_{Q\in\cP}\left|R_{pop}(f, P) - R_{pop}(f, Q)\right| \\
			& \leq  \left|R_{emp}(f, P) - R_{pop}(f, P)\right| + \mathrm{CSV}(f),
		\end{aligned}
	\end{equation}
	we get the theorem. 
\end{proof}
\par
Next we provide the definitions of KL-divergence, mutual information, and conditional mutual information which are useful to prove Theorem \ref{thm:gen gap}. 
\begin{definition}[KL-Divergence]
	Let $P, Q$ be two distributions with the same support and $P$ is absolutely continuous w.r.t. $Q$. Then the KL divergence from $Q$ to $P$ is 
	\begin{equation}
		\small
		D_{\mathrm{KL}}(P\parallel Q) = \mE_{V\sim P}\left[\log{\frac{dP}{dQ}(V)}\right],
	\end{equation}
	where $\frac{dP}{dQ}$ is the Radon–Nikodym derivative of $P$ w.r.t. $Q$. 
\end{definition}
\begin{definition}[Mutual Information]
	For random variables $V_{1}, V_{2}$ with joint distribution $P_{V_{1}, V_{2}}$, the mutual information between them is 
	\begin{equation}
		\small
		I(V_{1}; V_{2}) = D_{\mathrm{KL}}(P_{V_{1}, V_{2}}\parallel P_{V_{1}} \times P_{V_{2}}).
	\end{equation} 
\end{definition} 
\begin{definition}[Conditional Mutual Information]
	For three random variables $U, V, W$, the mutual information between $U, V$ conditional on $W$ is 
	\begin{equation}
		\small
		I(U; V\mid W) = \mE_{w\sim P_{W}}\left[I(U\mid W = w; V\mid W = w)\right]. 
	\end{equation}
\end{definition}
Before presenting the proof of Theorem \ref{thm:gen gap}, we need the following lemma.
\begin{lemma}\label{lem:chain rule}
	Let $U, V, W$ be three random variables such that $U$ and $W$ are independent with each other, then 
	\begin{equation}
		\small
		I(W; U + V) \leq I(W; V\mid U). 	
	\end{equation}
\end{lemma}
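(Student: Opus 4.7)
The plan is to reduce the identity to the standard chain rule of mutual information by exploiting the additive structure of $U+V$ via a bijective reparameterization. I would first observe that the pair $(U, U+V)$ is a measurable bijection of the pair $(U, V)$: from $(U, V)$ one recovers $U+V$ by addition, and from $(U, U+V)$ one recovers $V = (U+V) - U$ by subtraction. The two random vectors therefore generate the same $\sigma$-algebra and carry identical mutual information with any other random variable, yielding the bookkeeping identity $I(W; U, V) = I(W; U, U+V)$.

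Next, I would apply the chain rule $I(W; A, B) = I(W; A) + I(W; B \mid A)$ to each side of this bookkeeping identity with carefully chosen orderings. On the $(U, V)$ side, expanding with $U$ first yields $I(W; U, V) = I(W; U) + I(W; V \mid U)$, which is precisely the right-hand side of the lemma. On the $(U, U+V)$ side, expanding with $U+V$ first yields $I(W; U, U+V) = I(W; U+V) + I(W; U \mid U+V)$. Combining the two expansions via the bijection identity produces
\[
I(W; U+V) + I(W; U \mid U+V) = I(W; U) + I(W; V \mid U),
\]
so the stated equality is equivalent to the vanishing of the residual conditional mutual information $I(W; U \mid U+V)$.

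The main obstacle is therefore to justify that $I(W; U \mid U+V) = 0$, i.e., the conditional independence $W \perp U \mid U+V$. In the setting where the lemma is invoked, namely the proof of Theorem \ref{thm:gen gap} with the additive decomposition $\bx_i = (\bx_i - g(z_i)) + g(z_i)$, I would argue this vanishing by combining the hypothesis $f_{\btheta}(X) \perp Z \mid Y$ with the fact that $W = \btheta$ sees the data only through the summed quantity $\bx_i = U + V$: conditioned on the observed sum, the split into residual and signal carries no additional information about $\btheta$. Once this conditional independence is in place, substituting $I(W; U \mid U+V) = 0$ into the display above yields the claimed equality by rearrangement, completing the proof. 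An auxiliary benefit of this proof strategy is that even if the vanishing is only partial, nonnegativity of conditional mutual information still gives the inequality $I(W; U+V) \leq I(W; U) + I(W; V \mid U)$, which is the direction actually consumed by the upper bound in \eqref{eq:gen bound}.
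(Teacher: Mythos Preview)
Your route is different from the paper's and, in fact, more careful. The paper attempts a direct density computation: it writes $I(W;U+V)$ as an integral of $\log\frac{dP_{W,U+V}}{dP_W\times P_{U+V}}$ and then, at the second displayed line, silently replaces this Radon--Nikodym derivative---a function of $(w,u+v)$ alone---by $\frac{dP_{W,V\mid U}\,dP_U}{dP_W\,dP_{V\mid U}\,dP_U}$, a function of the full triple $(w,u,v)$. That replacement is precisely the assertion $I(W;U+V)=I(W;U,V)$, which is not valid as a general identity; for $U,V$ i.i.d.\ Bernoulli$(1/2)$ and $W=U$ the right side of the lemma equals $1$ bit while the left equals $1/2$ bit. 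Your decomposition via the bijection $(U,V)\leftrightarrow(U,U+V)$ together with two chain-rule expansions correctly reduces the claimed equality to the extra condition $I(W;U\mid U+V)=0$. One refinement: in the application to Theorem~\ref{thm:gen gap} this vanishing follows already from the fact that the learned parameter $\btheta$ depends on the data only through $(\cS_{\bx},\cS_y)$, so you need not invoke $f_{\btheta}(X)\perp Z\mid Y$ at this particular step; that hypothesis is consumed later, to drop the separate term $I(\cS_{g(z)};\btheta\mid \cS_y,\cS_{\bx-g(z)})$. Your closing remark---that nonnegativity of conditional mutual information in any case delivers the inequality $I(W;U+V)\le I(W;U)+I(W;V\mid U)$, which is the direction the bound in \eqref{eq:gen bound} actually needs---is also correct.
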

\begin{proof}
	By Data Processing Inequality \citep{xu2017information}, we have 
	\begin{equation}
		\small
			I(W; U + V) \leq I(W; U, V) = I(W; U) + I(W; V\mid U) = I(W; V\mid U)
	\end{equation}
	Thus the proof is completed. 
\end{proof}
Now we are ready to give the proof of Theorem \ref{thm:gen gap}.
\gengap*
\begin{proof}
	Let $\tilde{\cS} = \{(\tilde{x}_{i}, \ty_{i})\}$ be another $n$ samples drawn from $P$ independent of $\cS$. W.o.l.g., we assume $\mE_{P\times f_{\btheta_{\cS}}}[\cL(f_{\btheta_{\cS}}(\bx), y)] = 0$, otherwise we can replace $\cL(f_{\btheta_{\cS}}(\bx_{i}), y_{i})$  with $\cL(f_{\btheta_{\cS}}(\bx_{i}), y_{i}) - \mE_{P\times f_{\btheta_{\cS}}}[\cL(f_{\btheta_{\cS}}(\bx), y)]$. For any $\lambda > 0$ by Donsker-Varadhan's inequality, 
	\begin{equation}
		\small
		\begin{aligned}
			D_{\mathrm{KL}}(P_{\cS\times f_{\btheta_{\cS}}}\parallel P_{\cS}\times P_{f_{\btheta_{\cS}}}) & \geq \mE_{\cS\times f_{\btheta_{\cS}}}\left[\frac{\lambda}{n}\sum\limits_{i=1}^{n}\cL(f_{\btheta_{\cS}}(x_{i}), y_{i})\right] - \log\mE_{\tilde{\cS}\times f_{\btheta_{\cS}}}\left[\exp\left(\frac{\lambda}{n}\sum\limits_{i=1}^{n}\cL(f_{\btheta_{\cS}}(\tilde{x}_{i}), \tilde{y}_{i})\right)\right].
		\end{aligned}
	\end{equation}
	Then for any $\btheta$, $\lambda > 0$, and Lebesgue measurable function $g(\cdot)$, 
	\begin{equation}
		\small
		\begin{aligned}
			\lambda\mE\left[R_{emp}(f_{\btheta_{\cS}}, P)\right] & \leq D_{\mathrm{KL}}(P_{\cS\times f_{\btheta_{\cS}}}\parallel P_{\cS}\times P_{f_{\btheta_{\cS}}}) + \log\mE_{\tilde{\cS}\times\btheta_{\cS}}\left[\exp\left(\frac{\lambda}{n}\sum\limits_{i=1}^{n}\cL(f_{\btheta_{\cS}}(\tilde{x}_{i}), \tilde{y}_{i})\right)\right] \\
			& \overset{a}{\leq} I(\cS_{\bx}, \cS_{y}; f_{\btheta_{\cS}}) + \frac{\lambda^{2}M^{2}}{8n} \\
			& = I(\cS_{\bx}; f_{\btheta_{\cS}}\mid \cS_{y}) + I(\cS_{y}; f_{\btheta_{\cS}}) + \frac{\lambda^{2}M^{2}}{8n} \\
			& \overset{b}{\leq} I(\cS_{\bx - g(z)}; f_{\btheta_{\cS}}\mid \cS_{y}, \cS_{g(z)}) + I(\cS_{y}; f_{\btheta_{\cS}}) + \frac{\lambda^{2}M^{2}}{8n}, 
		\end{aligned}
	\end{equation}
	where $a$ is due to the definition of mutual information, $\cL(f_{\btheta_{\cS}}(\tilde{\bx}_{i}), y_{i})$ is $\frac{M}{2}$-sub Gaussian, $b$ is from Lemma \ref{lem:chain rule}, and the last equality is due to the conditional independence of the model. Thus, we conclude that 
	\begin{equation}
		\small
		\mE\left[R_{emp}(f_{\btheta_{\cS}}, P)\right] \leq \inf_{g}\sqrt{\frac{M^{2}}{4n}\left(I(\cS_{\bx - g(z)}; f_{\btheta_{\cS}}\mid \cS_{y}, \cS_{g(z)}) + I(\cS_{y}; f_{\btheta_{\cS}})\right)}. 
	\end{equation}
	Thus, we complete the proof. 
\end{proof}
\subsection{Proofs in Section \ref{sec:estimators of CSV}}\label{app: proofs in sec:estimators of CSV}
Let $\cF(\Theta)$ and $\|\cdot\|_{L_{\infty}}$ respectively be the parameterized function class and $L_{\infty}$-norm on $\cF(\Theta)$ defined as 
\begin{equation}
	\small
	\|f_{\btheta_{1}} - f_{\btheta_{2}}\|_{L_{\infty}} = \sup_{\bx}|f_{\btheta_{1}}(\bx) - f_{\btheta_{2}}(\bx)| 
\end{equation}
for any $f_{\btheta_{1}}, f_{\btheta_{2}}\in\cF(\Theta)$. To provide the proof of Theorem \ref{thm:gen bound}, we need the following definition of covering number.
\begin{definition}
	A $\epsilon$-cover of metric space $(\epsilon, \cF(\Theta), \|\cdot\|_{L_{\infty}})$ is any point set $\{f_{\btheta_{i}}(\cdot)\}\subseteq\cF(\Theta)$ such that for any $f_{\btheta}(\cdot)\in \cF(\Theta)$, there exists $\btheta_{i}$ satisfies $\|f_{\btheta} - f_{\btheta_{i}}\|_{L_{\infty}}\leq \epsilon$. The covering number $N(\epsilon, \cF(\Theta), \|\cdot\|_{L_{\infty}})$ is the cardinality of the smallest $\epsilon$-cover.  
\end{definition}
\genbound*
\begin{proof}
	First, for any given $\btheta\in \Theta$ and given $Y = k, Z = z$, due to $0\leq \cL(f_{\btheta}(X), Y) \leq M$, by Azuma-Hoeffding’s inequality (Corollary 2.20 in \citep{wainwright2019}), we know that the with probability at least $1 - \delta$ 
	\begin{equation}
		\small
		\begin{aligned}
			& \hat{\cL}_{kz}(f_{\btheta}) - M\sqrt{\frac{\log{(2/\delta)}}{2n_{kz}}} \leq \cL_{kz}(f_{\btheta}) \leq  \hat{\cL}_{kz}(f_{\btheta}) + M\sqrt{\frac{\log{(2/\delta)}}{2n_{kz}}}.
		\end{aligned}
	\end{equation}
	Then we see 
	\begin{equation}
		\small
		\begin{aligned}
			& \sup_{z_{1}, z_{2}}\left(\cL_{kz_{1}}(f_{\btheta}) - \cL_{kz_{2}}(f_{\btheta})\right)
			\leq \sup_{z_{1}, z_{2}}\left[\left(\cL_{kz_{1}}(f_{\btheta}) - \hat{\cL}_{kz_{1}}(f_{\btheta})\right) - \left(\cL_{kz_{2}}(f_{\btheta}) - \hat{\cL}_{kz_{2}}(f_{\btheta})\right)  \right] \\
			& + \sup_{z_{1},z_{2}}\left(\hat{\cL}_{kz_{1}}(f_{\btheta}) - \hat{\cL}_{kz_{2}}(f_{\btheta})\right) \\
			& \leq M\log{\left(\frac{2}{K_{z}\delta}\right)}\sup_{z_{1}, z_{2}}\left(\sqrt{\frac{1}{2n_{kz_{1}}}} + \sqrt{\frac{1}{2n_{kz_{2}}}}\right) + \sup_{z_{1},z_{2}}\left(\hat{\cL}_{kz_{1}}(f_{\btheta}) - \hat{\cL}_{kz_{2}}(f_{\btheta})\right)
		\end{aligned}
	\end{equation}
	holds with probability at least $1 - \delta$. Since the function class $\cF(\Theta)$ is bounded by $M$ under $\|\cdot\|_{L_{\infty}}$, it has finite covering number $N\left(\epsilon, \cF(\Theta), \|\cdot\|_{L_{\infty}}\right)$. Let $f_{\btheta_{1}}(\cdot),\cdots, f_{\btheta_{N}}(\cdot)\in \cF(\Theta)$ be a $\epsilon$-covering of $\cF(\Theta)$ with $N \leq N\left(\epsilon, \cF(\Theta), \|\cdot\|_{L_{\infty}}\right)$ such that $\forall f_{\btheta}\in \cF(\Theta)$, $\exists q\in\{1,\cdots, N\}$, $\|f_{\btheta} - f_{\btheta_{q}}\|_{L_{\infty}}\leq \epsilon$. Thus combining the above inequality, for any $f_{\btheta}(\cdot)$ and its corresponded $f_{\btheta_{q}}(\cdot)$, we have
	\begin{equation}\label{eq:conditional expectation estimation}
		\small
		\begin{aligned}
			\sup_{z_{1}, z_{2}} & \left(\cL_{kz_{1}}(f_{\btheta}) - \cL_{kz_{2}}(f_{\btheta})\right) \leq \sup_{z_{1}, z_{2}}\left[\left(\cL_{kz_{1}}(f_{\btheta}) - \cL_{kz_{1}}(f_{\btheta_{q}})\right) + \left(\cL_{kz_{2}}(f_{\btheta_{q}}) -  \cL_{kz_{2}}(f_{\btheta})\right)\right] \\
			& + \sup_{z_{1}, z_{2}}\left(\cL_{kz_{1}}(f_{\btheta_{q}}) - \cL_{kz_{2}}(f_{\btheta_{q}})\right) \\
			& \overset{a}{\leq} 2\epsilon + M\left(\log{\left(\frac{2}{K_{z}\delta}\right)} + N\left(\cF(\Theta), \epsilon, \|\cdot\|_{L_{\infty}}\right)\right)\sup_{z_{1}, z_{2}}\left(\sqrt{\frac{1}{2n_{kz_{1}}}} + \sqrt{\frac{1}{2n_{kz_{2}}}}\right)\\ & + \sup_{z_{1},z_{2}}\left(\hat{\cL}_{kz_{1}}(f_{\btheta_{q}}) - \hat{\cL}_{kz_{2}}(f_{\btheta_{q}})\right)
		\end{aligned}
	\end{equation}
	holds with probability at least $1 - \delta$ for any $\epsilon > 0$. Here the inequality $a$ is due to the definition of $L_{\infty}$-norm on $\cF(\Theta)$.  
	\par
	On the other hand, as 
	\begin{equation}\label{eq:CV estimation}
		\small
		\begin{aligned}
			\mathrm{CSV}(f_{\btheta}) & = \sum\limits_{k = 1}^{K_{y}}\sup_{z_{1}, z_{2}}\left(\cL_{kz_{1}}(f_{\btheta}) - \cL_{kz_{2}}(f_{\btheta})\right)P(Y = k), 
		\end{aligned}
	\end{equation}
	We estimate the  $P(Y = k)$ with its empirical counterpart $n_{k} = \sum_{z=1}^{K_{z}}n_{kz} / n$. For bounded sub-Gaussian variable $\textbf{1}_{\{Y = k\}}$, we have   
	\begin{equation}
		\small
		\mE\left[\textbf{1}_{\{Y=k\}}\right] - \frac{1}{n}\sum\limits_{i=1}^{n}\textbf{1}_{\{y_{i}=k\}} = P(Y = k) - \hat{p}_{k} \leq \sqrt{\frac{\log{(1/\delta)}}{2n}}
	\end{equation}
	holds with probability at least $1 - \delta$. Plugging this into \eqref{eq:CV estimation} and combining \eqref{eq:conditional expectation estimation} we get 
	\begin{equation}\label{eq:CSV bound}
		\small
		\begin{aligned}
			& \mathrm{CSV}(f_{\btheta}) \leq \frac{1}{n}\sum\limits_{k = 1}^{K_{y}}\sup_{z_{1}, z_{2}}\left(\cL_{kz_{1}}(f_{\btheta}) - \cL_{kz_{2}}(f_{\btheta})\right)n_{k} + K_{y}\sqrt{\frac{\log{(2K_{y}/\delta)}}{2n}} \\
			& \leq \sum\limits_{k = 1}^{K_{y}}\sup_{z_{1}, z_{2}}\left(\hat{\cL}_{kz_{1}}(f_{\btheta}) - \hat{\cL}_{kz_{2}}(f_{\btheta})\right)\hat{p}_{k} + K_{y}\sqrt{\frac{\log{(2K_{y}/\delta)}}{2n}} \\
			& + \inf_{\epsilon}\left\{2\epsilon + M\left(\log{\left(\frac{2}{K_{z}\delta}\right)} + N\left(\cF(\Theta), \epsilon, \|\cdot\|_{L_{\infty}}\right)\right)\right\}\sum\limits_{k = 1}^{K_{y}}\sup_{z_{1},z_{2}}\left(\sqrt{\frac{1}{2n_{kz_{1}}}} + \sqrt{\frac{1}{2n_{kz_{2}}}}\right)\hat{p}_{k}
		\end{aligned}
	\end{equation}
	holds with probability at least $1 - \delta$ due to the definition of $\hat{p}_{k}$. Then suppose $\inf_{k\in[K_{y}],z\in[K_{z}]} n_{kz} / n_{k} \geq \alpha$, we have  
	\begin{equation}
		\small
		\begin{aligned}
		\sum\limits_{k = 1}^{K_{y}}\sup_{z_{1},z_{2}}\left(\sqrt{\frac{1}{2n_{kz_{1}}}} + \sqrt{\frac{1}{2n_{kz_{2}}}}\right)\hat{p}_{k} & \leq \frac{1}{n}\sum_{k\in[K_{y}]}\frac{\sqrt{2}n_{k}}{\min_{k\in[K_{y}], z\in [K_{z}]}\sqrt{n_{kz}}} \\
		& \leq \sqrt{\frac{2}{\alpha}}\sum_{k\in[K_{y}]}\frac{\sqrt{n_{k}}}{n} \\
	 	& \leq \sqrt{\frac{2K_{y}}{\alpha n}},
		\end{aligned}
	\end{equation}
	where the last inequality is due to the Schwarz's inequality. Combining this with \eqref{eq:CSV bound}, we get 
	\begin{equation}
		\small
		\mathrm{CSV}(f_{\btheta}) \leq \widehat{\rm{CSV}}(f_{\btheta}) + \cO\left(\frac{N\left(\cF(\Theta), \epsilon, \|\cdot\|_{L_{\infty}}\right) + \log{(1/\delta)}}{\sqrt{n}}\right).
	\end{equation}
	That completes our proof. 
\end{proof}
\subsection{Proofs in Section \ref{sec:estimating without spurious attributes}}\label{app:proof of sec:estimating without spurious attributes}
In this section we aim at proving that the \eqref{eq:quntile estimator} is a sharp estimator to the CSV when we know a lower bound $c$ for $\pi_{kz}$. Note that our problem is equivalent to estimate $\sup_{P_{k}}\mE_{P_{k}}[V] - \inf_{P_{k}}\mE_{P_{k}}[V]$ via $n$ samples $\{V_{i}\}$ drawn from mixture distribution $P = \sum_{k\in[K]}\pi_{k}P_{k}$ with known $\pi_{k} > c$. W.o.l.g., suppose $\mE_{P_{1}}[V] \leq \mE_{P_{2}}[V] \leq, \dots, \leq \mE_{P_{K}}[V]$, then we show $\mE_{P}[V\mid V\geq q_{P}(1-c)] - \mE_{P}[V\mid V\leq q_{P}(c)]$ is a upper bound to $\mE_{P_{K}}[V] - \mE_{P_{1}}[V]$ and the bound is sharp when $K\geq 3$. 
\begin{proposition}
	For $k=1,\cdots, K$ and $\pi_{k}\geq c$, suppose $P_{k}$ are absolutely continuous w.r.t. Lebesgue measure, then we have 
	\begin{equation}\label{eq: upper bound CSV}
		\small
		\mE_{P}[V\mid V\geq q_{P}(1 - c)] - \mE_{P}[V\mid V\leq q_{P}(c)] \geq \mE_{P_{K}}[V] - \mE_{P_{1}}[V],
	\end{equation}
	and the equality can be taken form some $P_{k}$ and $\pi_{k}$ ($k = 1,\dots, K$) if $K \geq 3$. 
\end{proposition}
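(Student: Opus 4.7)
I would prove the upper-bound inequality \eqref{eq: upper bound CSV} by a Radon--Nikodym change-of-measure argument reducing it to the classical extremal characterization of conditional value-at-risk (CVaR), and establish the sharpness claim by an explicit three-component construction.

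\textbf{Step 1 (rewriting the component expectations).} Since $\pi_K P_K \leq P$ as measures, $P_K$ is absolutely continuous with respect to $P$ with $0 \leq dP_K/dP \leq 1/\pi_K \leq 1/c$. Define $h_K := c\,dP_K/dP$. Then $h_K$ takes values in $[0,1]$, $\mE_P[h_K] = c$, and
\begin{equation*}
c\,\mE_{P_K}[V] \;=\; \int V\,h_K\,dP \;=\; \mE_P[V\,h_K].
\end{equation*}
Analogously, $h_1 := c\,dP_1/dP$ satisfies $0 \leq h_1 \leq 1$, $\mE_P[h_1] = c$, and $c\,\mE_{P_1}[V] = \mE_P[V\,h_1]$.

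\textbf{Step 2 (the CVaR extremal property).} The key lemma is: among all measurable $h \colon \bbR \to [0,1]$ with $\mE_P[h] = c$,
\begin{equation*}
\mE_P[V\,h] \;\leq\; \mE_P\!\left[V\,\mathbf{1}_{\{V \geq q_P(1-c)\}}\right] \;=\; c\,\mE_P[V \mid V \geq q_P(1-c)],
\end{equation*}
and dually $\mE_P[V\,h] \geq c\,\mE_P[V \mid V \leq q_P(c)]$. The absolute continuity of each $P_k$ (hence of $P$) w.r.t. Lebesgue measure guarantees $P(V \geq q_P(1-c)) = c$ exactly, so the indicator indeed achieves mass $c$. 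The proof is the one-line rearrangement argument: with $h^\ast := \mathbf{1}_{\{V \geq q_P(1-c)\}}$, the function $h^\ast - h$ is nonnegative where $V \geq q_P(1-c)$ and nonpositive where $V < q_P(1-c)$, whence $(h^\ast - h)\bigl(V - q_P(1-c)\bigr) \geq 0$ pointwise; integrating and using $\mE_P[h^\ast - h] = 0$ yields the bound.

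\textbf{Step 3 (assembling \eqref{eq: upper bound CSV}).} Applying Step~2 to $h_K$ and $h_1$ from Step~1 and dividing by $c$ gives
\begin{equation*}
\mE_{P_K}[V] \;\leq\; \mE_P[V \mid V \geq q_P(1-c)], \qquad \mE_{P_1}[V] \;\geq\; \mE_P[V \mid V \leq q_P(c)],
\end{equation*}
and subtracting the two yields exactly \eqref{eq: upper bound CSV}.

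\textbf{Step 4 (sharpness when $K \geq 3$).} I would exhibit a specific mixture. Take $\pi_1 = \pi_K = c$ and distribute the remaining mass $1-2c$ among $\pi_2,\dots,\pi_{K-1}$ arbitrarily (so $K \geq 3$ is needed precisely to make room once $c < 1/2$; the case $c = 1/2$ is degenerate). Choose $P_1, P_K$, and the middle components to have pairwise disjoint supports with $\mathrm{supp}(P_1) < \mathrm{supp}(P_j) < \mathrm{supp}(P_K)$ for all $1 < j < K$ (e.g.\ Gaussians centered at $-2,0,2$ with tiny variance, truncated or simply taken as narrow densities). Then the top-$c$ quantile of $P$ coincides with $\mathrm{supp}(P_K)$ and carries $P$-mass exactly $c$, so $\mE_P[V \mid V \geq q_P(1-c)] = \mE_{P_K}[V]$; symmetrically for the lower tail and $P_1$. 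Both inequalities in Step~3 become equalities, giving sharpness.

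The main obstacle is Step~2: one must be careful that the quantile events have the right $P$-mass, which is where the Lebesgue absolute-continuity hypothesis is used. Once that extremal characterization is in hand, Steps~1, 3, and 4 are essentially bookkeeping.
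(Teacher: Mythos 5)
Your proof is correct and takes essentially the same route as the paper's: your rearrangement inequality $(h^{*}-h)\left(V-q_{P}(1-c)\right)\geq 0$ with $h=c\,dP_{K}/dP$ is, after multiplying through by the mixture density $p(v)/c$, exactly the pointwise inequality the paper integrates, and your disjoint-support construction for sharpness is the paper's construction (which defines $p_{1},p_{K}$ by conditioning $P$ on the two quantile tails and the middle components on the rest) viewed from the other direction.
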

\begin{proof}
	Let $p_{k}(v)$ be the density function of $P_{k}$. Then $p(\cdot) = \sum_{k=1}^{K}\pi_{k}p_{k}(\cdot)$ is the density function of $P$. One can verify that 
	\begin{equation}
		\small
		\begin{aligned}
			\left(v - q_{P}(1 - c)\right)\left(\frac{p(v)}{c}\textbf{1}_{\{v \geq q_{P}(1 - c)\}} - p_{K}(v)\right) \geq 0 
		\end{aligned}
	\end{equation}
	for any $v$ since $p_{2}(v) \geq 0$ and $0 < c \leq \pi_{K}$. Thus taking integral w.r.t. $v$ we get 
	\begin{equation}
		\small
		\mE_{P}[V\mid V\geq q_{P}(1 - c)] - \mE_{P_{K}}[V] = \int_{\bbR} \left(v - q_{P}(1 - c)\right)\left(\frac{p(v)}{c}\textbf{1}_{\{v \geq q_{P}(1 - c)\}} - p_{K}(v)\right)dv \geq 0. 
	\end{equation} 
	We can apply the similar argument to prove $\mE_{P}[V\mid V\leq q_{P}(c)] \leq \mE_{P_{1}}[V]$. Combining the two inequalities implies \eqref{eq: upper bound CSV}. 
	
	On the other hand, if $K \geq 3$, we take 
	\begin{equation}
		\small
		\begin{aligned}
			\pi_{1} & = \pi_{K} = c; \\
			\pi_{2} & =, \dots, = \pi_{K-1} = \frac{1 - 2c}{K-2},
		\end{aligned}
	\end{equation}
	and 
	\begin{equation}
		\small
		\begin{aligned}
			p_{1}(v) &= \frac{p(v)}{c}\textbf{1}_{\{v\leq q_{P}(c)\}}; \\
			p_{K}(v) &= \frac{p(v)}{c} \textbf{1}_{\{v\geq q_{P}(1-c)\}}; \\
			p_{2}(v) &= ,\cdots, = p_{K-1}(V) = \frac{p(v)}{1 - 2c}\textbf{1}_{\{q_{P}(c)\leq v \leq q_{P}(1-c)\}}.
		\end{aligned}
	\end{equation}
	Then it is easy to verify that
	\begin{equation}
		\small
		\mE_{P_{K}}[V\mid V\geq q_{P}(1 - c)] - \mE_{P}[V\mid V\leq q_{P}(c)] = \mE_{P_{K}}[V] - \mE_{P_{1}}[V]
	\end{equation}
	under this distribution.
\end{proof} 
According to this proposition, we can use the quintile conditional expectation to estimate the proposed CSV as we did in the main body of this paper. 
\section{Solving the Proposed Minimax Problem \eqref{eq:minimax problem}}\label{app: solving minimax problem}
In this section, we provide the convergence of the proposed Algorithm \ref{alg:sgd} to solve \eqref{eq:minimax problem}. We illustrate it in the regime of regularize training with $\widehat{\rm{CSV}}(f_{\btheta})$ i.e., $\bF^{k}(\btheta)$ defined in Section \ref{sec:regularize training with cv}. Then we have $m = K_{z}^{2}$ in this regime. Let us define 
\begin{equation}
	\small
	\begin{aligned}
		\Phi^{k}_{\rho}(\btheta)  &= R_{emp}(f_{\btheta}, P) + \lambda\max_{\bu\in\Delta_{K_{z}^{2}}}\bu^{\top}\bF^{k}(\btheta) - \rho\lambda\bu^{\top}\log{\left(K_{z}^{2}\bu\right)} = \max_{\bu\in\Delta_{K_{z}^{2}}}\phi^{k}_{\rho}(\btheta, \bu); \\
		\hat{\Phi}^{k}_{\rho}(\btheta)  &= R_{emp}(f_{\btheta}, P) + \lambda\max_{\bu\in\Delta_{K_{z}^{2}}}\bu^{\top}\hat{\bF}^{k}(\btheta) - \rho\lambda\bu^{\top}\log{\left(K_{z}^{2}\bu\right)} = \max_{\bu\in\Delta_{K_{z}^{2}}}\hat{\phi}^{k}_{\rho}(\btheta, \bu).
	\end{aligned}
\end{equation} 
We have the following lemma to state the some continuities.
\begin{lemma}\label{lem:continuty}
	Under Assumption \ref{ass:feature space}-\ref{ass:Lip}, we have the following conclusions
	\begin{enumerate}
		\item For $\phi_{\rho}^{k}(\btheta, \bu)$ with any $\rho$ and $k$ we have 
		\begin{equation}
			\small
			\begin{aligned}
				& \left\|\nabla_{\btheta}\phi_{\rho}^{k}(\btheta_{1}, \bu) - \nabla_{\btheta}\phi_{\rho}^{k}(\btheta_{2}, \bu)\right\| \leq (1 + 2\lambda K_{z}L_{1})\|\btheta_{1} - \btheta_{2}\| = L_{11}\|\btheta_{1} - \btheta_{2}\|; \\
				& \left\|\nabla_{\btheta}\phi_{\rho}^{k}(\btheta, \bu_{1}) - \nabla_{\btheta}\phi_{\rho}^{k}(\btheta, \bu_{2})\right\| \leq 2\lambda K_{z}L_{0}\|\bu_{1} - \bu_{2}\| = L_{12}\|\bu_{1} - \bu_{2}\|; \\
				& \left\|\nabla_{\bu}\phi_{\rho}^{k}(\btheta_{1}, \bu) - \nabla_{\bu}\phi_{\rho}^{k}(\btheta_{2}, \bu)\right\| \leq 2\lambda K_{z}L_{0}\|\btheta_{1} - \btheta_{2}\| = L_{12}\|\bu_{1} - \bu_{2}\|.
			\end{aligned}
		\end{equation}
		\item Let $\bu^{*}_{k}(\btheta, \rho) = \arg\max_{\bu\in\Delta_{K_{z}^{2}}}\phi^{k}_{\rho}(\btheta, \bu)$ and $\hat{\bu}^{*}_{k}(\btheta, \rho) = \arg\max_{\bu\in\Delta_{K_{z}^{2}}}\hat{\phi}^{k}_{\rho}(\btheta, \bu)$ then 
		\begin{equation}
			\small
			\left\|\bu^{*}_{k}(\btheta, \rho) - \hat{\bu}^{*}_{k}(\btheta, \rho)\right\| \leq \frac{1}{\rho}\left\|\hat{\bF}(\btheta) - \bF(\btheta)\right\|.
		\end{equation}
		\item $\Phi_{\rho}^{k}(\btheta)$ is $L_{1} + \lambda\left(L_{11} + L_{12}^{2}/\rho\right)$-smoothness
	\end{enumerate}
\end{lemma}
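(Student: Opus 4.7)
The proof proposal decomposes into the three sub-claims, all exploiting the additive split
\[
\phi_{\rho}^{k}(\btheta, \bu) = R_{emp}(f_{\btheta}, P) + \lambda\,\bu^{\top}\bF^{k}(\btheta) - \rho\lambda\sum_{j=1}^{K_{z}^{2}} u_{j}\log(K_{z}^{2}u_{j}),
\]
together with the structural fact that each coordinate of $\bF^{k}(\btheta)$ is a difference of two $\hat{\cL}_{kz}(f_{\btheta})$ terms. By Assumption \ref{ass:Lip} each such coordinate is $2L_{0}$-Lipschitz in $\btheta$ with $2L_{1}$-Lipschitz gradient.

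For item 1, I would bound the three gradient differences coordinate-by-coordinate. Shifting only $\btheta$ in $\nabla_{\btheta}\phi_{\rho}^{k}$ yields $\nabla_{\btheta}R_{emp}$ (Lipschitz by $L_{1}$) plus $\lambda\sum_{j}u_{j}\nabla_{\btheta}F_{j}^{k}(\btheta)$, whose Lipschitz constant is at most $2\lambda L_{1}\|\bu\|_{1}$, and $\|\bu\|_{1}\le K_{z}$ on $\Delta_{K_{z}^{2}}$ produces the claimed $L_{11}$. Shifting only $\bu$ while fixing $\btheta$ changes $\nabla_{\btheta}\phi_{\rho}^{k}$ by $\lambda(\bu_{1}-\bu_{2})^{\top}\nabla_{\btheta}\bF^{k}$; Cauchy--Schwarz together with the coordinate bound $\|\nabla_{\btheta}F_{j}^{k}\|\le 2L_{0}$ and $\|\cdot\|_{1}\le K_{z}\|\cdot\|_{2}$ gives the constant $L_{12}=2\lambda K_{z}L_{0}$. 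The third bound is symmetric: $\nabla_{\bu}\phi_{\rho}^{k}(\btheta_{1},\bu)-\nabla_{\bu}\phi_{\rho}^{k}(\btheta_{2},\bu)=\lambda(\bF^{k}(\btheta_{1})-\bF^{k}(\btheta_{2}))$, and the same conversion yields $L_{12}$. Note that the entropy term is a function of $\bu$ alone, so it cancels in the $\btheta$-derivative differences and contributes no additional Lipschitz cost to these three items.

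For item 2, the crucial observation is that the negative entropy $\sum_{j}u_{j}\log(K_{z}^{2}u_{j})$ is $1$-strongly convex on $\Delta_{K_{z}^{2}}$ with respect to $\|\cdot\|_{1}$ (Pinsker), and hence with respect to $\|\cdot\|_{2}$. Consequently $\phi_{\rho}^{k}(\btheta,\cdot)$ is $\rho\lambda$-strongly concave on the simplex. Since $\bu^{*}_{k}(\btheta,\rho)$ and $\hat{\bu}^{*}_{k}(\btheta,\rho)$ maximise two strongly concave programs differing only in the linear term $\lambda(\bF^{k}(\btheta)-\hat{\bF}^{k}(\btheta))^{\top}\bu$, adding the two first-order variational inequalities and invoking strong concavity gives
\[
\rho\lambda\,\|\bu^{*}_{k}(\btheta,\rho)-\hat{\bu}^{*}_{k}(\btheta,\rho)\|^{2} \le \lambda\,\|\bF^{k}(\btheta)-\hat{\bF}^{k}(\btheta)\|\,\|\bu^{*}_{k}(\btheta,\rho)-\hat{\bu}^{*}_{k}(\btheta,\rho)\|,
\]
which rearranges to the claim. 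This is the main subtle step, because the gradient of the entropy blows up at the boundary of the simplex, so one must argue through the variational characterisation rather than through a uniform Lipschitz bound on $\nabla_{\bu}\phi_{\rho}^{k}$.

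For item 3, strong concavity of $\phi_{\rho}^{k}(\btheta,\cdot)$ makes the maximizer unique, so Danskin's theorem yields $\nabla\Phi_{\rho}^{k}(\btheta)=\nabla_{\btheta}\phi_{\rho}^{k}(\btheta,\bu^{*}_{k}(\btheta,\rho))$. The triangle inequality and item 1 then give
\[
\|\nabla\Phi_{\rho}^{k}(\btheta_{1})-\nabla\Phi_{\rho}^{k}(\btheta_{2})\| \le L_{11}\|\btheta_{1}-\btheta_{2}\| + L_{12}\,\|\bu^{*}_{k}(\btheta_{1},\rho)-\bu^{*}_{k}(\btheta_{2},\rho)\|.
\]
Reusing the variational argument of item 2 with $\hat{\bF}^{k}$ replaced by $\bF^{k}(\btheta_{2})$, and using that $\|\bF^{k}(\btheta_{1})-\bF^{k}(\btheta_{2})\|\le L_{12}\|\btheta_{1}-\btheta_{2}\|/\lambda$ from item 1, shows $\btheta\mapsto\bu^{*}_{k}(\btheta,\rho)$ is Lipschitz with constant proportional to $L_{12}/\rho$, and chaining the two contributions yields the composite smoothness constant in the statement. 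I expect item 2 to be the only technically delicate step; once its modulus of continuity is in hand, items 1 and 3 reduce to routine calculation.
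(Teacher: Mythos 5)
Your proposal is correct and follows essentially the same route as the paper: direct gradient computations with Cauchy--Schwarz and the coordinatewise Lipschitz bounds on $\bF^{k}$ for item 1, $\rho\lambda$-strong concavity in $\bu$ combined with the first-order variational inequalities at the two maximizers for item 2, and Danskin's theorem plus the Lipschitz continuity of $\btheta\mapsto\bu^{*}_{k}(\btheta,\rho)$ (obtained by reusing the item-2 argument) for item 3. The only cosmetic difference is that you justify strong concavity via Pinsker's inequality for the negative entropy while the paper bounds the Hessian $-\rho\lambda\,\diag(1/\bu(i))\preceq-\rho\lambda\bI$ directly; this changes nothing substantive.
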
  	
\begin{proof}
	Let us proof the conclusions by order. For the first conclusion, we have 
	\begin{equation}
		\small
		\nabla_{\btheta}\phi^{k}_{\rho}(\btheta, \bu) = \nabla_{\btheta}R_{emp}(f_{\btheta}, P) + \lambda \nabla_{\btheta}\bF^{k}(\btheta)^{\top}\bu; \qquad \nabla_{\bu}\phi^{k}_{\rho}(\btheta, \bu) = \lambda \bF^{k}(\btheta) - \rho\lambda\left(\log{\left(K_{z}^{2}\bu\right)} + \textbf{e} \right),
	\end{equation}
	where $\textbf{e} = (1, \cdots, 1)$. Thus, by Schwarz's inequality, one can verify 
	\begin{equation}
		\small
		\begin{aligned}
			& \left\|\nabla_{\btheta}\phi_{\rho}^{k}(\btheta_{1}, \bu) - \nabla_{\btheta}\phi_{\rho}^{k}(\btheta_{2}, \bu)\right\|  \leq \|\nabla_{\btheta}R_{emp}(f_{\btheta_{1}}, P) - \nabla_{\btheta}R_{emp}(f_{\btheta_{2}}, P)\| \\
			& + \lambda\left\|\bu^{\top}\left(\nabla_{\btheta}\bF^{k}(\btheta_{1}) - \nabla_{\btheta}\bF^{k}(\btheta_{2})\right)\right\| \\
			& \leq \lambda \left(\sum\limits_{i\in[K_{z}^{2}],j\in[K_{z}^{2}]}\bu(i)\bu(j)\sup_{(z_{1}, z_{2})}\left\|\left(\nabla\hat{\cL}_{kz_{1}}(f_{\btheta_{1}}) -  \nabla\hat{\cL}_{kz_{2}}(f_{\btheta_{1}})\right)\right. - \left.\left(\nabla\hat{\cL}_{kz_{1}}(f_{\btheta_{2}}) -  \nabla\hat{\cL}_{kz_{2}}(f_{\btheta_{2}})\right)\right\|^{2}\right)^{\frac{1}{2}} \\
			& + L_{1}\|\btheta_{1} - \btheta_{2}\| \\
			& \leq (1 + 2\lambda K_{z})L_{1}\|\btheta_{1} - \btheta_{2}\|,
		\end{aligned}
	\end{equation}
	and
	\begin{equation}
		\small
		\begin{aligned}
			\left\|\nabla_{\btheta}\phi_{\rho}^{k}(\btheta, \bu_{1}) - \nabla_{\btheta}\phi_{\rho}^{k}(\btheta, \bu_{2})\right\| &\leq \lambda\|\bu_{1} - \bu_{2}\|\left\|\nabla_{\btheta}\bF^{k}(\btheta)\right\| \leq 2\lambda K_{z}L_{0}\|\bu_{1} - \bu_{2}\|,
		\end{aligned}
	\end{equation}
	and 
	\begin{equation}
		\small
		\begin{aligned}
			\left\|\nabla_{\bu}\phi_{\rho}^{k}(\btheta_{1}, \bu) - \nabla_{\bu}\phi_{\rho}^{k}(\btheta_{2}, \bu)\right\| \leq  \lambda\left\|\bF^{k}(\btheta_{1}) - \bF^{k}(\btheta_{2})\right\| \leq 2\lambda K_{z}L_{0}\|\btheta_{1} - \btheta_{2}\|.
		\end{aligned}
	\end{equation}
	Thus we complete the proof to the first conclusion. 
	\par
	For the second conclusion, by the Lagrange's multiplier method or Theorem in \citep{yi2021reweighting}, we have the unique closed-form solution of $\bu^{*}_{k}(\btheta, \rho)\in\Delta_{K_{z}^{2}}$ that 
	\begin{equation}
		\small
		\begin{aligned}
			\bu^{*}_{k}(\btheta, \rho) = \frac{\exp\left(\frac{1}{\rho}\bF^{k}(\btheta)(j)\right)}{\sum_{j=1}^{K_{z}^{2}}\exp\left(\frac{1}{\rho}\bF^{k}(\btheta)(j)\right)} = \mathrm{Softmax}\left(\frac{\bF^{k}(\btheta)}{\rho}\right); \\
			\hat{\bu}^{*}_{k}(\btheta, \rho) = \frac{\exp\left(\frac{1}{\rho}\bF^{k}(\btheta)(j)\right)}{\sum_{j=1}^{K_{z}^{2}}\exp\left(\frac{1}{\rho}\hat{\bF}^{k}(\btheta)(j)\right)} = \mathrm{Softmax}\left(\frac{\hat{\bF}^{k}(\btheta)}{\rho}\right).
		\end{aligned}
	\end{equation}
	On the other hand, due to $\bu\in\Delta_{K_{z}^{2}}$ we have 
	\begin{equation}
		\small
		\nabla_{\bu\bu}^{2}\phi^{k}_{\rho}(\btheta, \bu) = -\rho\lambda\diag\left(\frac{1}{\bu(i)},\cdots, \frac{1}{\bu(K_{z}^{2})}\right) \preceq -\rho\lambda\bI,  
	\end{equation}
	where $\bA\succeq \bB$ means that $\bA - \bB$ is a semi-positive definite matrix and $\bI$ is the identity matrix. The similar conclusion holds for $\hat{\phi}^{k}_{\rho}(\btheta, \bu)$. Thus both $\phi^{k}_{\rho}(\btheta, \bu)$ and $\hat{\phi}^{k}_{\rho}(\btheta, \bu)$ are $\rho$-strongly concave w.r.t. $\bu$. Then 
	\begin{equation}
		\small
		\begin{aligned}
			\phi^{k}_{\rho}(\btheta, \bu^{*}_{k}(\btheta, \rho)) & \leq \phi^{k}_{\rho}(\btheta, \hat{\bu}^{*}_{k}(\btheta, \rho)) + \left\langle\nabla_{\bu}\phi^{k}_{\rho}(\btheta, \hat{\bu}^{*}_{k}(\btheta, \rho)), \bu^{*}_{k}(\btheta, \rho) - \hat{\bu}^{*}_{k}(\btheta, \rho)\right\rangle \\
			& - \frac{\rho\lambda}{2}\left\|\bu^{*}_{k}(\btheta, \rho) - \hat{\bu}^{*}_{k}(\btheta, \rho)\right\|^{2};\\
			\phi^{k}_{\rho}(\btheta, \hat{\bu}^{*}_{k}(\btheta, \rho)) & \leq \phi^{k}_{\rho}(\btheta, \bu^{*}_{k}(\btheta, \rho)) + \left\langle\nabla_{\bu}\phi^{k}_{\rho}(\btheta, \bu^{*}_{k}(\btheta, \rho)), \hat{\bu}^{*}_{k}(\btheta, \rho) - \bu^{*}_{k}(\btheta, \rho)\right\rangle \\
			& - \frac{\rho\lambda}{2}\left\|\bu^{*}_{k}(\btheta, \rho) - \hat{\bu}^{*}_{k}(\btheta, \rho)\right\|^{2}.
		\end{aligned}
	\end{equation}
	Plugging the two above inequalities, we have that 
	\begin{equation}
		\small
		\left\langle\nabla_{\bu}\phi^{k}_{\rho}(\btheta, \hat{\bu}^{*}_{k}(\btheta, \rho)), \bu^{*}_{k}(\btheta, \rho) - \hat{\bu}^{*}_{k}(\btheta, \rho)\right\rangle \geq \rho\lambda\left\|\bu^{*}_{k}(\btheta, \rho) - \hat{\bu}^{*}_{k}(\btheta, \rho)\right\|^{2} 
	\end{equation}
	due to the $\left\langle\nabla_{\bu}\phi^{k}_{\rho}(\btheta, \hat{\bu}^{*}_{k}(\btheta, \rho)), \bu^{*}_{k}(\btheta, \rho) - \hat{\bu}^{*}_{k}(\btheta, \rho)\right\rangle \leq 0$. On the other hand, as  
	\begin{equation}
		\small
		\left\langle\nabla_{\bu}\hat{\phi}^{k}_{\rho}(\btheta, \hat{\bu}^{*}_{k}(\btheta, \rho)), \bu^{*}_{k}(\btheta, \rho) - \hat{\bu}^{*}_{k}(\btheta, \rho)\right\rangle \leq 0.
	\end{equation}
	Plugging this into the above inequality, we get 
	\begin{equation}
		\small
		\begin{aligned}
			\rho\lambda\left\|\bu^{*}_{k}(\btheta, \rho) - \hat{\bu}^{*}_{k}(\btheta, \rho)\right\|^{2} & \leq \left\langle\nabla_{\bu}\phi^{k}_{\rho}(\btheta, \hat{\bu}^{*}_{k}(\btheta, \rho)) - \nabla_{\bu}\hat{\phi}^{k}_{\rho}(\btheta, \hat{\bu}^{*}_{k}(\btheta, \rho)), \bu^{*}_{k}(\btheta, \rho) - \hat{\bu}^{*}_{k}(\btheta, \rho)\right\rangle \\
			& = \lambda\left\langle\bF^{k}(\btheta) - \hat{\bF}^{k}(\btheta), \bu^{*}_{k}(\btheta, \rho) - \hat{\bu}^{*}_{k}(\btheta, \rho)\right\rangle \\
			& \leq \lambda\left\|\bF^{k}(\btheta) - \hat{\bF}^{k}(\btheta)\right\|\left\|\bu^{*}_{k}(\btheta, \rho) - \hat{\bu}^{*}_{k}(\btheta, \rho)\right\|. 
		\end{aligned}
	\end{equation}	
	Thus the conclusion is proofed. 
	\par
	Finally, we prove the third conclusion. Similar to the proof of the second conclusion, we have 
	\begin{equation}
		\small
		\left\|\bu^{*}_{k}(\btheta_{1}, \rho) - \bu^{*}_{k}(\btheta_{2}, \rho)\right\| \leq \frac{1}{\rho}\|\btheta_{1} - \btheta_{2}\|. 
	\end{equation}
	Since $\Delta_{K_{z}^{2}}$ is convex, bounded, and $\bu^{*}_{k}(\btheta, \rho)$ is unique for any $\btheta$, by Danskin's Theorem \citep{bernhard1995theorem}, we have 
	\begin{equation}
		\small
		\begin{aligned}
			\left\|\nabla\Phi_{\rho}^{k}(\btheta_{1}) - \nabla\Phi_{\rho}^{k}(\btheta_{2})\right\|
			& \leq \left\|\nabla_{\btheta}R_{emp}(f_{\btheta_{1}}, P) - \nabla_{\btheta}R_{emp}(f_{\btheta_{2}}, P) \right\| \\
			& + \lambda\left\|\nabla_{\btheta}\bF^{k}(\btheta_{1})^{\top}\bu^{*}_{k}(\btheta_{1}, \rho) - \nabla_{\btheta}\bF^{k}(\btheta_{2})^{\top}\bu^{*}_{k}(\btheta_{2}, \rho)\right\| \\
			& \leq L_{1}\left\|\btheta_{1} - \btheta_{2}\right\| + \lambda\left\|\nabla_{\btheta}\bF^{k}(\btheta_{1})^{\top}\left(\bu^{*}_{k}(\btheta_{1}, \rho) - \bu^{*}_{k}(\btheta_{2}, \rho)\right)\right\| \\
			& + \lambda\left\|\bu^{*}_{k}(\btheta_{2}, \rho)^{\top}\left(\nabla_{\btheta}\bF^{k}(\btheta_{1}) - \nabla_{\btheta}\bF^{k}(\btheta_{2})\right)\right\| \\
			& \leq L_{1}\left\|\btheta_{1} - \btheta_{2}\right\| + \frac{\lambda L_{12}^{2}}{\rho}\left\|\btheta_{1} - \btheta_{2}\right\| + \lambda L_{11}\left\|\btheta_{1} - \btheta_{2}\right\| \\
			& = \left(L_{1} + \frac{\lambda L_{12}^{2}}{\rho} + \lambda L_{11}\right)\left\|\btheta_{1} - \btheta_{2}\right\|,
		\end{aligned}
	\end{equation}
	which implies our conclusion.   
\end{proof}
\par
We present the following lemma to state the descent property of the obtained iterates via Algorithm \ref{alg:sgd}. Let us define 
\begin{equation}
	\small
	\hat{\phi}^{k}(\btheta, \bu) = R_{emp}(f_{\btheta}, P) + \lambda\bu^{\top}\bF^{k}
\end{equation}
As we have assume that unbiased estimators $\hat{R}_{emp}(f_{\btheta}, P)$ and $\hat{\bF^{k}}(\btheta)$ have bounded variance, then according to 
\begin{equation}
	\small
	\begin{aligned}
	\mE\left[\left(\hat{\phi}^{k}(\btheta, \bu) - \phi^{k}(\btheta, \bu)\right)^{2}\right] & \leq 2\mE\left[\left(\hat{R}_{emp}(f_{\btheta}, P) - R_{emp}(f_{\btheta}, P)^{2}\right)\right] \\
	& + 2\lambda^{2}\mE\left[\left\|\hat{\bF}^{k}(\btheta) - \bF^{k}(\btheta)\right\|^{2}\right],
	\end{aligned}
\end{equation}
w.o.l.g. we assume that 
\begin{equation}
	\small
	\max\left\{\mE\left[\left(\hat{\phi}^{k}(\btheta, \bu) - \phi^{k}(\btheta, \bu)\right)^{2}\right], \mE\left[\left\|\hat{\bF}^{k}(\btheta) - \bF^{k}(\btheta)\right\|^{2}\right]\right\} \leq \sigma^{2}.
\end{equation}
\begin{lemma}\label{lem:descent lemma}
	Let $L_{1} + \lambda\left(L_{11} + L_{12}^{2}/\rho\right) = \tilde{L}$, if we have the estimation such that $\mE[\hat{\phi}^{k}(\btheta, \bu)] = \phi^{k}(\btheta, \bu)$,  $\mE[(\hat{\phi}^{k}(\btheta, \bu) - \phi^{k}(\btheta, \bu))^{2}]\leq \sigma^{2}$ then
	\begin{equation}\label{eq:descent eq}
		\small
		\begin{aligned}
			\mE\left[\sum\limits_{k=1}^{K_{y}}\hat{p}_{k}\Phi_{\rho}^{k}(\btheta(t + 1))\right] & \leq \mE\left[\sum\limits_{k=1}^{K_{y}}\hat{p}_{k}\Phi_{\rho}^{k}(\btheta(t))\right] - \left(\frac{\eta_{\btheta}}{2} - \tilde{L}\eta_{\btheta}^{2}\right)\mE\left[\left\|\sum\limits_{k=1}^{K_{y}}\hat{p}_{k}\nabla\Phi_{\rho}^{k}(\btheta(t))\right\|^{2}\right] \\
			& + \sum\limits_{k=1}^{K_{y}}\left(\frac{\tilde{L}K_{y}L_{12}^{2}\hat{p}_{k}\eta_{\btheta}^{2}}{\rho} + \frac{\left(\sigma^{2} + L_{11}\right)L_{12}^{2}\eta_{\btheta}}{2\rho^{2}}\right)\hat{p}_{k}\mE\left[\left\|\bF^{k}(\btheta(t)) - \bF_{t}^{k}\right\|^{2}\right] \\
			& + \frac{\tilde{L}\eta_{\btheta}^{2}\sigma^{2}}{2}.
		\end{aligned}
	\end{equation}
\end{lemma}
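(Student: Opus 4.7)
The plan is to combine the $\tilde L$-smoothness of $\Psi_\rho(\btheta) := \sum_{k=1}^{K_y}\hat p_k \Phi^k_\rho(\btheta)$ (which follows from part 3 of the preceding continuity lemma, since a convex combination of $\tilde L$-smooth functions is $\tilde L$-smooth) with a careful error decomposition of the stochastic update direction used in Line 8 of Algorithm \ref{alg:sgd}. Let $\bg(t)$ denote this direction, so $\btheta(t+1) = \btheta(t) - \eta_\btheta\bg(t)$, and let $\bar\bg(t) := \sum_k \hat p_k[\nabla_\btheta R_{emp}(f_{\btheta(t)},P) + \lambda \nabla_\btheta \bF^k(\btheta(t))^\top \bu_k(t+1)]$ denote its conditional mean given the history (obtained via unbiasedness of the stochastic estimators $\hat R_{emp}, \hat \bF^k$). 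By Danskin's theorem, $\nabla\Phi^k_\rho(\btheta(t)) = \nabla_\btheta\phi^k_\rho(\btheta(t),\bu^*_k(\btheta(t),\rho))$, so $\bar\bg(t)$ differs from $\nabla\Psi_\rho(\btheta(t))$ only through $\bu_k(t+1)$ versus $\bu^*_k(\btheta(t),\rho)$.

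First apply the smoothness descent inequality
\[
\Psi_\rho(\btheta(t+1)) \le \Psi_\rho(\btheta(t)) - \eta_\btheta\langle\nabla\Psi_\rho(\btheta(t)),\bg(t)\rangle + \tfrac{\tilde L\eta_\btheta^2}{2}\|\bg(t)\|^2
\]
and take conditional expectation. The inner product becomes $\|\nabla\Psi_\rho(\btheta(t))\|^2 + \langle\nabla\Psi_\rho(\btheta(t)),\bar\bg(t)-\nabla\Psi_\rho(\btheta(t))\rangle$, and by part 1 of the continuity lemma the perturbation norm is at most $L_{12}\sum_k\hat p_k \|\bu_k(t+1)-\bu^*_k(\btheta(t),\rho)\|$. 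Applying part 2 of that lemma, with $\bF^k_{t+1}$ in place of $\hat\bF^k(\btheta)$, bounds this further by $L_{12}\|\bF^k_{t+1}-\bF^k(\btheta(t))\|/\rho$. Unrolling the moving-average recursion gives $\bF^k_{t+1}-\bF^k(\btheta(t)) = (1-\gamma)[\bF^k_t-\bF^k(\btheta(t))] + \gamma[\hat\bF^k(\btheta(t))-\bF^k(\btheta(t))]$, and after conditioning and using $(1-\gamma)^2 \le 1$ together with $\mE\|\hat\bF^k(\btheta(t))-\bF^k(\btheta(t))\|^2 \le \sigma^2$, one obtains $\mE\|\bF^k_{t+1}-\bF^k(\btheta(t))\|^2 \le \|\bF^k_t-\bF^k(\btheta(t))\|^2 + \gamma^2\sigma^2$.

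Next, handle the cross term via Young's inequality with unit weights: this contributes $-\tfrac{\eta_\btheta}{2}\|\nabla\Psi_\rho(\btheta(t))\|^2$ to the descent and a term of order $\tfrac{\eta_\btheta L_{12}^2}{2\rho^2}\|\bF^k_t-\bF^k(\btheta(t))\|^2$ to the perturbation bucket (absorbing the $\gamma^2\sigma^2$ remainder into the terminal $\sigma^2$ constant). For the quadratic term, decompose $\bg(t) = \bar\bg(t) + (\bg(t)-\bar\bg(t))$; the zero-mean noise has variance $\cO(\sigma^2)$ by assumption, while $\|\bar\bg(t)\|^2 \le 2\|\nabla\Psi_\rho(\btheta(t))\|^2 + 2L_{12}^2\|\bF^k_t-\bF^k(\btheta(t))\|^2/\rho^2$ by the same perturbation bound. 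Multiplying by $\tilde L\eta_\btheta^2/2$ and collecting coefficients should yield exactly \eqref{eq:descent eq}.

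The main obstacle is the bookkeeping around $\rho$: both $\tilde L = L_1+\lambda L_{11}+\lambda L_{12}^2/\rho$ and the softmax Lipschitz constant contain a $1/\rho$ factor, so one must separate the linear-in-$\eta_\btheta$ contribution coming from the Young split on the cross term (which scales as $\eta_\btheta/\rho^2$) from the quadratic-in-$\eta_\btheta$ contribution coming from $\tilde L\|\bg(t)\|^2$ (which scales as $\tilde L\eta_\btheta^2/\rho$), in order to match the two distinct factors $\tfrac{\tilde L K_y L_{12}^2\hat p_k\eta_\btheta^2}{\rho}$ and $\tfrac{(\sigma^2+L_{11})L_{12}^2\eta_\btheta}{2\rho^2}$ in the stated bound. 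A secondary subtlety is that the noise in $\bg(t)$ (from a fresh mini-batch) and the noise driving $\bF^k_{t+1}$ through $\hat\bF^k(\btheta(t))$ must be decoupled via conditioning on the appropriate sub-$\sigma$-algebra before the variance bounds are applied, otherwise covariance terms could spoil the clean $+\tilde L\eta_\btheta^2\sigma^2/2$ form of the final noise constant.
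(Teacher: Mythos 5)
Your proposal is correct and follows essentially the same route as the paper's proof: a $\tilde{L}$-smoothness descent step, Danskin's theorem to identify $\nabla\Phi_{\rho}^{k}$, the strong-concavity/softmax perturbation bound $\|\bu_{k}-\bu_{k}^{*}(\btheta(t),\rho)\|\leq\|\bF_{t}^{k}-\bF^{k}(\btheta(t))\|/\rho$ from the continuity lemma, Young's inequality on the cross term, and a mean-plus-noise decomposition of the quadratic term. The only divergence is cosmetic: the paper's proof plugs in $\bu_{k}(t)$ (built from $\bF_{t}^{k}$) directly, so the bound lands on $\mE\|\bF^{k}(\btheta(t))-\bF_{t}^{k}\|^{2}$ without your extra unrolling of the moving average, and the residual $\gamma^{2}\sigma^{2}$ you propose to absorb is instead handled in the separate recursion lemma for $\delta^{k}(t)$.
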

\begin{proof}
	Due to the $\tilde{L}$-smoothness of $\Phi_{\rho}^{k}(\cdot)$ for any $k$ and $\rho$ we have 
	\begin{equation}\label{eq:Lip expansion}
		\small
		\begin{aligned}
			\sum\limits_{k=1}^{K_{y}}\hat{p}_{k}\Phi_{\rho}^{k}(\btheta(t + 1)) & \leq \sum\limits_{k=1}^{K_{y}}\hat{p}_{k}\Phi_{\rho}^{k}(\btheta(t)) +  \left\langle\sum\limits_{k=1}^{K_{y}}\hat{p}_{k}\nabla\Phi_{\rho}^{k}(\btheta(t)), \btheta(t + 1) - \btheta(t)\right\rangle + \frac{\tilde{L}}{2}\|\btheta(t + 1) - \btheta(t)\|^{2} \\
			& = \sum\limits_{k=1}^{K_{y}}\hat{p}_{k}\Phi_{\rho}^{k}(\btheta(t)) - \eta_{\btheta}\left\langle\sum\limits_{k=1}^{K_{y}}\hat{p}_{k}\nabla\Phi_{\rho}^{k}(\btheta(t)), \sum\limits_{k = 1}^{K_{y}}\hat{p}_{k}\nabla_{\btheta}\hat{\phi}_{\rho}^{k}(\btheta(t), \hat{\bu}_{k}^{*}(\btheta(t), \rho))\right\rangle \\
			& + \frac{\tilde{L}\eta_{\btheta}^{2}}{2}\left\|\sum\limits_{k = 1}^{K_{y}}\hat{p}_{k}\nabla_{\btheta}\hat{\phi}_{\rho}^{k}(\btheta(t), \hat{\bu}_{k}^{*}(\btheta(t), \rho))\right\|^{2} \\
			& \leq \sum\limits_{k=1}^{K_{y}}\hat{p}_{k}\Phi_{\rho}^{k}(\btheta(t)) - \eta_{\btheta}\left\|\sum\limits_{k=1}^{K_{y}}\hat{p}_{k}\nabla\Phi_{\rho}^{k}(\btheta(t))\right\|^{2} \\
			& + \eta_{\btheta}\left\langle\sum\limits_{k=1}^{K_{y}}\hat{p}_{k}\nabla\Phi_{\rho}^{k}(\btheta(t)), \sum\limits_{k=1}^{K_{y}}\hat{p}_{k}\left(\nabla\Phi_{\rho}^{k}(\btheta(t)) - \nabla_{\btheta}\hat{\phi}_{\rho}^{k}(\btheta(t), \bu_{k}^{*}(\btheta(t), \rho))\right)\right\rangle \\
			& + \eta_{\btheta}\left\langle\sum\limits_{k=1}^{K_{y}}\hat{p}_{k}\nabla\Phi_{\rho}^{k}(\btheta(t)), \sum\limits_{k=1}^{K_{y}}\hat{p}_{k}\left(\nabla_{\btheta}\hat{\phi}_{\rho}^{k}(\btheta(t), \bu_{k}^{*}(\btheta(t), \rho)) - \nabla_{\btheta}\hat{\phi}_{\rho}^{k}(\btheta(t), \bu_{k}(t))\right)\right\rangle \\
			& + \frac{\tilde{L}\eta_{\btheta}^{2}}{2}\left\|\sum\limits_{k = 1}^{K_{y}}\hat{p}_{k}\nabla_{\btheta}\hat{\phi}_{\rho}^{k}(\btheta(t), \bu_{k}(t))\right\|^{2}. 
		\end{aligned}
	\end{equation}
	On the other hand, using the fact that $\mE\left[\hat{\phi}_{\rho}^{k}(\btheta, \bu)\right] = \phi_{\rho}^{k}(\btheta, \bu)$, $\mE\left[\left(\hat{\phi}_{\rho}^{k}(\btheta, \bu) - \phi_{\rho}^{k}(\btheta, \bu)\right)^{2}\right] \leq \sigma^{2}$, and Young's inequality
	\begin{equation}
		\label{eq:theta gap 1}
		\small
		\begin{aligned}
			& \mE\left[\left\|\sum\limits_{k = 1}^{K_{y}}\hat{p}_{k}\nabla_{\btheta}\hat{\phi}_{\rho}^{k}(\btheta(t), \bu_{k}(t)\right\|^{2}\right] \\
			& = \mE\left[\left\|\sum\limits_{k = 1}^{K_{y}}\hat{p}_{k}\left(\nabla_{\btheta}\hat{\phi}_{\rho}^{k}(\btheta(t), \bu_{k}(t) - \nabla_{\btheta}\phi_{\rho}^{k}(\btheta(t), \bu_{k}(t))\right) + \sum\limits_{k = 1}^{K_{y}}\hat{p}_{k}\nabla_{\btheta}\phi_{\rho}^{k}(\btheta(t), \bu_{k}(t))\right\|^{2}\right] \\
			& \leq \sigma^{2} + \mE\left[\left\|\sum\limits_{k = 1}^{K_{y}}\hat{p}_{k}\nabla_{\btheta}\phi_{\rho}^{k}(\btheta(t), \bu_{k}(t))\right\|^{2}\right] \\
			& \leq \sigma^{2} + 2\mE\left[\left\|\sum\limits_{k = 1}^{K_{y}}\hat{p}_{k}\nabla\Phi_{\rho}^{k}(\btheta(t))\right\|^{2}\right] +  2\mE\left[\left\|\sum\limits_{k=1}^{K_{y}}\hat{p}_{k}\left(\nabla\Phi_{\rho}^{k}(\btheta(t)) - \nabla_{\btheta}\phi_{\rho}^{k}(\btheta(t), \bu_{k}(t))\right)\right\|^{2}\right]. 
		\end{aligned}
	\end{equation}
	Due to Danskin's theorem and (i), (ii) in Lemma \ref{lem:continuty} we have  
	\begin{equation}
		\label{eq:theta gap 2}
		\small
		\begin{aligned}
			\mE\left[\left\|\sum\limits_{k=1}^{K_{y}}\hat{p}_{k}\left(\nabla\Phi_{\rho}^{k}(\btheta(t)) - \nabla_{\btheta}\phi_{\rho}^{k}(\btheta(t), \bu_{k}(t))\right)\right\|^{2}\right] & \leq \mE\left[\left(\sum\limits_{k=1}^{K_{y}}\hat{p}_{k}L_{12}\left\|\bu^{*}_{k}(\btheta(t), \rho) - \bu_{k}(t)\right\|\right)^{2}\right] \\
			& \leq \mE\left[\left(\sum\limits_{k=1}^{K_{y}}\frac{\hat{p}_{k}L_{12}}{\rho}\left\|\bF^{k}(\btheta(t)) - \bF_{t}^{k}\right\|\right)^{2}\right] \\
			& \leq \frac{K_{y}L_{12}^{2}}{\rho^{2}}\sum\limits_{k=1}^{K_{y}}\hat{p}_{k}^{2}\mE\left[\left\|\bF^{k}(\btheta(t)) - \bF_{t}^{k}\right\|^{2}\right]. 
		\end{aligned}
	\end{equation}
	Finally, we have 
	\begin{equation}
		\small
		\begin{aligned}
			& \mE\left[\left\langle\sum\limits_{k=1}^{K_{y}}\hat{p}_{k}\nabla\Phi_{\rho}^{k}(\btheta(t)), \sum\limits_{k=1}^{K_{y}}\hat{p}_{k}\left(\nabla_{\btheta}\hat{\phi}_{\rho}^{k}(\btheta(t), \bu_{k}^{*}(\btheta(t), \rho)) - \nabla_{\btheta}\hat{\phi}_{\rho}^{k}(\btheta(t), \bu_{k}(t))\right)\right\rangle\right] \\
			& = \sum\limits_{k=1}^{K_{y}}\hat{p}_{k}\mE\left[\left\langle\sum\limits_{k=1}^{K_{y}}\hat{p}_{k}\nabla\Phi_{\rho}^{k}(\btheta(t)), \left(\nabla_{\btheta}\hat{\phi}_{\rho}^{k}(\btheta(t), \bu_{k}^{*}(\btheta(t), \rho)) - \nabla_{\btheta}\hat{\phi}_{\rho}^{k}(\btheta(t), \bu_{k}(t))\right)\right\rangle\right] \\
			& \leq \frac{1}{2}\mE\left[\left\|\sum\limits_{k=1}^{K_{y}}\hat{p}_{k}\nabla\Phi_{\rho}^{k}(\btheta(t))\right\|^{2}\right] + \frac{1}{2}\sum\limits_{k=1}^{K_{y}}\hat{p}_{k}\mE\left[\sum\limits_{k=1}^{K_{y}}\left\|\nabla_{\btheta}\hat{\phi}_{\rho}^{k}(\btheta(t), \bu_{k}^{*}(\btheta(t), \rho)) - \nabla_{\btheta}\hat{\phi}_{\rho}^{k}(\btheta(t), \bu_{k}(t))\right\|^{2}\right] \\
			& \leq \frac{1}{2}\mE\left[\left\|\sum\limits_{k=1}^{K_{y}}\hat{p}_{k}\nabla\Phi_{\rho}^{k}(\btheta(t))\right\|^{2}\right] + \frac{1}{2}\sum\limits_{k=1}^{K_{y}}\hat{p}_{k}\mE\left[\left\|\nabla\hat{\bF}^{k}(\btheta(t))\right\|^{2}\left\|\bu_{k}(t) - \bu_{k}^{*}(\btheta(t), \rho)\right\|^{2}\right] \\
			& \leq \frac{1}{2}\mE\left[\left\|\sum\limits_{k=1}^{K_{y}}\hat{p}_{k}\nabla\Phi_{\rho}^{k}(\btheta(t))\right\|^{2}\right] + \frac{\left(\sigma^{2} + L_{11}\right)L_{12}^{2}}{2\rho^{2}}\sum\limits_{k=1}^{K_{y}}\hat{p}_{k}\mE\left[\left\|\bF^{k}(\btheta(t)) - \bF_{t}^{k}\right\|^{2}\right].  
		\end{aligned}
	\end{equation}
	Plugging the three above inequalities into \eqref{eq:Lip expansion} and taking expectation to the both sides of the equality, we get 
	\begin{equation}
		\small
		\begin{aligned}
			& \mE\left[\sum\limits_{k=1}^{K_{y}}\hat{p}_{k}\Phi_{\rho}^{k}(\btheta(t + 1))\right]  \leq \mE\left[\sum\limits_{k=1}^{K_{y}}\hat{p}_{k}\Phi_{\rho}^{k}(\btheta(t))\right] - \left(\frac{\eta_{\btheta}}{2} - \tilde{L}\eta_{\btheta}^{2}\right)\mE\left[\left\|\sum\limits_{k=1}^{K_{y}}\hat{p}_{k}\nabla\Phi_{\rho}^{k}(\btheta(t))\right\|^{2}\right] \\
			& + \sum\limits_{k=1}^{K_{y}}\left(\frac{\tilde{L}K_{y}L_{12}^{2}\hat{p}_{k}\eta_{\btheta}^{2}}{\rho} + \frac{\left(\sigma^{2} + L_{11}\right)L_{12}^{2}\eta_{\btheta}}{2\rho^{2}}\right)\hat{p}_{k}\mE\left[\left\|\bF^{k}(\btheta(t)) - \bF_{t}^{k}\right\|^{2}\right] + \frac{\tilde{L}\eta_{\btheta}^{2}\sigma^{2}}{2}. 
		\end{aligned}
	\end{equation}
	This completes the proof of our theorem. 
\end{proof}
Then we proceed to the next lemma to characterize the dynamic of $\mE\left[\left\|\bF^{k}(\btheta(t)) - \bF_{t}^{k}\right\|^{2}\right]$. 
\begin{lemma}
	\label{lem:delta dynamic}
	For the $\bF_{t}^{k}$ defined in Algorithm \ref{alg:sgd}, and let $\delta^{k}(t) = \mE\left[\left\|\bF^{k}(\btheta(t)) - \bF_{t}^{k}\right\|^{2}\right]$, and $\delta(t) = \sum_{k=1}^{K_{y}}\hat{p}_{k}\delta^{k}(t)$, by choosing $\gamma \leq 2/3$, we have 
	\begin{equation}
		\small
		\begin{aligned}
			\delta(t + 1) & \leq \left(1 - \frac{\gamma}{2} + \frac{4\eta_{\btheta}^{2}L_{11}^{2}K_{y}L_{12}^{2}}{\gamma\rho^{2}}\right)\delta(t) + \left(2\gamma^{2} + \frac{2\eta_{\btheta}^{2}L_{11}^{2}}{\gamma}\right)\sigma^{2} \\
			& + \frac{4\eta_{\btheta}^{2}L_{11}^{2}}{\gamma}\mE\left[\left\|\sum\limits_{k = 1}^{K_{y}}\hat{p}_{k}\nabla\Phi_{\rho}^{k}(\btheta(t))\right\|^{2}\right]. 
		\end{aligned}
	\end{equation}
\end{lemma}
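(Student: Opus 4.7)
The plan is to decompose the one-step tracking error $\bF_{t+1}^k - \bF^k(\btheta(t+1))$ into a contractive piece, a zero-mean noise piece, and a drift piece, expand the squared norm in expectation, and then combine Young's inequality with the Lipschitz continuity of $\bF^k$ and the same gradient-norm decomposition already used in the proof of Lemma \ref{lem:descent lemma}. Using the recursion $\bF_{t+1}^k = (1-\gamma)\bF_t^k + \gamma\hat{\bF}^k(\btheta(t))$, I would write
\begin{equation*}
\bF_{t+1}^k - \bF^k(\btheta(t+1)) \;=\; (1-\gamma)U_t^k + \gamma V_t^k + W_t^k,
\end{equation*}
with $U_t^k := \bF_t^k - \bF^k(\btheta(t))$, $V_t^k := \hat{\bF}^k(\btheta(t)) - \bF^k(\btheta(t))$, and $W_t^k := \bF^k(\btheta(t)) - \bF^k(\btheta(t+1))$.

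After squaring and taking expectation, the cross term $2\gamma(1-\gamma)\mE\langle U_t^k, V_t^k\rangle$ vanishes because $U_t^k$ is $\cF_t$-measurable while $\mE[V_t^k\mid \cF_t]=0$ by unbiasedness of $\hat{\bF}^k$. The cross term $2\gamma\langle W_t^k, V_t^k\rangle$ does \emph{not} vanish, since $\btheta(t+1)$, and hence $W_t^k$, depends on $V_t^k$ through $\bu_k(t+1)$; I would instead bound it via Young's by $\gamma^2\|V_t^k\|^2 + \|W_t^k\|^2$. I then apply Young's again, $\|(1-\gamma)U_t^k + W_t^k\|^2 \leq (1-\gamma/2)\|U_t^k\|^2 + (C/\gamma)\|W_t^k\|^2$; solving $(1-\gamma)^2(1+\nu)\leq 1-\gamma/2$ for $\nu = \Theta(\gamma)$ is possible precisely under the hypothesis $\gamma\leq 2/3$ of the lemma, and this step is what produces the $O(1/\gamma)$ factor sitting in front of every non-contractive term in the final bound.

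To handle $\|W_t^k\|^2$, I would use that each entry of $\bF^k$ is a difference of two $L_0$-Lipschitz expected losses, so $\bF^k$ itself is Lipschitz in $\btheta$ with constant absorbed into $L_{11}$ (as in Lemma \ref{lem:continuty}); thus $\|W_t^k\|^2 \leq L_{11}^2 \eta_\btheta^2 \|\sum_{k'}\hat p_{k'}\nabla_\btheta \hat\phi^{k'}(\btheta(t),\bu_{k'}(t+1))\|^2$. I would then replay the decomposition of \eqref{eq:theta gap 1}--\eqref{eq:theta gap 2}: add and subtract the true gradient and the true maximiser $\bu_k^*(\btheta(t),\rho)$ to split the squared direction into (i) estimator noise $\leq \sigma^2$, (ii) the true-gradient term $2\|\sum_k \hat p_k \nabla\Phi_\rho^k(\btheta(t))\|^2$, and (iii) the maximiser-discrepancy term, which Lemma \ref{lem:continuty}(ii) bounds by $(K_yL_{12}^2/\rho^2)\sum_k\hat p_k^2\|\bF^k(\btheta(t))-\bF_{t+1}^k\|^2$. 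The identity $\bF_{t+1}^k - \bF^k(\btheta(t)) = -(1-\gamma)U_t^k - \gamma V_t^k$ then converts this last norm-squared into $O(\delta^k(t)) + O(\gamma^2\sigma^2)$, which closes the recursion in $\delta(t)$ itself rather than $\delta(t+1)$.

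Summing over $k$ with weights $\hat p_k$ and using $\sum_k \hat p_k^2 \leq \sum_k \hat p_k = 1$ collects the contributions into the stated coefficients: the $2\gamma^2$ already present from Step 2 plus a fresh $2\eta_\btheta^2 L_{11}^2/\gamma$ on $\sigma^2$; the coefficient $4\eta_\btheta^2 L_{11}^2/\gamma$ on the gradient norm; and the self-feedback coefficient $4\eta_\btheta^2 L_{11}^2 K_y L_{12}^2/(\gamma\rho^2)$ on $\delta(t)$, added to the contraction $1-\gamma/2$. The hardest obstacle is exactly the coupling described above: $\bu_k(t+1) = \mathrm{Softmax}(\bF_{t+1}^k/\rho)$ feeds into the $\btheta$-update, so $W_t^k$ is correlated with the noise $V_t^k$, breaking the usual martingale-style telescoping and forcing the Young's-inequality route with its unavoidable $1/\gamma$ cost; it is also what places the maximiser-discrepancy term naturally against $\bF_{t+1}^k$ rather than $\bF_t^k$ and therefore requires the final algebraic reduction to $\delta(t)$ to avoid a circular bound.
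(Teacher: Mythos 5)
Your proposal is correct and follows essentially the same route as the paper's proof: the same three-term decomposition into a contractive piece, zero-mean noise, and drift, the same Young's-inequality step exploiting $\gamma\leq 2/3$ to obtain the $1-\gamma/2$ contraction at the cost of a $2/\gamma$ amplification of the drift term, the Lipschitz bound $\|W_t^k\|\leq L_{11}\eta_{\btheta}\|\cdot\|$, and the same replay of the gradient decomposition from the descent lemma to close the recursion. The only cosmetic difference is that you measure the maximiser discrepancy against $\bF_{t+1}^{k}$ and then convert back to $\delta(t)$, whereas the paper bounds it directly against $\bF_{t}^{k}$; both are valid and yield the stated coefficients up to harmless higher-order terms.
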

\begin{proof}
	W.o.l.g., we fix the $k$ during our proof. According to $\mE\left[\hat{\bF}^{k}(\btheta(t))\right] = \bF^{k}(\btheta(t))$ and $\mE\left[\left\|\hat{\bF}^{k}(\btheta(t)) - \bF^{k}(\btheta(t))\right\|^{2}\right]\leq \sigma^{2}$ we have 
	\begin{equation}
		\small
		\begin{aligned}
			\mE\left[\left\|\bF_{t + 1}^{k} - \bF^{k}(\btheta(t))\right\|^{2}\right] \leq (1 - \gamma)^{2}\delta^{k}(t) + \gamma^{2}\sigma^{2}\leq (1 - \gamma)\delta^{k}(t) + \gamma^{2}\sigma^{2}, 
		\end{aligned}
	\end{equation}
	where we use the fact $\gamma < 1$. Then due to the update rule of $\bF_{t}^{k}$, Young's inequality and the above inequality,  
	\begin{equation}
		\small
		\begin{aligned}
			& \delta^{k}(t + 1) = \mE\left[\left\|\bF_{t + 1}^{k} - \bF^{k}(\btheta(t)) + \bF^{k}(\btheta(t)) - \bF^{k}(\btheta(t + 1))\right\|^{2}\right] \\
			& \leq \left(1 + \frac{\gamma}{2 - 2\gamma}\right)\mE\left[\left\|\bF_{t + 1}^{k} - \bF^{k}(\btheta(t))\right\|^{2}\right] + \left(1 + \frac{2 - 2\gamma}{\gamma}\right)\mE\left[\left\|\bF^{k}(\btheta(t + 1)) - \bF^{k}(\btheta(t))\right\|^{2}\right] \\
			& \leq \left(1 - \frac{\gamma}{2}\right)\delta^{k}(t) + 2\gamma^{2}\sigma^{2} +  \frac{2}{\gamma}\mE\left[\left\|\bF^{k}(\btheta(t + 1)) - \bF^{k}(\btheta(t))\right\|^{2}\right].
		\end{aligned}
	\end{equation}
	On the other hand, due to the $L_{11}$-continuity of $\bF^{k}(\cdot)$ and \eqref{eq:theta gap 1}, \eqref{eq:theta gap 2} we see 
	\begin{equation}
		\small
		\begin{aligned}
			\mE\left[\left\|\bF^{k}(\btheta(t + 1)) - \bF^{k}(\btheta(t))\right\|^{2}\right] & \leq L_{11}^{2}\mE\left[\left\|\btheta(t + 1) - \btheta(t)\right\|^{2}\right] \\
			& = \eta_{\btheta}^{2}L_{11}^{2}\mE\left[\left\|\sum\limits_{k = 1}^{K_{y}}\hat{p}_{k}\nabla_{\btheta}\hat{\phi}_{\rho}^{k}(\btheta(t), \bu_{k}(t)\right\|^{2}\right] \\
			& \leq \eta_{\btheta}^{2}L_{11}^{2}\sigma^{2} + 2\eta_{\btheta}^{2}L_{11}^{2}\mE\left[\left\|\sum\limits_{k = 1}^{K_{y}}\hat{p}_{k}\nabla\Phi_{\rho}^{k}(\btheta(t))\right\|^{2}\right]\\
			& + \frac{2\eta_{\btheta}^{2}L_{11}^{2}K_{y}L_{12}^{2}}{\rho^{2}}\sum\limits_{k=1}^{K_{y}}\hat{p}_{k}^{2}\delta^{k}(t).
		\end{aligned}
	\end{equation}
	Plugging this into the above inequality and weighted summing over $k$ (by $\hat{p}_{k}$) we get 
	\begin{equation}
		\small
		\begin{aligned}
			\sum\limits_{k=1}^{K_{y}}\hat{p}_{k}\delta^{k}(t + 1) & \leq \left(1 - \frac{\gamma}{2} + \frac{4\eta_{\btheta}^{2}L_{11}^{2}K_{y}L_{12}^{2}}{\gamma\rho^{2}}\right)\sum\limits_{k=1}^{K_{y}}\hat{p}_{k}\delta^{k}(t) + \left(2\gamma^{2} + \frac{2\eta_{\btheta}^{2}L_{11}^{2}}{\gamma}\right)\sigma^{2} \\
			& + \frac{4\eta_{\btheta}^{2}L_{11}^{2}}{\gamma}\mE\left[\left\|\sum\limits_{k = 1}^{K_{y}}\hat{p}_{k}\nabla\Phi_{\rho}^{k}(\btheta(t))\right\|^{2}\right]
		\end{aligned}
	\end{equation}
	which completes the our proof. 
\end{proof}
\par
Now we are ready to state the convergence rate of the nonconvex-concave optimization problem. 
\convergencerate*
\begin{proof}
	First, note that $m = K_{z}^{2}$, for any $\btheta$,
	\begin{equation}\label{eq:phi bound}
		\small
		\Phi_{\rho}^{k}(\btheta) \leq M - \lambda\rho K_{z}\inf_{x}\{x\log{(K_{z}^{2}x)}\} = M + \frac{\lambda\rho}{K_{z}e}. 
	\end{equation}
	Due to the value of $\eta_{\btheta}\leq \frac{\gamma}{2\sqrt{6K_{y}}L_{11}L_{12}}$ we have that 
	\begin{equation}
		\small
		\left(1 - \frac{\gamma}{2} + \frac{4\eta_{\btheta}^{2}L_{11}^{2}K_{y}L_{12}^{2}}{\gamma\rho^{2}}\right) \leq 1 - \frac{\gamma}{3}. 
	\end{equation}
	Thus we have 
	\begin{equation}
		\small
		\begin{aligned}
			\delta(t) & \leq \left(1 - \frac{\gamma}{3}\right)^{t}4M^{2} + \left(2\gamma^{2} + \frac{2\eta_{\btheta}^{2}L_{11}^{2}}{\gamma}\right)\sigma^{2}\sum\limits_{j=0}^{t - 1}\left(1 - \frac{\gamma}{3}\right)^{j} \\
			& + \frac{4\eta_{\btheta}^{2}L_{11}^{2}}{\gamma}\sum\limits_{j=0}^{t - 1}\left(1 - \frac{\gamma}{3}\right)^{t - j - 1}\mE\left[\left\|\sum\limits_{k = 1}^{K_{y}}\hat{p}_{k}\nabla\Phi_{\rho}^{k}(\btheta(j))\right\|^{2}\right]
		\end{aligned}
	\end{equation}
	from Lemma \ref{lem:delta dynamic}. Plugging this into \eqref{eq:descent eq} in Lemma \ref{lem:descent lemma} and summing up it over $t=0, \cdots, T$, we have 
	\begin{equation}\label{eq:descent total}
		\small
		\begin{aligned}
			& \mE\left[\sum\limits_{k=1}^{K_{y}}\hat{p}_{k}\Phi_{\rho}^{k}(\btheta(T))\right] \leq \mE\left[\sum\limits_{k=1}^{K_{y}}\hat{p}_{k}\Phi_{\rho}^{k}(\btheta(0))\right] - \left(\frac{\eta_{\btheta}}{2} - \tilde{L}\eta_{\btheta}^{2}\right)\sum\limits_{t=0}^{T - 1}\mE\left[\left\|\sum\limits_{k=1}^{K_{y}}\hat{p}_{k}\nabla\Phi_{\rho}^{k}(\btheta(t))\right\|^{2}\right] \\
			& + 4M^{2}\left(\frac{\tilde{L}K_{y}L_{12}^{2}\eta_{\btheta}^{2}}{\rho} + \frac{\left(\sigma^{2} + L_{11}\right)L_{12}^{2}\eta_{\btheta}}{2\rho^{2}}\right)\sum\limits_{t=0}^{T - 1}\left(1 - \frac{\gamma}{3}\right)^{t} \\
			& + \left[\frac{T\tilde{L}\eta_{\btheta}^{2}}{2} + \left(\frac{\tilde{L}K_{y}L_{12}^{2}\eta_{\btheta}^{2}}{\rho} + \frac{\left(\sigma^{2} + L_{11}\right)L_{12}^{2}\eta_{\btheta}}{2\rho^{2}}\right)\left(2\gamma^{2} + \frac{2\eta_{\btheta}^{2}L_{11}^{2}}{\gamma}\right)\sum\limits_{t=0}^{T - 1}\sum\limits_{j=0}^{t - 1}\left(1 - \frac{\gamma}{3}\right)^{j}\right]\sigma^{2} \\
			& + \frac{4\eta_{\btheta}^{2}L_{11}^{2}}{\gamma}\left(\frac{\tilde{L}K_{y}L_{12}^{2}\eta_{\btheta}^{2}}{\rho} + \frac{\left(\sigma^{2} + L_{11}\right)L_{12}^{2}\eta_{\btheta}}{2\rho^{2}}\right)\sum\limits_{t=0}^{T- 1}\sum\limits_{j=0}^{T - 1 - t}\left(1 - \frac{\gamma}{3}\right)^{j}\mE\left[\left\|\sum\limits_{k = 1}^{K_{y}}\hat{p}_{k}\nabla\Phi_{\rho}^{k}(\btheta(t))\right\|^{2}\right]. 
		\end{aligned}
	\end{equation}
	It can be verified that for any $t$, $\sum_{j=0}^{t - 1}\left(1 - \gamma / 3\right)^{j} \leq 3/\gamma$, and plugging this into the above inequality we get 
	\begin{equation}
		\small
		\begin{aligned}
			\mE\left[\sum\limits_{k=1}^{K_{y}}\hat{p}_{k}\Phi_{\rho}^{k}(\btheta(T))\right] & \leq \mE\left[\sum\limits_{k=1}^{K_{y}}\hat{p}_{k}\Phi_{\rho}^{k}(\btheta(0))\right] - \frac{\eta_{\btheta}}{3}\sum\limits_{t=0}^{T - 1}\mE\left[\left\|\sum\limits_{k=1}^{K_{y}}\hat{p}_{k}\nabla\Phi_{\rho}^{k}(\btheta(t))\right\|^{2}\right] \\
			& + \frac{12\eta_{\btheta}M^{2}\tilde{M}_{\rho}}{\gamma} + \left[\frac{T\tilde{L}\eta_{\btheta}^{2}}{2} + \eta_{\btheta}\tilde{M}_{\rho}\left(2\gamma^{2} + \frac{2\eta_{\btheta}^{2}L_{11}}{\gamma}\right)\frac{3T}{\gamma}\right]\sigma^{2}, 
		\end{aligned}
	\end{equation}
	where $\tilde{M}_{\rho}=\frac{\tilde{L}K_{y}L_{12}^{2}}{\rho} + \frac{\left(\sigma^{2} + L_{11}\right)L_{12}^{2}}{2\rho^{2}}$ and we use the fact $\eta_{\btheta} \leq \min\left\{\frac{1}{12\tilde{L}}, \frac{\gamma}{12\sqrt{\tilde{M}_{\rho}}L_{11}}\right\}$. Finally from \eqref{eq:phi bound} and $\gamma = T^{-\frac{2}{5}}$, $\eta_{\btheta} \leq T^{-\frac{3}{5}}$ we get that
	\begin{equation}
		\small
		\begin{aligned}
			\frac{1}{T}\sum\limits_{t=0}^{T - 1}\mE & \left[\left\|\sum\limits_{k=1}^{K_{y}}\hat{p}_{k}\nabla\Phi_{\rho}^{k}(\btheta(t))\right\|^{2}\right] \leq \frac{3}{\eta_{\btheta}T}\left(M + \frac{\rho}{K_{z}e}\right) + \frac{36M^{2}\tilde{M}_{\rho}}{\gamma T} +  \frac{3\tilde{L}\eta_{\btheta}\sigma^{2}}{2} \\
			& + 18\gamma\tilde{M}_{\rho}\sigma^{2} + \frac{18\eta_{\btheta}^{2}L_{11}\tilde{M}_{\rho}\sigma^{2}}{\gamma^{2}} \\
			& \leq 3T^{-\frac{2}{5}}\left(M + \frac{\rho}{K_{z}e}\right) + 36M^{2}\tilde{M}_{\rho}T^{-\frac{3}{5}} + \frac{3\tilde{L}T^{-\frac{3}{5}}\sigma^{2}}{2} + 18T^{-\frac{2}{5}}\tilde{M}_{\rho}\sigma^{2} + 18T^{-\frac{2}{5}}L_{11}\tilde{M}_{\rho}\sigma^{2} \\
			& = \cO\left(T^{-\frac{2}{5}}\right). 
		\end{aligned}
	\end{equation}
	This completes our proof to the first conclusion. We highlight that the value of $\eta_{\btheta}$ satisfies
	\begin{equation}
		\small
		\eta_{\btheta} = \min\left\{\frac{1}{12\tilde{L}}, \frac{T^{-\frac{2}{5}}}{12\sqrt{\tilde{M}_{\rho}}L_{11}}, T^{-\frac{3}{5}}, \frac{ T^{-\frac{2}{5}}}{2\sqrt{6K_{y}}L_{11}L_{12}}\right\} = \cO\left(T^{-\frac{3}{5}}\right). 
	\end{equation}
	\par
	To see the last conclusion, similar to \eqref{eq:phi bound} we have that
	\begin{equation}
		\small
		\begin{aligned}
			\Phi_{\rho}^{k}(\btheta) - \Phi^{k}(\btheta) & = \lambda\bF^{k}(\btheta)^{\top}\left(\bu_{k}^{*}(\btheta) - \bu_{k}^{*}(\btheta, \rho)\right) - \lambda\rho\bu_{k}^{*}(\btheta, \rho)^{\top}\log{\left(K_{z}^{2}\bu_{k}^{*}(\btheta, \rho)\right)} \\
			& \leq \lambda\bF^{k}(\btheta)^{\top}\left(\bu_{k}^{*}(\btheta, \rho) - \bu_{k}^{*}(\btheta)\right) + \frac{\lambda\rho}{K_{z}e}
		\end{aligned} 
	\end{equation}
	where $\bu_{k}^{*}(\btheta) = \arg\max\{i: \Phi^{k}(\btheta)(i)\}$. Due to Theorem 1 in \citep{epasto2020optimal}, we have 
	\begin{equation}
		\small
		\bF^{k}(\btheta)^{\top}\left(\bu_{k}^{*}(\btheta) - \bu_{k}^{*}(\btheta, \rho)\right) \leq 2\rho\log{K_{z}}.
	\end{equation}
	Thus we can conclude 
	\begin{equation}
		\small
		\left|\Phi_{\rho}^{k}(\btheta) - \Phi^{k}(\btheta)\right| \leq \lambda\rho\left(\frac{1}{K_{z}e} + 2\log{K_{z}}\right)
	\end{equation}
	which implies our conclusion. 
\end{proof}
\section{More Experiments}\label{app:experiments}
In this section, we conduct more experiments on a synthetic dataset and real-world dataset to further verify the effectiveness of our proposed methods.  
\subsection{\texttt{Toy Example}}
\begin{table}[t!]
	\caption{Test accuracy (\%) of linear model on the OOD test data of \texttt{Toy example}. The OOD test data are drawn from distributions with different $\sigma_{YZ}^{\text{test}}$. The results are the mean of five independent runs.}
	\label{tbl:toy example}
	\centering
	\scalebox{1}{
		{
			\begin{tabular}{c|*{6}{c}}
				\hline
				Method / $\sigma_{YZ}^{\text{test}}$ & 0.00 & -0.20 & -0.40 & -0.60 & -0.80 & -0.99 \\
				\hline
				ERM & 88.5 & 83.0 & 68.0 & 53.5 & 33.5 & 27.2 \\
				IRM  & 96.0 & 90.5 & 91.0 & 90.5 & 91.0 & 90.5 \\
				GroupDRO & 96.5 & 94.5 & 93.0 & 91.0 & 90.5 & 88.0 \\ 
				Correlation	& 92.5 & 87.5 & 79.0 & 69.0 & 43.0 & 39.5 \\
				RCSV & \textbf{99.5} & \textbf{99.0} & \textbf{98.5} & \textbf{97.5} & \textbf{98.0} & \textbf{97.0} \\
				RCSV$_{\rm U}$ & 98.5 & 97.0 & 96.0 & 95.0 & 94.5 & 89.5 \\
				\hline
	\end{tabular}}}
\end{table}
\begin{table}[t!]
	\caption{Cosine-similarity $\langle\btheta_{2}, \bmu_{2}\rangle/(\|\btheta_{2}\|\|\bmu_{2}\|)$ of linear models trained on different methods. Model with smaller cosine-similarity theoretically exhibits better OOD generalization ability.} 
	\label{tbl:toy example cos}
	\centering
	\scalebox{1}{
		{
			\begin{tabular}{c|*{6}{c}}
				\hline
				Methods & ERM & IRM & GroupDRO & Correlation & RCSV & RCSV$_{\rm U}$ \\
				\hline
				$\frac{\langle\btheta_{2}, \bmu_{2}\rangle}{\|\btheta_{2}\|\|\bmu_{2}\|}$ & 0.95 & 0.13 & 0.18 & 0.89 & \textbf{0.03} & 0.05 
				\\
				\hline
	\end{tabular}}}
\end{table}
In this section, we apply the proposed RCSV and RCSV$_{\rm U}$ to a constructed toy example with spurious correlation. 
\paragraph{Data.} The data is constructed as the example in Appendix \ref{app:proofs in sec generalizing}. For two 5-dimensional vectors $\bmu_{1}, \bmu_{2}$, the training data $X$ follows normal distribution $\cN((Y\bmu_{1}^{\top}, Z\bmu_{2}^{\top})^{\top}, \bI_{10})$ where $\bI_{10}$ is a $10\times 10$ identity matrix. The label $Y$ and spurious attributes $Z$ take value from $\{-1, 1\}$ and are all drawn from a standard binomial distribution (i.e., $P_{Y}(Y = 1) = P_{Y}(Y = -1) = 0.5$). As in \eqref{eq:data structual}, the spurious correlation coefficient $\sigma_{YZ}^{\text{train}}$ between $Y$ and $Z$ vary on different distribution. We generate 1000 (resp. 200) training (resp. test) samples. Concretely, the 1 is fixed as 0.99 for the unique training distribution, while there are 6 constructed test distributions respectively with $\sigma_{YZ}^{\text{test}}$ in $\{0.00, -0.20, -0.40, -0.60, -0.80, -0.99\}$. As can be seen, the spurious correlations in the test sets are opposite to the one in the training set. Thus, over-fitting the spurious correlation will mislead the trained model.  
\paragraph{Setup.} We use the linear model $f_{\btheta}(\bx) = \btheta^{\top}\bx$ and its prediction on $Y$ is $\text{sign}(f_{\btheta}(\bx) )$. We compare the proposed methods RCSV and RCSV$_{\rm U}$ with the baseline methods as in the main body of this paper. The domain generalization methods can be applied with observed spurious attributes is because the data can be viewed as from two domains i.e., data drawn under conditions of $Y= Z$ and $Y\neq Z$. The loss function $\cL(\cdot,\cdot)$ is cross entropy. The hyperparameters of baseline methods follow the ones in original papers. Our methods are trained by SGD with the used hyperparemeters deferred in Appendix \ref{app:hyperparemeters}.
\paragraph{Main Results.} In Table \ref{tbl:toy example}, we report the test accuracies of trained models evaluated on OOD data to see if all the aforementioned methods can break the spurious correlation. From the results, we have the following observations.
\par
For all methods, the test accuracies are consistently improved with the decrease of the gap between $\sigma_{YZ}^{\text{train}} - \sigma_{YZ}^{\text{test}}$. This is explained as the decreased $\sigma_{YZ}^{\text{train}} - \sigma_{YZ}^{\text{test}}$ leads to smaller mismatches between training and test distributions, thus improving accuracy. 
\par
The models trained by the proposed two methods and domain generalization methods (IRM and GroupDRO) can break the spurious correlation (generalize on OOD test data), which verifies the effectiveness of our methods. On the other hand, both RCSV and RCSV$_{\rm U}$ beats the domain generalization methods which require the domain label (i.e., the spurious attributes $Z$ in the constructed dataset). Thus, the extra information required by RCSV (resp. RCSV$_{\rm U}$) is equivalent (resp. less) compared with domain generalization methods.     
\par
On the other hand, let the last 5-dimensional parameters of the linear model be $\btheta_{2}$. By $X\sim\cN((Y\bmu_{1}^{\top}, Z\bmu_{2}^{\top})^{\top}, \bI_{10})$, one can verify that the when $\btheta_{2}^{\top}\bmu_{2}\approx 0$, the output of model $\btheta^{\top}X$ does not related to $Z$ with high probability. Then the model can break the spurious correlation. 
\par
To see this, in Table \ref{tbl:toy example cos}, we present the cosine-similarity $\langle \btheta_{2},\bmu_{2}\rangle / (\|\btheta_{2}\|\|\bmu_{2}\|)$ (the cosine-similarity is used to alleviate the interference caused by scales of the two vectors) of the models trained by methods in Table \ref{tbl:toy example}. The results show that the models trained by OOD generalizable methods have smaller cosine-similarities.
 
\begin{figure*}[t!]\centering
	\subfloat[\texttt{C-MNIST-F1}]{
		\includegraphics[width=0.3\textwidth]{./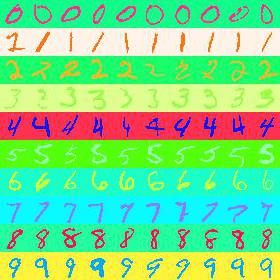}}
	\hspace{0.1in}
	\subfloat[\texttt{C-MNIST-F2}]{
		\includegraphics[width=0.3\textwidth]{./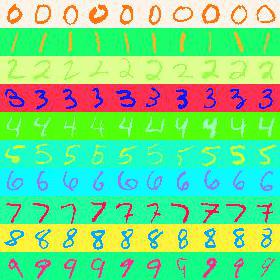}}
	\hspace{0.1in}
	\subfloat[\texttt{C-MNIST-R}]{
		\includegraphics[width=0.3\textwidth]{./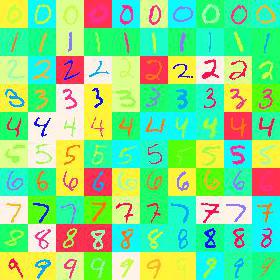}}
	\caption{Images of three \texttt{C-MNIST} datasets with different spurious correlations. The first two have fixed spurious correlation between colors and the label of digits, while the spurious correlation in the last one is random.}
	\label{fig:generated data}
\end{figure*}
\subsection{\texttt{Colored-MNIST}}
\begin{table*}[t!]
	\caption{Test accuracy (\%) of convolution neural networks trained on the different mixtures of \texttt{C-MNIST-F1} and \texttt{C-MNIST-R}. The OOD test data either \texttt{C-MNIST-R} or \texttt{C-MNIST-F2}.}
	\label{tbl:mnist_c}
	\centering
	\scalebox{1}{
		{
			\begin{tabular}{c|*{5}{c}|*{5}{c}}
				\hline
				Test set & \multicolumn{5}{c|}{\texttt{C-MNIST-R}} & \multicolumn{5}{c}{\texttt{C-MNIST-F2}} \\ 
				Method/$\alpha$ & 0.80 & 0.85 & 0.90 & 0.95 & 0.99 & 0.80 & 0.85 & 0.90 & 0.95 & 0.99 \\
				\hline
				ERM & 96.1 & 95.4 & 93.3 & 87.5 & 63.8 & 95.6 & 95.2 & 92.8 & 82.6 & 52.8 \\
				IRM  & 91.2 & 90.5 & 90.4 & 87.9 & 76.6 & 94.3 & 93.5 & 91.1 & 87.2 & 63.1 \\ 
				GroupDRO & 96.6 & 95.9 & 93.7 & 92.1 & 76.8 & 95.5 & 95.3 & 94.3 & 91.5 & 70.2 \\
				Correlation	& 82.6 & 79.6 & 79.5 & 72.6 & 38.5 & 82.6 & 78.4 & 68.2 & 35.6 & 25.2 \\
				RCSV & \textbf{98.0} & \textbf{97.6} & \textbf{96.7} & \textbf{94.3} & \textbf{81.3} & \textbf{96.9} & \textbf{96.3} & \textbf{95.1} & \textbf{90.2} & \textbf{81.3} \\ 
				RCSV$_{\rm U}$ & 97.9 & 97.5 & 96.5 & 94.1 & 77.9 & 96.8 & 95.6 & 93.6 & 86.9 & 62.3 \\
				\hline
	\end{tabular}}}
\end{table*}
In this section, we empirically verify the effectiveness of the proposed methods on a constructed real-world dataset \texttt{Colored-MNIST}. 
\paragraph{Data.} Our dataset is constructed on the $\texttt{MNIST}$ \citep{lecun1998gradient} which consists of 60,000 training data and 10,000 test data. Each data is a grey-scale hand-written digit from ten categories, i.e., 0 to 9. We construct our \texttt{Colored-MNIST} (\texttt{C-MNIST}) by inducing the spurious correlation in the training and test sets. Concretely, for each digit, we assign two colors as spurious attributes respectively for its foreground and background. The spurious correlation can be induced into such dataset by tying the relationship between the label of digits and the two colors. 
\par
We pick 20 specific colors, the first and the last 10 colors are respectively used as 10 categories of two spurious attributes, i.e., the colors of foreground and background of a digit. We consider datasets with two kinds of spurious correlations. The first is fixed spurious correlation, which means data from each \emph{specific} category of digit is assigned two \emph{specific} colors respectively for its foreground and background. The other is random spurious correlation which means that for each data, two \emph{randomly} sampled colors are respectively assigned to its foreground and background regardless of its category. We will construct two \texttt{C-MNIST} with different but fixed spurious correlations (abbrev. as \texttt{C-MNIST-F1} and \texttt{C-MNIST-F2}), and one \texttt{C-MNIST} with random spurious correlation (abbrev. as \texttt{C-MNIST-R}). 
\par
Some of the generated datasets are in Figure \ref{fig:generated data}. As can be seen, the three versions of \texttt{C-MNIST} has different spurious correlations between the label of digits and the colors of foreground and background. Besides that, the spurious correlation in \texttt{C-MNIST-F1} and \texttt{C-MNIST-F2} are fixed while \texttt{C-MNIST-R} has randomized spurious correlation. 

\paragraph{Setup.} We construct various training sets based on the original 60,000 training samples of \texttt{MNIST}. Concretely, we choose $\alpha\in\{0.8, 0.85, 0.90, 0.95, 0.99\}$, then for each $\alpha$, we construct a training set with $\lfloor 60,000 \times\alpha\rfloor$ \footnote{$\lfloor\cdot\rfloor$ is the floor of a number} samples are from \texttt{C-MNIST-F1} while the other $\lfloor 60,000 \times(1 - \alpha)\rfloor$ are constructed as \texttt{C-MNIST-R}. We use two test sets which are respectively the 10,000 test samples constructed as \texttt{C-MNIST-F2} and \texttt{C-MNIST-R}. Obviously, the data from \texttt{C-MNIST-R} in the training set alleviates the misleading signal from the training set brought by \texttt{C-MNIST-F1} due to the spurious correlation between color and digit in it, and the existence of these data meets the Assumption \ref{ass:feature space}. 
\par
Our model is a five-layer convolution neural network in \citep{arpit2019predicting}. The models are trained over the 5 aforementioned datasets with different $\alpha$ by the methods that appeared in the above section. One can verify that the training set can be viewed as a mixture of data from two domains, i.e., \texttt{C-MNIST-F1} and \texttt{C-MNIST-R}. Thus the domain generalization based methods IRM and GroupDRO can be applied here. The used loss function $\cL(\cdot, \cdot)$ is cross-entropy, and detailed hyperparameters are presented in Appendix \ref{app:hyperparemeters}.
\paragraph{Main Results.} To see if the models trained by these methods can break the induced misleading spurious correlation, we report their test accuracies on the \texttt{C-MNIST-R} and \texttt{C-MNIST-F2}. The results are summarized in Table \ref{tbl:mnist_c} with the following observations from it. 
\par
The test accuracies of all these methods increase with the decreased $\alpha$. This is a natural result since smaller $\alpha$ corresponds with more training samples from \texttt{C-MNIST-R} which alleviates the misleading signal from the data with spurious correlation in \texttt{C-MNIST-F1}. Thus models trained over the training set with smaller $\alpha$ exhibit improved generalization ability OOD data with correlation shift.  
\par
Similar to the results in Section \ref{sec:experiments}, our RCSV (resp. RCSV$_{\rm U}$) consistently improve the OOD generalization error, compared with the methods with (resp. without) observed spurious attributes. More surprisingly, RCSV$_{\rm U}$ beats the methods with observed spurious attributes methods IRM and GroupDRO for a large $\alpha$. The observations again verify the efficacy of our proposed methods. 
\par
The model trained by the most commonly used method ERM on datasets with small $\alpha$ also generalizes on OOD data. Thus a relatively large number of data without spurious correlation in the training set also breaks the spurious correlation brought by other data. 
\par
Finally, we observe that the performance of models on \texttt{C-MNIST-R} consistently better than on \texttt{C-MNIST-F2}. This is due to there exist data drawn from \texttt{C-MNIST-R} in the training set, while the data from \texttt{C-MNIST-F2} does not appear in the training set.              

\begin{table}[t!]
	\caption{Test accuracy (\%) of ResNet50 on each group of \texttt{CelebA} and \texttt{Waterbirds}. The experiments are conducted without reweighted sampling trick.}
	\label{tbl:celeba_without_reweight}
	\centering
	\scalebox{1.0}{
		{
			\begin{tabular}{l|c|*{8}{c}}
				\hline
				Dataset & Method / Group & D-F & D-M & B-F & B-M & Avg & Total & Worst & SA\\
				\hline
				\multirow{6}{*}{\texttt{CelebA}} & RCSV & 91.1	& 91.0	& 92.9	& 92.2	& \textbf{91.8}	& 91.3	& \textbf{91.0}
				 &$\surd$ \\
				& IRM  & 90.1	& 92.3	& 90.1	& 86.1	& 89.7	& 91.0	& 86.1
				 & $\surd$ \\
				& GroupDRO & 90.3	& 93.0	& 94.3	& 87.2	& 91.2	& 91.8	& 87.2
				 & $\surd$ \\
				& RCSV$_{\rm U}$ & 94.0	& 98.6	& 91.1	& 60.0	& 85.9	& \textbf{95.1}	& 60.0
				  & $\times$ \\
				& Correlation	& 94.1	& 99.3	& 82.1	& 35.7	& 77.8	& 94.0	& 35.7
				 & $\times$ \\
				& ERM & 95.4	& 99.5	& 82.8	& 41.8	& 79.9	& 94.8	& 41.8
				 & $\times$ \\
				\hline
				Dataset & Method / Group & L-L & L-W & W-L & W-W & Avg & Total & Worst & SA\\
				\hline
				\multirow{6}{*}{\texttt{Waterbirds}} & RCSV & 97.1	& 86.0	& 86.4	& 93.3 &	\textbf{90.7}	& 91.2	& \textbf{86.0}
				 & $\surd$ \\
				& IRM  & 98.6	& 90.6	& 79.4	& 90.8	& 89.9	& \textbf{92.5}	& 79.4
				 & $\surd$ \\
				& GroupDRO & 98.4	& 94.4	& 71.5	& 85.4	& 89.9	& 92.4	& 71.5
				 & $\surd$ \\
				& RCSV$_{\rm U}$ & 99.0	& 81.1	& 77.3	& 93.6	& 87.8	& 89.0	& 77.3
				 & $\times$ \\
				& Correlation	& 99.9	& 88.5	& 59.0 &	90.3	& 84.4	& 89.9	& 59.0
				 & $\times$ \\
				& ERM & 99.6	& 88.5	& 58.1	& 92.5	& 84.7	& 90.6	& 58.1
				 & $\times$ \\
				\hline
	\end{tabular}}}
\end{table}
\begin{table}[t!]
	\caption{Test accuracy (\%) of BERT on each group of \texttt{MultiNLI}. The experiments are conducted without reweighted sampling trick.}
	\label{tbl:mnli_without_reweighting}
	\centering
	\scalebox{0.85}{
		{
			\begin{tabular}{l|c|*{10}{c}}
				\hline
				Dataset & Method / Group & C-WN & C-N & E-WN & E-N & N-WN & N-N & Avg & Total & Worst & SA\\
				\hline
				\multirow{6}{*}{\texttt{MultiNLI}} & RCSV & 79.8	 & 94.4	& 83.8	& 76.5	& 79.2	& 70.6	& 80.7	& 81.6	& \textbf{70.6}
				 &$\surd$ \\
				& IRM  & 79.2	& 94.2	& 83.9	& 74.2	& 79.1	& 67.6	& 79.7	& 81.4	& 67.6
				 & $\surd$ \\
				& GroupDRO & 80.4	& 94	& 82.4	& 76.2	& 80.8	& 70.3	& 80.7	&\textbf{81.8}	& 70.3 & $\surd$ \\
				& RCSV$_{\rm U}$ & 80.1	& 94.2	& 83.6	& 80.1	& 78.2	& 67.4	& 80.6	& 81.3	& 67.4	& $\times$ \\
				& Correlation	& 73.1	& 91.2	& 76.3	& 64.5	& 77.9	& 62.4	& 74.2	& 77.8	& 62.4
				& $\times$ \\
				& ERM & 80.4	& 94.8	& 83.6	& 81.4	& 78.6	& 66.6	& \textbf{80.9}	& 81.5	& 66.6
				& $\times$ \\
				\hline
	\end{tabular}}}
\end{table}
\begin{table}[t!]
	\caption{Test accuracy (\%) of BERT on each group of \texttt{CivilComments}. The experiments are conducted without reweighted sampling trick.}
	\label{tbl:civil_without_reweight}
	\centering
	\scalebox{0.9}{
		{
			\begin{tabular}{l|c|*{8}{c}}
				\hline
				Dateset & Method / Group & N-N & N-I & T-N & T-I & Avg & Total & Worst & SA \\
				\hline
				\multirow{6}{*}{\texttt{CivilComments}} & RCSV & 93.1	& 91.8	& 72.4	& 70.6	& \textbf{82.0}	& 90.2	& \textbf{70.6}
				& $\surd$ \\
				& IRM  & 93.0	& 86.3	& 74.9	& 67.8	& 80.5	& 88.1	& 67.8
				 & $\surd$ \\
				& GroupDRO & 92.9	& 88.3	& 77.9	& 65.1	& 81.1	& 88.8	& 65.1
				& $\surd$ \\
				& RCSV$_{\rm U}$ & 97.4	& 94.3	& 69.0	& 62.9	& 80.9	& \textbf{92.7}	& 62.9
				& $\times$ \\
				& Correlation	& 97.1	& 92.3	& 66.5	& 61.7	& 79.4	& 91.6	& 61.7
				 & $\times$ \\
				& ERM & 96.6	& 92.1	& 69.4	& 56.3	& 78.6	& 91.1	& 56.3
				& $\times$ \\
				\hline
			\end{tabular}
	}}
\end{table}

\section{Ablation Study}\label{app:experiments without reweighting}
We have discussed in Section \ref{sec:experiments} that the reweighed sampling trick improves the OOD generalization. Thus, we explore the effect of such trick in this section. 
\par
We follow the settings in main part of this paper, expected for the reweighted sampling strategy is set as uniformly sampling, thus the methods ERMRS$_{\rm YZ}$ and ERMRS$_{\rm Y}$ become the ERM. The results are summarized in Tables \ref{tbl:celeba_without_reweight}, \ref{tbl:mnli_without_reweighting}, and \ref{tbl:civil_without_reweight}.    
\par
As can be seen from these tables, the OOD generalization performance of model drops for all these methods compared with the results in Section \ref{sec:experiments}, especially for \texttt{CelebA} and \texttt{Waterbirds}, see the column of ``Avg'' and ``Worst'' in each table. We speculate this is because the reweighted sampling strategy enables the data in each group are equivalently appeared during training, this operation itself can break the spurious correlation in training data. Another evidence to support the degenerated OOD generalization is the improved test accuracies on the groups with similar spurious attributes in training data, e.g., the better performances on the groups D-F, D-M of \texttt{CelebA} and L-L, W-W of \texttt{Waterbirds}.
\par
The other observation is that even without this trick, our methods improve the OOD generalization compared with other baseline methods due to their better mean and worst test accuracies.
\par
Finally, the trade-off between the robustness over spurious attributes and in-distribution test accuracies is more clearly observed in these tables. This is from the comparisons between accuracy gap of data with same spurious attributes and total accuracy, which is in-distribution test accuracy for \texttt{CelebA}, \texttt{MultiNLI}, and \texttt{CivilComments}.     
\section{Setup for Experiments}
\subsection{Implementation of Two Proposed Algorithms}\label{app:algorithms}
In this section, we present the detailed algorithm flows of the proposed RCSV and R$\widehat{\rm{CSV}}_{\rm{U}}$ in the main body of this paper. The critical part is their estimators to the $\bF^{k}(\btheta)$ defined in Section \ref{sec:regularize training with cv}. 
\begin{algorithm}[h]
	\caption{Regularize training with $\widehat{\rm{CSV}}$ (RCSV).}
	\label{alg:RCSV1}
	\textbf{Input:} Training samples $\{(\bx_{i}, y_{i})\}_{i=1}^{n}$, number of labels $K_{y}$ and spurious attributes $K_{z}$, batch size $S$, learning rate $\eta_{\btheta}$, training iterations $T$, model $f_{\btheta}(\cdot)$ parameterized by $\btheta$. Initialized $\btheta_{0}, \{\bF^{k}_{0}\}$. Positive regularization constant $\lambda$, surrogate constant $\rho$, and correction constant $\gamma$.
	
	\begin{algorithmic}[1]
		\FOR    {$t=0, \cdots ,T$}
		\STATE  {\textbf{Compute the estimator $\hat{R}_{emp}(f_{\btheta(t)}, P)$;}}
		\STATE  {$\hat{R}_{emp}(f_{\btheta(t)}, P)$ is the empirical risk over a uniformly-drawn batch (size $S$) of data.}		
		\STATE  {\textbf{Compute the estimator $\hat{\bF}^{k}(\btheta(t))$, $k\in[K_{y}]$};}
		\STATE  {Initialized $K_{z}$-dimensional vector $\hat{\cL}^{k} = \textbf{0}$, $k\in[K_{y}]$;}
		\STATE  {Reweighted sample a mini-batch of data $\{(x_{t, i}, y_{t, i}, z_{t, i})\}$ with replacement, the probability of data satisfies $y_{t,i} = k$ and $z_{t, i} = z$ is $1 / (K_{y}K_{z}n_{kz})$.}
		\STATE  {Update $\hat{\cL}^{k}(z)$ as the empirical risk over $\{(x_{t, i}, y_{t, i})\}\bigcap A_{kz}$, $k\in[K_{y}], z\in[K_{z}]$}
		\STATE  {Compute $\hat{\bF}^{k}(\btheta(t)) = K_{y}K_{z}\left(\hat{\cL}^{k}(1) - \hat{\cL}^{k}(1), \cdots, \hat{\cL}^{k}(K_{z}) - \hat{\cL}^{k}(K_{z})\right)$, $k\in [K_{y}]$}
		\STATE  {\textbf{Solve the maximization problem.}}
		\STATE  {$\bF^{k}_{t + 1} = (1 - \gamma)\bF^{k}_{t} + \gamma\hat{\bF}^{k}(\btheta(t))$;}
		\STATE  {$\bu_{k}(t + 1) = \mathrm{Softmax}(\bF^{k}_{t + 1} / \rho)$.}
		\STATE  {\textbf{Update model parameters $\btheta(t)$ via SGD.}}
		\STATE  {$\btheta(t + 1) = \btheta(t) -  \eta_{\btheta}\sum\limits_{k = 1}^{K_{y}}\hat{p}_{k}\nabla_{\btheta}(\hat{R}_{emp}(f_{\btheta(t)}, P) + \lambda\bu_{k}(t + 1)^{\top}\bF_{t + 1}^{k})$.}
		\ENDFOR
	\end{algorithmic}
\end{algorithm}
\begin{algorithm}[h]
	\caption{Regularize training with $\widehat{\rm{CSV}}_{\rm U}$ (RCSV$_{\rm U}$).}
	\label{alg:RCSV2}
	\textbf{Input:} Training samples $\{(\bx_{i}, y_{i})\}_{i=1}^{n}$, number of labels $K_{y}$ and spurious attributes $K_{z}$, batch size $S$, learning rate $\eta_{\btheta}$, training iterations $T$, model $f_{\btheta}(\cdot)$ parameterized by $\btheta$. Initialized $\btheta_{0}, \{\bF^{k}_{0}\}$. Positive regularization constant $\lambda$, surrogate constant $\rho$, and correction constant $\gamma$. 
	
	\begin{algorithmic}[1]
		\FOR    {$t=0, \cdots ,T$}
		\STATE  {\textbf{Compute the estimator $\hat{R}_{emp}(f_{\btheta(t)}, P)$;}}
		\STATE  {$\hat{R}_{emp}(f_{\btheta(t)}, P)$ is the empirical risk over a uniformly-drawn batch (size $S$) of data.}
		\STATE  {\textbf{Compute the estimator $\hat{\bF}^{k}(\btheta(t))$, $k=1,\cdots, K_{y}$;}}
		\STATE  {Initialized $|A_{k}|^{2}$-dimensional vector $\hat{\bF}^{k}(\btheta(t)) = \textbf{0}$, $k=\in[K_{y}]$;}
		\STATE  {Reweighted sample a mini-batch of data $\{(x_{t, i}, y_{t, i})\}$ with replacement, the probability of data satisfies $y_{t,i} = k$ is $1 / (K_{y}n_{k})$.}
		\STATE  {Update $\hat{\bF}^{k}(j)$ with $\cL(f_{\btheta}(x_{t, i}), y_{t, i})$ if $(x_{t, i}, y_{t, i})$ is the $j$-th data in $A_{k}$, $i\in [K_{y}], j \in [K_{z}]$.}
		\STATE  {\textbf{Solve the maximization problem.}}
		\STATE  {$\bF^{k}_{t + 1} = (1 - \gamma)\bF^{k}_{t} + \gamma\hat{\bF}^{k}(\btheta(t))$;}
		\STATE  {$\bu_{k}(t + 1) = \mathrm{Softmax}(\bF^{k}_{t + 1} / \rho)$.}
		\STATE  {\textbf{Update model parameters $\btheta(t)$ via SGD.}}
		\STATE  {$\btheta(t + 1) = \btheta(t) -  \eta_{\btheta}\sum\limits_{k = 1}^{K_{y}}\hat{p}_{k}\nabla_{\btheta}(\hat{R}_{emp}(f_{\btheta(t)}, P) + \lambda\bu_{k}(t + 1)^{\top}\bF_{t + 1}^{k})$.}
		\ENDFOR
	\end{algorithmic}
\end{algorithm}

\subsection{Dataset}\label{app:dataset}
In this section, we give more details on the datasets appeared in the main part of this paper.
\paragraph{\texttt{CelebA}.} This is a celebrity face dataset \citep{liu2015deep} with 162770 training samples and 20362 test samples. For each sample, the hair color \{Dark, Blond\} is class label, while the gender \{Female, Male\} is spurious attributes. For both training and test datasets, each of them can be divided into 4 groups, i.e., ``Dark-Female'' (D-F), ``Dark-Male'' (D-M), ``Blond-Female'' (B-F), ``Blond-Male'' (B-M). The numbers of samples in training and test dataset from the 4 groups are respectively \{71629, 9767\}, \{66874, 7535\}, \{22880, 2880\}, \{1387, 180\}. Our goal is to train a model that correctly recognizes the hair color of celebrities independent of their gender. One can verify that the most difficult group of data to be generalized on is B-M, due to its extremely small proportion in males in the training set. 
\paragraph{\texttt{Waterbirds}.} This is a synthetic real-world dataset in \citep{sagawa2019distributionally} with 4795 training samples and 6993 test samples, which is constructed based on combining photograph of bird from the Caltech-UCSD Birds-200-2011 (\texttt{CUB}) dataset \citep{wah2011caltech} with image backgrounds from the \texttt{Places} \citep{zhou2017places}. For each image, its class label is from \{Waterbird, Landbird\}, and each bird is placed on spurious attributes: background from \{Land background, Water background\}. As in \texttt{CelebA}, the datasets can be categorized into 4 groups, i.e., ``Landbird-Land background'' (L-L), ``Landbird-Water background'' (L-W), ``Waterbird-Water background'' (W-W), ``Waterbird-Land background'' (W-L). The training and test datasets are constructed with the numbers of samples in each group are respectively \{3498, 2255\} (L-L), \{184, 2255\} (L-W), \{56, 642\} (W-W), \{1057, 642\} (W-L). As can be seen, the spurious correlations in the training and test sets are quite different. In the training set, most landbirds are on the land, and most waterbirds are on the water. But in the test set, waterbirds and landbirds are uniformly assigned on the two backgrounds. Thus, we are desired to train a model that breaks the spurious correlation between bird and background. The proportion of 4 groups in the training set informs that the most difficult of them to be generalized on are L-W and W-L.  
\paragraph{\texttt{MultiNLI}.} This is a dataset for natural language inference \citep{williams2018broad} with 206175 training samples and 123712 test samples. The dataset is consists of pair of sentences, and our goal is to recognize that whether the second sentence is entailed by, neutral with, or contradicts to the first sentence. It was explored in \cite{gururangan2018annotation} that there exists spurious correlation in the dataset such as the contradiction can be related to the presence of the negation words \emph{nobody, no, never,} and \emph{nothing}. Thus we set such presence as spurious attribute and the dataset can be categorized into 6 groups, i.e., ``Contradiction-Without Negation'' (C-WN), ``Contradiction-Negation'' (C-N), ``Entailment-Without Negation'' (E-WN), ``Entailment-Negation'' (E-N), ``Neutrality-Without Negation'' (N-WN), ``Neutrality-Negation'' (N-N). Our goal is learning a model that makes prediction independent with the presence of negation. The numbers of samples in training and test dataset from the 6 groups are respectively \{57498, 34597\}, \{11158, 6655\}, \{67376, 40496\}, \{1521, 886\}, \{66630, 39930\}, \{1992, 1146\}.   
\paragraph{\texttt{CivilComents}.} This is a dataset consists of collected online comments \citep{borkan2019nuanced}. The dataset has 269038 training data and 133782 test data. Our goal is to recognize whether the comment is toxic or not. The toxicity can be spurious correlated with the annotation attributes such the presence of 8 certain demographic identities includes male, female, White, Black, LGBTQ, Muslim, Christian, and other religion. Thus we set the identity of any aforementioned words as the spurious attributes, and divided the dataset into 4 groups: ``Nontoxic-Nonidentity'' (N-N), ``Nontoxic-Identity'' (N-I), ``Toxic-Nonidentity'' (T-N), ``Toxic-Identity'' (T-I). The numbers of samples in training and test dataset from the 4 groups are respectively \{148186, 72373\}, \{90337, 46185\}, \{12731, 6063\}, \{17784, 9161\}. As can be seen, there exists a spurious correlation between the toxicity and the identity attribute in the training set due to the number of data in each group. 
\par
For all these datasets, from the number of data in each group, there exists dominated spurious correlation in \texttt{CelebA} and \texttt{Waterbirds}. But this does not happened in \texttt{MultiNLI} and \texttt{CivilComments}, especially for \texttt{MultiNLI} as the strong spurious correlation only exists in the group of ``C-WN'' v.s. ``C-N''. Thus for the \texttt{MultiNLI} and \texttt{CivilComments}, expected for the spurious feature, the model should extract other features to guarantee good performance.    
\subsection{Benchmark Algorithms}\label{app:benchmark algorithms}
Empirical Risk minimization \citep[ERM,][]{vapnik1999nature} pools together the data from all the domains and then minimizes the empirical loss to train the model. 
\par
Empirical Risk minimization with reweighted sampling \citep[ERMRS,][]{idrissi2021simple} is similar to empirical risk minimization, but it reweight the sampling probability of each sample, and the weightes on each data is pre-defined. 
\par
Invariant Risk Minimization \citep[IRM,][]{arjovsky2019invariant} learns a feature representation such that the optimal classifiers on top of the representation is the same across the domains.
\par
Group Distributionally Robust Optimization \citep[GroupDRO,][]{sagawa2019distributionally} minimizes the worst-case loss over different domains.
\par
\citep[Correlation,][]{arpit2019predicting} minimizes the intra-variance of data from the same category to break the spurious correlation. 
\subsection{Training Details}\label{app:hyperparemeters}
As clarified in Section \ref{sec:experiments}, the backbone models for image datasets (\texttt{CelebA}, \texttt{Waterbirds}) and textual datasets (\texttt{MultiNLI}, \texttt{CivilComments}) are respectively ResNet-50 \citep{he2016deep} pre-trained on \texttt{ImageNet} \citep{deng2009imagenet} and pre-trained BERT Base model\citep{devlin2019bert}. 
\par
The loss function $\cL(\cdot, \cdot)$ is cross-entropy for all of these methods. The experiments on image datasets are conducted without learning rate decay while the results on textual datasets are obtained with linearly decayed learning decay via optimizer AdamW \citep{loshchilov2018decoupled}.
\par
The hyperparameters of baseline methods follow the original one in \citep{gulrajani2020search,sagawa2019distributionally,arpit2019predicting,arjovsky2019invariant,idrissi2021simple}. The hyperparameters of the proposed RCSV and RCSV$_{\rm U}$ on \texttt{CelebA}, \texttt{Waterbirds}, \texttt{MultiNLI}, \texttt{CivilComments}, \texttt{Toy example} and \texttt{C-MNIST} respectively summarized in Table \ref{tbl:celeba_hyp}, \ref{tbl:waterbirds}, \ref{tbl:mnli_hyp}, \ref{tbl:civil_hyp}, \ref{tbl:hyper_toy}, and \ref{tbl:hyper_mnist}.
\begin{table*}[htbp]
	\begin{minipage}{0.5\linewidth}
		\caption{Hyperparameters on \texttt{CelebA}.}
		\label{tbl:celeba_hyp}
		\centering
		\begin{tabular}{c c c}
			\hline
			Algorithm & RCSV    & RCSV$_{\rm U}$ \\
			\hline
			Optimizer     &  Adam      & Adam  \\
			Learning Rate &  1e-5      & 1e-5 \\
			Batch Size    &  256       & 256   \\
			Weight Decay  &  1e-4      & 1e-4     \\
			Epoch         &  50        & 50   \\
			$\lambda$     &  5         & 1   \\
			$\gamma$      &  0.9       & 0.9  \\
			$\rho$        &  1e-4      & 1e-4  \\ 
			\hline
		\end{tabular}
	\end{minipage}
	\begin{minipage}{0.5\linewidth}
		\caption{Hyperparameters on \texttt{Waterbirds}.}
		\label{tbl:waterbirds}
		\centering
		\begin{tabular}{c c c}
			\hline
			Algorithm & RCSV    & RCSV$_{\rm U}$ \\
			\hline
			Optimizer     &  Adam      & Adam  \\
			Learning Rate &  1e-5      & 1e-5 \\
			Batch Size    &  64        & 64   \\
			Weight Decay  &  1e-4      & 1e-4     \\
			Epoch         &  300       & 300   \\
			$\lambda$     &  0.1       & 0.1   \\
			$\gamma$      &  0.9       & 0.9  \\
			$\rho$        &  1e-4      & 1e-4  \\ 
			\hline
		\end{tabular}
	\end{minipage}
\end{table*}

\begin{table*}[htbp]
			\begin{minipage}{0.5\linewidth}
		\caption{Hyperparameters on \texttt{MultiNLI}.}
		\label{tbl:mnli_hyp}
		\centering
		\begin{tabular}{c c c}
			\hline
			Algorithm & RCSV    & RCSV$_{\rm U}$ \\
			\hline
			Optimizer     &  AdamW      & AdamW  \\
			Learning Rate &  1e-5      & 1e-5 \\
			Batch Size    &  32        & 32   \\
			Weight Decay  &  0         & 0     \\
			Epoch         &  3         & 3     \\
			Drop Out      &  0.1       & 0.1   \\
			$\lambda$     &  0.1       & 0.1   \\
			$\gamma$      &  0.9       & 0.9  \\
			$\rho$        &  1e-4      & 1e-4  \\ 
			\hline
		\end{tabular}
	\end{minipage}
	\begin{minipage}{0.5\linewidth}
		\caption{Hyperparameters on \texttt{CivilComments}.}
		\label{tbl:civil_hyp}
		\centering
		\begin{tabular}{c c c}
			\hline
			Algorithm & RCSV    & RCSV$_{\rm U}$ \\
			\hline
			Optimizer     &  Adam      & Adam  \\
			Learning Rate &  1e-5      & 1e-5 \\
			Batch Size    &  16        & 16   \\
			Weight Decay  &  0         & 0     \\
			Epoch         &  3         & 3     \\
			Drop Out      &  0.1       & 0.1   \\
			$\lambda$     &  0.1       & 0.1   \\
			$\gamma$      &  0.9       & 0.9  \\
			$\rho$        &  1e-4      & 1e-4  \\ 
			\hline
		\end{tabular}
	\end{minipage}
\end{table*}

\begin{table*}[htbp]
	\begin{minipage}{0.5\linewidth}
		\caption{Hyperparameters on \texttt{Toy example}.}
		\label{tbl:hyper_toy}
		\centering
		\begin{tabular}{c c c}
			\hline
			Algorithm & RCSV    & RCSV$_{\rm U}$ \\
			\hline
			Optimizer     &  SGD       & SGD  \\
			Learning Rate &  0.01      & 0.01 \\
			Momentum      &  0.9       & 0.9  \\
			Batch Size    &  32        & 32   \\
			Weight Decay  &  0         & 0     \\
			Epoch         &  100       & 100   \\
			$\lambda$     &  1.0       & 5   \\
			$\gamma$      &  0.9       & 0.9  \\
			$\rho$        &  0.01      & 0.01  \\ 
			\hline
		\end{tabular}
	\end{minipage}
	\begin{minipage}{0.5\linewidth}
		\caption{Hyperparameters on \texttt{C-MNIST}.}
		\label{tbl:hyper_mnist}
		\centering
		\begin{tabular}{c c c}
			\hline
			Algorithm & RCSV    & RCSV$_{\rm U}$ \\
			\hline
			Optimizer     &  Adam      & Adam  \\
			Learning Rate &  0.001      & 0.001 \\
			Momentum      &  /\         & /\  \\
			Batch Size    &  128        & 128   \\
			Weight Decay  &  1e-4       & 1e-4     \\
			Epoch         &  40         & 40   \\
			$\lambda$     &  1.0        & 0.05   \\
			$\gamma$      &  0.9        & 0.9  \\
			$\rho$        &  0.01       & 0.01  \\ 
			\hline
		\end{tabular}
	\end{minipage}
\end{table*}

	\clearpage
\end{document}